\def\sign{\mathrm{sign}}
\def\Pr{\mathrm{Pr}}
\newcommand{\Argmin}{\mathop{\arg\min}}
\newcommand{\EqRef}[1]{Eq.~(\ref{#1})}
\newcommand{\SecRef}[1]{Section~\ref{#1}}
\newcommand{\ThemRef}[1]{Theorem~\ref{#1}}
\newcommand{\LemmaRef}[1]{Lemma~\ref{#1}}
\newcommand{\AssumeRef}[1]{Assumption~\ref{#1}}
\newcommand{\FigRef}[1]{Figure~\ref{#1}}
\newcommand{\AlgRef}[1]{Algorithm~\ref{#1}}
\newcommand{\TabRef}[1]{Table~\ref{#1}}
\newtheorem{assumption}{Assumption}
\begin{document}

\title{Multi-Stage Multi-Task Feature Learning}

\author{\name $^{\dag}$Pinghua Gong \email gph08@mails.tsinghua.edu.cn \\
       \name $^{\ddag}$Jieping Ye \email jieping.ye@asu.edu \\
       \name $^{\dag}$Changshui Zhang \email zcs@mail.tsinghua.edu.cn \\
       \addr $^{\dag}$State Key Laboratory on Intelligent Technology and Systems\\
       Tsinghua National Laboratory for Information Science and Technology (TNList)\\
       Department of Automation, Tsinghua University, Beijing 100084, China\\
       \addr $^{\ddag}$Computer Science and Engineering, Center for Evolutionary Medicine and Informatics\\
       The Biodesign Institute, Arizona State University, Tempe, AZ 85287, USA}

\editor{}

\maketitle

\begin{abstract}
Multi-task sparse feature learning aims to improve the generalization performance by exploiting
the shared features among tasks.
It has been successfully applied to many applications including computer vision and biomedical informatics.
Most of the existing multi-task sparse feature learning
algorithms are formulated as a convex sparse regularization problem, which is usually suboptimal, due
to its looseness for approximating an $\ell_0$-type regularizer.
In this paper, we propose a non-convex formulation
for multi-task sparse feature learning based on a novel non-convex regularizer.
To solve the non-convex optimization problem, we propose a
Multi-Stage Multi-Task Feature Learning (MSMTFL) algorithm; we also provide intuitive interpretations,
detailed convergence and reproducibility analysis for the proposed algorithm. Moreover, we present a detailed
theoretical analysis showing that MSMTFL achieves a better parameter estimation error bound
than the convex formulation.
Empirical studies on both synthetic and real-world data sets demonstrate the effectiveness of MSMTFL in comparison with
the state of the art multi-task sparse feature learning algorithms.
\end{abstract}

\begin{keywords}
Multi-Task Learning, Multi-Stage, Non-convex, Sparse Learning
\end{keywords}

\section{Introduction}\label{sec:introduction}
Multi-task learning (MTL) \citep{caruana1997multitask}
exploits the relationships among multiple related tasks to
improve the generalization performance. It has been successfully applied to many
applications such as speech classification \citep{parameswaran2010large}, handwritten
character recognition \citep{obozinski2006multi,quadrianto2010multitask} and
medical diagnosis \citep{bi2008improved}.
One common assumption in multi-task learning is that all tasks should share
some common structures including the prior or parameters of Bayesian
models \citep{schwaighofer2005learning,yu2005learning,zhang2006learning}, a similarity metric matrix
\citep{parameswaran2010large}, a classification weight vector \citep{evgeniou2004regularized},
a low rank subspace \citep{chen2010learning,negahban2011estimation} and a common set of shared features
\citep{argyriou2008convex,gong2012robust,kim2009tree,kolar2011union,lounici2009taking,liu2009multi,negahban2008joint,obozinski2006multi,yang2009heterogeneous,zhang2010probabilistic}.

Multi-task feature learning, which aims to learn a common set of shared features,
has received a lot of interests in machine learning recently, due to the
popularity of various sparse learning formulations and their successful applications in many problems.
In this paper, we focus on a specific multi-task feature learning setting, in which we
learn the features specific to each task as well as the common features shared among tasks.
Although many multi-task feature learning
algorithms have been proposed in the past, many of them require
the relevant features to be shared by all tasks. This is too restrictive in real-world
applications \citep{jalali2010dirty}. To overcome this limitation, \citet{jalali2010dirty} proposed
an $\ell_1+\ell_{1,\infty}$ regularized formulation, called dirty model,
to leverage the common features shared among tasks. The dirty model allows
a certain feature to be shared by some tasks but not all tasks. \citet{jalali2010dirty} also presented
a theoretical analysis under the
incoherence condition \citep{donoho2006stable,guillaume2011support} which is more
restrictive than RIP \citep{candes2005decoding,zhang2012multi}.
The $\ell_1+\ell_{1,\infty}$ regularizer is a convex relaxation
for the $\ell_0$-type one, in which a globally optimal solution can
be obtained. However, a convex regularizer is known to too loose to
approximate the $\ell_0$-type one and often achieves
suboptimal performance (either require restrictive conditions
or obtain a suboptimal error bound) \citep{zhang2012general,zhang2010analysis,zhang2012multi}.
To remedy the limitation, a non-convex regularizer can be used instead. However,
the non-convex formulation is usually difficult to solve
and a globally optimal solution can not be obtained in most practical problems.
Moreover, the solution of the non-convex formulation heavily depends on the specific optimization algorithms employed.
Even with the same optimization algorithm adopted, different initializations usually lead to different solutions.
Thus, it is often challenging to analyze the theoretical behavior of a non-convex formulation.

\textbf{Contributions}:
We propose a non-convex formulation, called capped-$\ell_1$,$\ell_1$ regularized model for multi-task feature learning.
The proposed model aims to simultaneously learn the features specific to each task as well as the common features shared among tasks.
We propose a Multi-Stage Multi-Task Feature Learning (MSMTFL) algorithm to solve the non-convex optimization problem.
We also provide intuitive interpretations of the proposed algorithm from
several aspects. In addition, we present a detailed convergence analysis for the proposed algorithm.
To address the reproducibility issue of the non-convex formulation, we show that the solution generated by the MSMTFL algorithm
is unique (i.e., the solution is reproducible) under a mild condition, which facilitates the theoretical analysis
of the MSMTFL algorithm. Although the MSMTFL algorithm may not obtain a globally optimal solution,
we show that this solution achieves good performance. Specifically, we present a detailed theoretical analysis on
the parameter estimation error bound for the MSMTFL algorithm.
Our analysis shows that, under the sparse eigenvalue condition which is \emph{weaker} than the incoherence condition used in \citet{jalali2010dirty},
MSMTFL improves the error bound during the multi-stage iteration, i.e., the error bound at the current iteration improves the one at the last iteration. Empirical studies on both synthetic and real-world data sets demonstrate the effectiveness of the
MSMTFL algorithm in comparison with the state of the art algorithms.

\textbf{Notations}:
Scalars and vectors are denoted by lower case letters and
bold face lower case letters, respectively. Matrices and sets are denoted by capital
letters and calligraphic capital letters, respectively. The $\ell_1$ norm, Euclidean norm,
$\ell_\infty$ norm and Frobenius norm are denoted
by $\|\cdot\|_1,~\|\cdot\|$, $\|\cdot\|_{\infty}$ and $\|\cdot\|_F$, respectively.
$|\cdot|$ denotes the absolute value of a scalar or the number of
elements in a set, depending on the context. We define the $\ell_{p,q}$ norm of
a matrix $X$ as $\|X\|_{p,q}=\left(\sum_{i}\left((\sum_{j}|x_{ij}|^q)^{1/q}\right)^p\right)^{1/p}$.
We define $\mathbb{N}_n$ as $\{1,\cdots,n\}$ and $N(\mu,\sigma^2)$
as the normal distribution with mean $\mu$ and variance $\sigma^2$.
For a $d\times m$ matrix $W$ and sets $\mathcal{I}_i\subseteq\mathbb{N}_d\times\{i\},\mathcal{I}\subseteq\mathbb{N}_d\times\mathbb{N}_d$,
we let $\mathbf{w}_{\mathcal{I}_i}$ be the $d\times 1$ vector with the $j$-th entry being $w_{ji}$, if $(j,i)\in\mathcal{I}_i$, and $0$, otherwise.
We also let $W_{\mathcal{I}}$ be a $d\times m$ matrix with the $(j,i)$-th entry being $w_{ji}$, if $(j,i)\in\mathcal{I}$, and $0$, otherwise.

\textbf{Organization}: In \SecRef{sec:formulation}, we introduce a non-convex formulation and present the corresponding
optimization algorithm. In \SecRef{sec:convandreprod}, we discuss the convergence and reproducibility issues of the MSMTFL algorithm.
In \SecRef{sec:theoreticalanalysis}, we present a detailed theoretical
analysis on the MSMTFL algorithm, in terms of the parameter estimation error bound.
In \SecRef{sec:proof}, we provide a sketch of the proof of the presented theoretical results and
the detailed proof is provided in the Appendix. In \SecRef{sec:experiments}, we report
the experimental results and we conclude the paper in \SecRef{sec:conclusions}.


\section{The Proposed Formulation and the Optimization Algorithm}\label{sec:formulation}
In this section, we first present a non-convex formulation for multi-task feature learning.
Then, we show how to solve the corresponding optimization problem. Finally, we provide intuitive
interpretations and discussions for the proposed algorithm.

\subsection{A Non-convex Formulation}
Assume we are given $m$ learning tasks associated with training data
$\left\{(X_1,\mathbf{y}_1),\cdots,(X_m,\mathbf{y}_m)\right\}$,
where $X_i\in\mathbb{R}^{n_i\times d}$ is the data matrix of the $i$-th task with
each row as a sample; $\mathbf{y}_i\in\mathbb{R}^{n_i}$ is the response of the $i$-th task;
$d$ is the data dimensionality; $n_i$ is the number of samples
for the $i$-th task. We consider learning a weight matrix
$W=[\mathbf{w}_1,\cdots,\mathbf{w}_m]\in\mathbb{R}^{d\times m}$ consisting of the weight vectors for $m$ linear predictive models:
$\mathbf{y}_i\approx\mathbf{f}_i(X_i)=X_i\mathbf{w}_i,~i\in\mathbb{N}_m$.
In this paper, we propose a non-convex multi-task feature learning formulation to learn these $m$ models simultaneously, based on the capped-$\ell_1$,$\ell_1$ regularization. Specifically, we first impose the $\ell_1$ penalty on each row of $W$, obtaining
a column vector. Then, we impose the capped-$\ell_1$ penalty \citep{zhang2010analysis,zhang2012multi} on that vector. Formally, we formulate
our proposed model as follows:
\begin{align}\label{eq:msmtfl}
\min_{W\in\mathbb{R}^{d\times m}}\left\{l(W)+\lambda\sum_{j=1}^d\min\left(\|\mathbf{w}^j\|_1,\theta\right)\right\},
\end{align}
where $l(W)$ is an empirical loss function of $W$; $\lambda~(>0)$ is a
parameter balancing the empirical loss and the regularization;
$\theta~(>0)$ is a thresholding parameter; $\mathbf{w}^j$ is the
$j$-th row of the matrix $W$. In this paper, we focus on the following quadratic loss function:
\begin{align}\label{eq:lossfunction}
l(W)=\sum_{i=1}^m\frac{1}{mn_i}\left\|X_i\mathbf{w}_i-\mathbf{y}_{i}\right\|^2.
\end{align}

Intuitively, due to the capped-$\ell_1,\ell_1$ penalty, the optimal solution of \EqRef{eq:msmtfl}
denoted as $W^\star$ has many zero rows.
For a nonzero row $(\mathbf{w}^\star)^k$, some entries
may be zero, due to the $\ell_1$-norm imposed on each row of $W$. Thus, under the formulation in \EqRef{eq:msmtfl},
some features can be shared by some tasks but not all the tasks. Therefore,
the proposed formulation can leverage the common features shared among tasks.

\subsection{Optimization Algorithm}
The formulation in \EqRef{eq:msmtfl} is non-convex and is difficult to solve.
In this paper, we propose an algorithm called Multi-Stage Multi-Task Feature Learning (MSMTFL) to solve the
optimization problem (see details in \AlgRef{alg:msmtfl}). In this algorithm, a key step is how to
efficiently solve \EqRef{eq:relaxedlasso}. Observing that the objective function in \EqRef{eq:relaxedlasso} can
be decomposed into the sum of a differential loss function and a non-differential regularization term, we employ FISTA \citep{beck2009fast} to solve the
sub-problem. In the following, we present some intuitive interpretations of
the proposed algorithm from several aspects.
\vspace{-0.0cm}
\begin{algorithm}[!ht]\label{alg:msmtfl}
Initialize $\lambda_j^{(0)}=\lambda$\;
   \For{$\ell=1,2,\cdots$}
   {
   Let $\hat{W}^{(\ell)}$ be a solution of the following problem:
   \begin{align}\label{eq:relaxedlasso}
   \min_{W\in\mathbb{R}^{d\times m}}\left\{l(W)+\sum_{j=1}^d\lambda_j^{(\ell-1)}\|\mathbf{w}^j\|_1\right\}.
   \end{align}

   Let $\lambda_j^{(\ell)}=\lambda I(\|(\hat{\mathbf{w}}^{(\ell)})^j\|_1<\theta)~(j=1,\cdots,d)$, where
   $(\hat{\mathbf{w}}^{(\ell)})^j$ is the $j$-th row of $\hat{W}^{(\ell)}$ and
   $I(\cdot)$ denotes the $\{0,1\}$-valued indicator function.
   }
\caption{MSMTFL: Multi-Stage Multi-Task Feature Learning}
\end{algorithm}\vspace{-0.0cm}

\subsubsection{Locally Linear Approximation}
First, we define two auxiliary functions:
\begin{align}
&\mathbf{h}:\mathbb{R}^{d\times m}\mapsto\mathbb{R}^{d}_+,~\mathbf{h}(W) = \left[\|\mathbf{w}^1\|_1,\cdots,\|\mathbf{w}^d\|_1\right]^T,\nonumber\\
&g:\mathbb{R}^{d}_+\mapsto\mathbb{R}_+,~g(\mathbf{u})=\sum_{j=1}^d\min(u_j,\theta).\nonumber
\end{align}
We note that $g(\cdot)$ is a concave function and we say that a vector $\mathbf{s}\in\mathbb{R}^d$ is a sub-gradient of $g$ at $\mathbf{v}\in\mathbb{R}^d_+$, if for all vector $\mathbf{u}\in\mathbb{R}^d_+$, the following inequality holds:
\begin{align}
g(\mathbf{u})\leq g(\mathbf{v})+\langle\mathbf{s}, \mathbf{u}-\mathbf{v}\rangle,\nonumber
\end{align}
where $\langle\cdot\rangle$ denotes the inner product.
Using the functions defined above, \EqRef{eq:msmtfl} can be equivalently rewritten as follows:
\begin{align}\label{eq:convexconcavemsmtfl}
\min_{W\in\mathbb{R}^{d\times m}} \left\{l(W) + \lambda g(\mathbf{h}(W))\right\}.
\end{align}
Based on the definition of the sub-gradient for a concave function given above, we can obtain an upper bound of $g(\mathbf{h}(W))$ using a locally linear approximation at $\mathbf{h}(\hat{W}^{(\ell)})$:
\begin{align}
g(\mathbf{h}(W))\leq g(\mathbf{h}(\hat{W}^{(\ell)})) + \left\langle\mathbf{s}^{(\ell)}, \mathbf{h}(W)-\mathbf{h}(\hat{W}^{(\ell)})\right\rangle,\nonumber
\end{align}
where $\mathbf{s}^{(\ell)}$ is a sub-gradient of $g(\mathbf{u})$ at $\mathbf{u}=\mathbf{h}(\hat{W}^{(\ell)})$.
Furthermore, we can obtain an upper bound of the objective function in \EqRef{eq:convexconcavemsmtfl}, if the solution $\hat{W}^{(\ell)}$ at the $\ell$-th iteration is available:
\begin{align}
\forall W\in\mathbb{R}^{d\times m}:l(W)+\lambda g(\mathbf{h}(W))\leq l(W) + \lambda g(\mathbf{h}(\hat{W}^{(\ell)})) + \lambda\left\langle\mathbf{s}^{(\ell)}, \mathbf{h}(W)-\mathbf{h}(\hat{W}^{(\ell)})\right\rangle.\label{eq:upperbound}
\end{align}
It can be shown that a sub-gradient of $g(\mathbf{u})$ at $\mathbf{u}=\mathbf{h}(\hat{W}^{(\ell)})$ is
\begin{align}
\mathbf{s}^{(\ell)}=\left[I(\|(\hat{\mathbf{w}}^{(\ell)})^1\|_1<\theta),\cdots,I(\|(\hat{\mathbf{w}}^{(\ell)})^d\|_1<\theta)\right]^T,\label{eq:subgradient}
\end{align}
which is used in Step 4 of \AlgRef{alg:msmtfl}.
Since both $\lambda$ and $\mathbf{h}(\hat{W}^{(\ell)})$ are constant with respect to $W$, we have
\begin{align}
\hat{W}^{(\ell+1)}&=\Argmin_W \left\{l(W) + \lambda g(\mathbf{h}(\hat{W}^{(\ell)})) + \lambda\left\langle\mathbf{s}^{(\ell)}, \mathbf{h}(W)-\mathbf{h}(\hat{W}^{(\ell)})\right\rangle\right\}\nonumber\\
&=\Argmin_W \left\{l(W) + \lambda(\mathbf{s}^{(\ell)})^T \mathbf{h}(W)\right\},\nonumber
\end{align}
which, as shown in Step 3 of \AlgRef{alg:msmtfl}, obtains
the next iterative solution by minimizing the upper bound of the objective function in \EqRef{eq:convexconcavemsmtfl}. Thus, in the viewpoint of
the locally linear approximation, we can understand \AlgRef{alg:msmtfl} as follows: The original formulation in \EqRef{eq:convexconcavemsmtfl} is non-convex and is difficult to solve; the proposed algorithm minimizes an upper bound in each step, which is convex and can be solved efficiently.
It is closely related to the Concave Convex Procedure (CCCP) \citep{yuille2003concave}.
In addition, we can easily verify that the objective function value decreases monotonically as follows:
\begin{align}
l(\hat{W}^{(\ell+1)})+\lambda g(\mathbf{h}(\hat{W}^{(\ell+1)}))
&\leq l(\hat{W}^{(\ell+1)})+ \lambda g(\mathbf{h}(\hat{W}^{(\ell)})) + \lambda\left\langle\mathbf{s}^{(\ell)}, \mathbf{h}(\hat{W}^{(\ell+1)})-\mathbf{h}(\hat{W}^{(\ell)})\right\rangle\nonumber\\
&\leq l(\hat{W}^{(\ell)})+ \lambda g(\mathbf{h}(\hat{W}^{(\ell)})) + \lambda\left\langle\mathbf{s}^{(\ell)}, \mathbf{h}(\hat{W}^{(\ell)})-\mathbf{h}(\hat{W}^{(\ell)})\right\rangle\nonumber\\
&=l(\hat{W}^{(\ell)})+ \lambda g(\mathbf{h}(\hat{W}^{(\ell)})),\nonumber
\end{align}
where the first inequality is due to \EqRef{eq:upperbound} and the second inequality follows from the fact that $\hat{W}^{(\ell+1)}$
is a minimizer of the right hand side of \EqRef{eq:upperbound}.

An important issue we should mention is that a monotonic decrease of the objective function value does not guarantee
the convergence of the algorithm, even if the objective function
is strictly convex and continuously differentiable (see an example in the book \citep[Fig 1.2.6]{bertsekas1999nonlinear}).
In \SecRef{sec:convergence}, we will formally discuss the convergence issue.

\subsubsection{Block Coordinate Descent}
Recall that $g(\mathbf{u})$ is a concave function. We can define its conjugate function as \citep{rockafellar1970convex}:
\begin{align}
g^\star(\mathbf{v})=\inf_{\mathbf{u}}\{\mathbf{v}^T\mathbf{u}-g(\mathbf{u})\}.\nonumber
\end{align}
Since $g(\mathbf{u})$ is also a closed function (i.e., the epigraph of $g(\mathbf{u})$ is convex),
the conjugate function of $g^\star(\mathbf{v})$ is the original function $g(\mathbf{u})$ \citep[Chap. 5.4]{bertsekas1999nonlinear}, that is:
\begin{align}
g(\mathbf{u})=\inf_{\mathbf{v}}\{\mathbf{u}^T\mathbf{v}-g^\star(\mathbf{v})\}.\label{eq:conjofconj}
\end{align}
Substituting \EqRef{eq:conjofconj} with $\mathbf{u}=\mathbf{h}(W)$ into \EqRef{eq:convexconcavemsmtfl}, we can reformulate \EqRef{eq:convexconcavemsmtfl} as:
\begin{align}
\min_{W,\mathbf{v}}\left\{f(W,\mathbf{v})=l(W) + \lambda \mathbf{v}^T\mathbf{h}(W)-\lambda g^\star(\mathbf{v})\right\}\label{eq:bcdformulation}
\end{align}
A straightforward algorithm for optimizing \EqRef{eq:bcdformulation} is the block coordinate descent \citep{grippo2000convergence,tseng2001convergence} summarized below:
\begin{itemize}
\item
Fix $W=\hat{W}^{(\ell)}$:
\begin{align}
\hat{\mathbf{v}}^{(\ell)}&=\Argmin_{\mathbf{v}}\left\{l(\hat{W}^{(\ell)})+\lambda\mathbf{v}^T\mathbf{h}(\hat{W}^{(\ell)})-\lambda g^\star(\mathbf{v})\right\}\nonumber\\
&=\Argmin_{\mathbf{v}}\left\{\mathbf{v}^T\mathbf{h}(\hat{W}^{(\ell)})-g^\star(\mathbf{v})\right\}.\label{eq:bcdvopt}
\end{align}
Based on \EqRef{eq:conjofconj} and the Danskin's Theorem \citep[Proposition B.25]{bertsekas1999nonlinear}, one solution of \EqRef{eq:bcdvopt} is given by a sub-gradient of
$g(\mathbf{u})$ at $\mathbf{u}=\mathbf{h}(\hat{W}^{(\ell)})$. That is, we can choose $\hat{\mathbf{v}}^{(\ell)}=\mathbf{s}^{(\ell)}$ given in \EqRef{eq:subgradient}.
Apparently, \EqRef{eq:bcdvopt} is equivalent to Step 4 in \AlgRef{alg:msmtfl}.
\item
Fix $\mathbf{v}=\hat{\mathbf{v}}^{(\ell)}=\left[I(\|(\hat{\mathbf{w}}^{(\ell)})^1\|_1<\theta),\cdots,I(\|(\hat{\mathbf{w}}^{(\ell)})^d\|_1<\theta)\right]^T$:
\begin{align}
\hat{W}^{(\ell+1)}&=\Argmin_{W}\left\{l(W)+\lambda(\hat{\mathbf{v}}^{(\ell)})^T\mathbf{h}(W)-\lambda g^\star(\hat{\mathbf{v}}^{(\ell)})\right\}\nonumber\\
&=\Argmin_{W}\left\{l(W) + \lambda (\hat{\mathbf{v}}^{(\ell)})^T\mathbf{h}(W)\right\},\label{eq:bcdWopt}
\end{align}
which corresponds to Step 3 of \AlgRef{alg:msmtfl}.
\end{itemize}
The block coordinate descent procedure is intuitive, however, it is non-trivial to analyze its convergence behavior. We will present the convergence analysis in \SecRef{sec:convergence}.

\subsubsection{Discussions}
If we terminate the algorithm
with $\ell=1$, the MSMTFL algorithm is equivalent to the $\ell_1$ regularized multi-task feature learning algorithm (Lasso).
Thus, the solution obtained by MSMTFL can be considered as a multi-stage refinement of that of Lasso. Basically, the MSMTFL algorithm
solves a sequence of weighted Lasso problems, where the weights $\lambda_j$'s are set as the product of the parameter $\lambda$ in \EqRef{eq:msmtfl}
and a $\{0,1\}$-valued indicator function.
Specifically, a penalty is imposed in the current stage if the $\ell_1$-norm of some row of $W$ in the last stage is smaller than the threshold $\theta$; otherwise, no penalty is imposed.
In other words, MSMTFL in the current stage tends to shrink the small rows of $W$ and keep the large rows of $W$ in the last stage. However, Lasso (corresponds to $\ell=1$) penalizes all rows
of $W$ in the same way. It may incorrectly keep the irrelevant rows (which should have been zero rows) or shrink the relevant rows
(which should have been large rows) to be zero vectors. MSMTFL overcomes this limitation by adaptively penalizing the rows of $W$ according
to the solution generated in the last stage.
One important question is whether the MSMTFL algorithm can improve the performance during the multi-stage iteration. In \SecRef{sec:theoreticalanalysis},
we will theoretically show that the MSMTFL algorithm indeed achieves the stagewise improvement in terms of the parameter estimation error bound. That is,
the error bound in the current stage improves the one in the last stage. Empirical studies in \SecRef{sec:experiments} also validate the presented theoretical analysis.

\section{Convergence and Reproducibility Analysis}\label{sec:convandreprod}
In this section, we first present the convergence analysis. Then, we discuss the reproducibility issue for the MSMTFL algorithm.

\subsection{Convergence Analysis}\label{sec:convergence}
The main convergence result is summarized in the following theorem, which is based on the block coordinate descent interpretation.
\begin{theorem}\label{theorem:convergence}
Let $(W^{\star},\mathbf{v}^{\star})$ be a limit point of the sequence $\{\hat{W}^{(\ell)},\hat{\mathbf{v}}^{(\ell)}\}$ generated by the block coordinate descent algorithm. Then $W^{\star}$ is a critical point of \EqRef{eq:msmtfl}.
\end{theorem}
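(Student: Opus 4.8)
The plan is to work with the equivalent reformulation \EqRef{eq:bcdformulation} and to show that any limit point $(W^\star,\mathbf{v}^\star)$ is a block-coordinatewise minimizer of $f(W,\mathbf{v})$, and then to translate this into the criticality condition for the original objective in \EqRef{eq:msmtfl}. First I would record the ingredients already available: $f$ is continuous in $(W,\mathbf{v})$; the objective $l(W)+\lambda g(\mathbf{h}(W))$ is bounded below (since $l(W)\ge 0$ and $0\le g(\mathbf{h}(W))\le d\theta$); and, as already established, the sequence of objective values is nonincreasing. Hence $f(\hat{W}^{(\ell)},\hat{\mathbf{v}}^{(\ell)})$ converges to some finite value $f^\star$, and in fact every intermediate value produced by the two block updates shares this limit.

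Next I would carry out the limit-point argument. Pick a subsequence with $(\hat{W}^{(\ell_k)},\hat{\mathbf{v}}^{(\ell_k)})\to(W^\star,\mathbf{v}^\star)$; by continuity $f(W^\star,\mathbf{v}^\star)=f^\star$. Because each $\hat{\mathbf{v}}^{(\ell)}$ lies in the finite set $\{0,1\}^d$, I can pass to a further subsequence along which $\hat{\mathbf{v}}^{(\ell_k)}\equiv\mathbf{v}^\star$ is constant, which sidesteps the discontinuity of the indicator map defining the $\mathbf{v}$-update. Using the optimality of $\hat{\mathbf{v}}^{(\ell_k)}$ in \EqRef{eq:bcdvopt} together with continuity, $\mathbf{v}^\star$ minimizes $f(W^\star,\cdot)$. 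For the $W$-block I would look at the next iterates $\hat{W}^{(\ell_k+1)}$; extracting a convergent sub-subsequence $\hat{W}^{(\ell_k+1)}\to\tilde{W}$ and passing to the limit in the optimality in \EqRef{eq:bcdWopt} shows $\tilde{W}\in\Argmin_W f(W,\mathbf{v}^\star)$ with value $f(\tilde{W},\mathbf{v}^\star)=f^\star$. Since $f(W^\star,\mathbf{v}^\star)=f^\star$ as well, $W^\star$ is itself a minimizer of $f(\cdot,\mathbf{v}^\star)$. Thus $(W^\star,\mathbf{v}^\star)$ satisfies both block-minimum conditions.

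Finally I would translate these two conditions back to \EqRef{eq:msmtfl}. From $\mathbf{v}^\star\in\Argmin_{\mathbf{v}} f(W^\star,\mathbf{v})$ together with the conjugacy relation \EqRef{eq:conjofconj} and Danskin's theorem, $\mathbf{v}^\star$ is a (concave) subgradient of $g$ at $\mathbf{u}=\mathbf{h}(W^\star)$. From $W^\star\in\Argmin_W\{l(W)+\lambda(\mathbf{v}^\star)^T\mathbf{h}(W)\}$, which is a convex problem, the first-order optimality condition gives $0\in\nabla l(W^\star)+\lambda\,\partial_W\big[(\mathbf{v}^\star)^T\mathbf{h}(W)\big]\big|_{W^\star}$. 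Combining the two inclusions through the subdifferential calculus for the concave composition $g\circ\mathbf{h}$ yields $0\in\nabla l(W^\star)+\lambda\,\partial_W\big[g(\mathbf{h}(W))\big]\big|_{W^\star}$, which is precisely the criticality condition for \EqRef{eq:msmtfl}.

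The step I expect to be the main obstacle is the $W$-block at the limit: the weighted Lasso subproblem \EqRef{eq:bcdWopt} need not have a unique minimizer (in the high-dimensional regime $l$ is not strongly convex), so one cannot simply assert $\hat{W}^{(\ell_k+1)}\to W^\star$. The sandwich through the auxiliary point $\tilde{W}$ and the equality of the limiting objective values is what circumvents this, and it relies on the boundedness of $\{\hat{W}^{(\ell)}\}$ to extract the extra convergent subsequence, an assumption implicit in positing that a limit point exists. A secondary technical point requiring care is the discontinuity of the $\{0,1\}$-valued indicator in the $\mathbf{v}$-update, which is handled by restricting to a subsequence on which $\hat{\mathbf{v}}^{(\ell_k)}$ is constant.
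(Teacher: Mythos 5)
Your proposal follows essentially the same route as the paper's proof: the block coordinate descent reformulation \EqRef{eq:bcdformulation}, monotonicity plus a lower bound on $f$ (via \EqRef{eq:conjofconj}) to get convergence of the objective values, a limit-point argument giving block optimality of $(W^{\star},\mathbf{v}^{\star})$, and the translation into criticality of \EqRef{eq:msmtfl} by combining the convex first-order condition of the $W$-block with the fact that $\mathbf{v}^{\star}$ is a concave subgradient of $g$ at $\mathbf{h}(W^{\star})$. Your two local variations are fine in substance: using finiteness of $\{0,1\}^d$ to make $\hat{\mathbf{v}}^{(\ell_k)}$ eventually constant, and invoking conjugate duality/Danskin to get $\mathbf{v}^{\star}\in\partial g(\mathbf{u})|_{\mathbf{u}=\mathbf{h}(W^{\star})}$, where the paper instead passes to the limit in the subgradient inequalities $g(\mathbf{x})\leq g(\mathbf{h}(\hat{W}^{(\ell)}))+\langle\hat{\mathbf{v}}^{(\ell)},\mathbf{x}-\mathbf{h}(\hat{W}^{(\ell)})\rangle$.

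There is, however, one step whose justification as written is wrong: to handle the $W$-block you extract a convergent sub-subsequence of $\{\hat{W}^{(\ell_k+1)}\}$, and you claim the needed boundedness is ``implicit in positing that a limit point exists.'' It is not: a sequence can have a limit point while being unbounded (e.g., $0,1,0,2,0,3,\dots$), and the theorem's hypothesis is only that a limit point exists; boundedness of the whole sequence is proved separately in the paper's remark, under an extra condition on the $X_i$. Fortunately the detour through $\tilde{W}$ is unnecessary, and the repair is already contained in your own sandwich observation: since the intermediate values $f(\hat{W}^{(\ell_k+1)},\hat{\mathbf{v}}^{(\ell_k)})$ converge to $f^{\star}$, you can pass to the limit directly in the inequality $f(\hat{W}^{(\ell_k+1)},\hat{\mathbf{v}}^{(\ell_k)})\leq f(W,\hat{\mathbf{v}}^{(\ell_k)})$, whose right-hand side converges to $f(W,\mathbf{v}^{\star})$ by continuity; this yields $f(W^{\star},\mathbf{v}^{\star})=f^{\star}\leq f(W,\mathbf{v}^{\star})$ for all $W$ without requiring the iterates $\hat{W}^{(\ell_k+1)}$ themselves to converge. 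This direct limit in \EqRef{eq:decreasing} is exactly how the paper handles the $W$-block, and with that substitution your argument is correct.
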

\begin{proof}
Based on \EqRef{eq:bcdvopt} and \EqRef{eq:bcdWopt}, we have
\begin{align}
f(\hat{W}^{(\ell)},\hat{\mathbf{v}}^{(\ell)})&\leq f(\hat{W}^{(\ell)},\mathbf{v}),~\forall\mathbf{v}\in\mathbb{R}^d,\nonumber\\
f(\hat{W}^{(\ell+1)},\hat{\mathbf{v}}^{(\ell)})&\leq f(W,\hat{\mathbf{v}}^{(\ell)}),~\forall W\in\mathbb{R}^{d\times m}.\label{eq:decreasing}
\end{align}
It follows that
\begin{align}
f(\hat{W}^{(\ell+1)},\hat{\mathbf{v}}^{(\ell+1)})\leq f(\hat{W}^{(\ell+1)},\hat{\mathbf{v}}^{(\ell)})\leq f(\hat{W}^{(\ell)},\hat{\mathbf{v}}^{(\ell)}),\nonumber
\end{align}
which indicates that the sequence $\{f(\hat{W}^{(\ell)},\hat{\mathbf{v}}^{(\ell)})\}$ is monotonically decreasing. Since $(W^{\star},\mathbf{v}^{\star})$ is a limit point of $\{\hat{W}^{(\ell)},\hat{\mathbf{v}}^{(\ell)}\}$, there exists a subsequence $\mathcal{K}$ such that
\begin{align}
\lim_{\ell\in\mathcal{K}\rightarrow\infty}(\hat{W}^{(\ell)},\hat{\mathbf{v}}^{(\ell)})=(W^{\star},\mathbf{v}^{\star}).\nonumber
\end{align}
We observe that
\begin{align}
f(W,\mathbf{v})&=l(W) + \lambda \mathbf{v}^T\mathbf{h}(W)-\lambda g^\star(\mathbf{v})\nonumber\\
&\geq l(W) + \lambda g(\mathbf{h}(W))\geq 0,\nonumber
\end{align}
where the first inequality above is due to \EqRef{eq:conjofconj}. Thus,
$\{f(\hat{W}^{(\ell)},\hat{\mathbf{v}}^{(\ell)})\}_{\ell\in\mathcal{K}}$ is bounded below. Together with the fact that
$\{f(\hat{W}^{(\ell)},\hat{\mathbf{v}}^{(\ell)})\}$ is decreasing,
$\lim_{\ell\rightarrow\infty}f(\hat{W}^{(\ell)},\hat{\mathbf{v}}^{(\ell)})> -\infty$ exists. Since $f(W,\mathbf{v})$ is continuous, we have
\begin{align}
\lim_{\ell\rightarrow\infty}f(\hat{W}^{(\ell)},\hat{\mathbf{v}}^{(\ell)})=\lim_{\ell\in\mathcal{K}\rightarrow\infty}f(\hat{W}^{(\ell)},\hat{\mathbf{v}}^{(\ell)})=f(W^{\star},\mathbf{v}^{\star}).\nonumber
\end{align}
Taking limits on both sides of \EqRef{eq:decreasing} with $\ell\in\mathcal{K}\rightarrow\infty$, we have
\begin{align}
f(W^{\star},\mathbf{v}^{\star})\leq f(W,\mathbf{v}^{\star}),~\forall W\in\mathbb{R}^{d\times m},\nonumber
\end{align}
which implies
\begin{align}
W^{\star} &\in \Argmin_{W}f(W,\mathbf{v}^{\star})\nonumber\\
&=\Argmin_{W}\left\{l(W) + \lambda (\mathbf{v}^{\star})^T\mathbf{h}(W)-\lambda g^\star(\mathbf{v}^{\star})\right\}\nonumber\\
&=\Argmin_{W}\left\{l(W) + \lambda (\mathbf{v}^{\star})^T\mathbf{h}(W)\right\}.\label{eq:mincondition}
\end{align}
Therefore, the zero matrix $O$ must be a sub-gradient of the objective function in \EqRef{eq:mincondition} at $W=W^\star$ :
\begin{align}
O\in\partial l(W^\star)+\lambda\partial\left((\mathbf{v}^{\star})^T\mathbf{h}(W^\star)\right)=\partial l(W^\star)+\lambda\sum_{j=1}^dv^\star_j\partial\left(\|(\mathbf{w}^\star)^j\|_1\right),\label{eq:optcondition}
\end{align}
where $\partial l(W^\star)$ denotes the sub-differential (which is a set composed of all sub-gradients) of $l(W)$ at $W=W^\star$.
We observe that
\begin{align}
\hat{\mathbf{v}}^{(\ell)}\in\partial g(\mathbf{u})|_{\mathbf{u}=\mathbf{h}(\hat{W}^{(\ell)})},\nonumber
\end{align}
which implies that $\forall \mathbf{x}\in\mathbb{R}^d_+$:
\begin{align}
g(\mathbf{x})\leq g(\mathbf{h}(\hat{W}^{(\ell)}))+\left\langle\hat{\mathbf{v}}^{(\ell)},\mathbf{x}-\mathbf{h}(\hat{W}^{(\ell)})\right\rangle.\nonumber
\end{align}
Taking limits on both sides of the above inequality with $\ell\in\mathcal{K}\rightarrow\infty$, we have:
\begin{align}
g(\mathbf{x})\leq g(\mathbf{h}(W^{\star}))+\left\langle\mathbf{v}^{\star},\mathbf{x}-\mathbf{h}(W^{\star})\right\rangle,\nonumber
\end{align}
which implies that $\mathbf{v}^{\star}$ is a sub-gradient of $g(\mathbf{u})$ at $\mathbf{u}=\mathbf{h}(W^\star)$, that is:
\begin{align}
\mathbf{v}^\star\in\partial g(\mathbf{u})|_{\mathbf{u}=\mathbf{h}(W^\star)}.\label{eq:limitptsubgrad}
\end{align}
Substituting \EqRef{eq:limitptsubgrad} into \EqRef{eq:optcondition}, we obtain:
\begin{align}
O\in\partial l(W^\star)+\lambda\sum_{j=1}^d\partial\min(\|(\mathbf{w}^\star)^j\|_1,\theta).\nonumber
\end{align}
Therefore, $W^{\star}$ is a critical point of \EqRef{eq:msmtfl}. This completes the proof of \ThemRef{theorem:convergence}.
\end{proof}
\begin{remark}
Note that the above theorem holds by assuming that there exists a limit point. Next, we need to
prove that the sequence $\{\hat{W}^{(\ell)},\hat{\mathbf{v}}^{(\ell)}\}$ has a limit point. For any bounded initial point
$\{\hat{W}^{(0)},\hat{\mathbf{v}}^{(0)}\}$, based on \EqRef{eq:conjofconj}, \EqRef{eq:bcdformulation} and the monotonicity of $f(\hat{W}^{(\ell)},\hat{\mathbf{v}}^{(\ell)})$, we have:
\begin{align}
l(\hat{W}^{(\ell)})+\lambda g(\mathbf{h}(\hat{W}^{(\ell)}))\leq f(\hat{W}^{(\ell)},\hat{\mathbf{v}}^{(\ell)})\leq f(\hat{W}^{(0)},\hat{\mathbf{v}}^{(0)})<+\infty,\forall \ell\geq 1.\label{eq:boundedsequence}
\end{align}
Assume that the sequence $\{\hat{W}^{(\ell)},\hat{\mathbf{v}}^{(\ell)}\}$ is unbounded, that is, there exist some $i,j$ such that $|\hat{W}_{ij}^{(\ell)}|\rightarrow+\infty$.
It implies that $l(\hat{W}^{(\ell)})+\lambda g(\mathbf{h}(\hat{W}^{(\ell)}))\rightarrow+\infty$ (We exclude the case that
some columns of $X_i$ are zero vectors. Otherwise, we can simply remove the corresponding zero columns.) and hence $f(\hat{W}^{(\ell)},\hat{\mathbf{v}}^{(\ell)})\rightarrow+\infty$.
This leads to a contradiction with \EqRef{eq:boundedsequence}. Thus, the sequence $\{\hat{W}^{(\ell)},\hat{\mathbf{v}}^{(\ell)}\}$ is bounded and there exists at least one limit point $(W^{\star},\mathbf{v}^{\star})$, since any bounded sequence has limit points.
\end{remark}

Due to the equivalence between \AlgRef{alg:msmtfl} and the block coordinate descent algorithm above,
\ThemRef{theorem:convergence} and its remark indicate that the sequence $\{\hat{W}^{(\ell)}\}$ generated by \AlgRef{alg:msmtfl}
has at least one limit point that is also a critical point of \EqRef{eq:msmtfl}. The remaining issue is to analyze the performance of the critical point.
In the sequel, we will conduct analysis in two aspects: reproducibility and the parameter estimation performance.

\subsection{Reproducibility of The Algorithm}\label{sec:reproducibility}
In general, it is difficult to analyze the performance of a non-convex formulation, as different
solutions can be obtained due to different initializations. One natural question is whether the solution
generated by \AlgRef{alg:msmtfl} (based on the initialization of $\lambda_j^{(0)}=\lambda$ in Step 1) is reproducible. In other words, is the solution of \AlgRef{alg:msmtfl} unique?
If we can guarantee that, for any $\ell\geq 1$, the solution $\hat{W}^{(\ell)}$ of \EqRef{eq:relaxedlasso}
is unique, then the solution generated by \AlgRef{alg:msmtfl} is unique. That is,
the solution is reproducible. The main result is summarized in the following theorem:
\begin{theorem}\label{theorem:unique}
If $X_i\in\mathbb{R}^{n_i\times d}~(i\in\mathbb{N}_m)$ has entries drawn from a continuous probability distribution on $R^{n_id}$,
then, for any $\ell\geq 1$, the optimization problem in \EqRef{eq:relaxedlasso} has a unique solution with probability one.
\end{theorem}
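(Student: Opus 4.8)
The plan is to reduce the matrix problem to $m$ independent vector problems and then establish uniqueness of each by combining a deterministic convex-analysis argument with a measure-zero (``general position'') argument on the design. First I would note that both terms in \EqRef{eq:relaxedlasso} are separable across tasks: the loss \EqRef{eq:lossfunction} couples only the $i$-th column $\mathbf{w}_i$ with $(X_i,\mathbf{y}_i)$, while $\sum_{j}\lambda_j^{(\ell-1)}\|\mathbf{w}^j\|_1=\sum_{i=1}^m\sum_{j=1}^d\lambda_j^{(\ell-1)}|w_{ji}|$. Hence \EqRef{eq:relaxedlasso} splits into the $m$ weighted-Lasso problems $\min_{\mathbf{w}_i}\{\frac{1}{mn_i}\|X_i\mathbf{w}_i-\mathbf{y}_i\|^2+\sum_{j}\lambda_j^{(\ell-1)}|w_{ji}|\}$, and $\hat{W}^{(\ell)}$ is unique if and only if each column is. Recalling that the indicator construction in \AlgRef{alg:msmtfl} forces $\lambda_j^{(\ell-1)}\in\{0,\lambda\}$, it therefore suffices to show that a single such partially-penalized Lasso with a continuously distributed design has a unique minimizer with probability one.

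For the deterministic core I would use the standard two-minimizer argument. If $\hat{\mathbf{w}}_i$ and $\tilde{\mathbf{w}}_i$ both minimize, then since the squared loss is strictly convex in the fitted value $\hat{\mathbf{f}}_i=X_i\mathbf{w}_i$ and the penalty is convex, the whole connecting segment can be optimal only if $X_i\hat{\mathbf{w}}_i=X_i\tilde{\mathbf{w}}_i$; thus all minimizers share a single fitted value and hence a single residual $\mathbf{y}_i-\hat{\mathbf{f}}_i$. The subgradient (KKT) condition then forces $w_{ji}=0$ for every penalized coordinate whose column is strictly below the correlation threshold, so every minimizer is supported on the \emph{equicorrelation set} $\mathcal{E}$, which also contains all unpenalized coordinates. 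If the submatrix $X_{i,\mathcal{E}}$ has full column rank, then $X_{i,\mathcal{E}}\mathbf{w}_{\mathcal{E}}=\hat{\mathbf{f}}_i$ determines $\mathbf{w}_{\mathcal{E}}$ uniquely, yielding uniqueness. So everything reduces to showing $X_{i,\mathcal{E}}$ has full column rank with probability one.

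For the penalized coordinates this is exactly the general-position phenomenon from the lasso-uniqueness literature: columns drawn from a continuous distribution lie in general position with probability one, because the degenerate configurations form a finite union of zero sets of determinant polynomials and are therefore Lebesgue-null (hence receive probability zero under an absolutely continuous law). Since the penalized weights are uniformly $\lambda$, those columns constitute an ordinary Lasso, and a short argument shows the equicorrelated penalized columns cannot be rank-deficient: being equicorrelated with the residual they lie on an affine hyperplane \emph{away from} the origin, which cuts the dimension of their affine hull below their number and thus contradicts general position. (In the fully weighted case one first rescales each column by $1/\lambda_j$ to absorb nonuniform weights, and the same argument applies.)

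The hard part will be the unpenalized coordinates $U=\{j:\lambda_j^{(\ell-1)}=0\}$, namely the ``large'' rows fed back by \AlgRef{alg:msmtfl}. Here the penalty supplies no curvature, the corresponding equicorrelated columns lie on a hyperplane \emph{through} the origin, and the off-origin argument breaks down; uniqueness genuinely requires the block $X_{i,U}$ to have full column rank, which is possible only when $|U|\le n_i$. I would attempt to secure this by induction on $\ell$: assuming $\hat{W}^{(\ell-1)}$ is the unique (hence small-support) solution of the previous stage, I would bound the number of nonzero rows, and therefore $|U|$, and then invoke that for a continuous distribution any fixed block with at most $n_i$ columns has full column rank almost surely. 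Making this count small enough is the crux and the place I expect the real difficulty, and it is what ultimately guarantees $X_{i,U}$ is injective. Finally, because the weights $\lambda_j^{(\ell-1)}$ are themselves data-dependent, I would remove that dependence by a union bound over the finitely many patterns in $\{0,\lambda\}^d$: the failure set for each fixed pattern is null, a finite union of null sets is null, so with probability one every subproblem at every stage $\ell\ge 1$ has a unique solution, which is precisely \ThemRef{theorem:unique}.
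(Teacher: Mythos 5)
Up to the point where you stall, your plan retraces the paper's proof: the same task-wise decomposition, the same reduction of every minimizer to the equicorrelation set via the shared fitted value, and the same general-position argument for the uniformly penalized columns (the paper compresses these steps into the reduced problem \EqRef{eq:reducedprob} and a citation of Lemmas 3--4 of \citet{tibshirani2012lasso}). The genuine gap is exactly the step you flag as the crux: the induction cannot deliver the bound $|U|\le n_i$ on the unpenalized set. The weights in \AlgRef{alg:msmtfl} are row-based, so a row $j$ is unpenalized in \emph{every} task's subproblem as soon as $\|(\hat{\mathbf{w}}^{(\ell-1)})^j\|_1\ge\theta$. Uniqueness at stage $\ell-1$ bounds the support of each \emph{column} of $\hat{W}^{(\ell-1)}$ by $n_i$, but the set of nonzero rows is the \emph{union} of the $m$ column supports, so $|U|$ can be as large as $m\min(n_i,d)$. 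With $m\ge2$, $d>n_i$, and $\theta$ small, the event $|U|>n_i$ occurs with positive probability (e.g., at $\ell=2$ the two tasks' Lasso supports can be essentially disjoint, each of size near $n_i$, with all selected rows exceeding $\theta$ in $\ell_1$-norm), and then $X_{i,U}\in\mathbb{R}^{n_i\times|U|}$ cannot have full column rank, so the induction step collapses.

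Moreover, this is not a repairable technicality within your plan. When $|U|>n_i$, pick any nonzero $\bm{\delta}$ supported on $U$ with $X_{i,U}\bm{\delta}_U=\mathbf{0}$: adding $\bm{\delta}$ to a minimizer changes neither the loss nor the penalty (unpenalized coordinates carry no sign constraints), so the solution set of \EqRef{eq:relaxedlasso} contains an entire line and uniqueness genuinely fails in that configuration. Hence no argument can close the case you isolated without additional hypotheses bounding the number of unpenalized rows (e.g., on $\theta$), which \ThemRef{theorem:unique} does not impose. It is worth saying that your caution here is better placed than the paper's own proof, which applies the general-position lemmas to the entire equicorrelation set $\mathcal{E}$ including zero-weight columns; that argument converts linear dependence into affine dependence by pairing with the residual, and it requires the equicorrelation value to be nonzero, which is false precisely on the unpenalized columns. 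So your proposal is the paper's approach plus an honest attempt to patch its weakest step, but the patch you propose (the count bound $|U|\le n_i$) provably fails, and the obstruction you identified is real rather than a missing lemma.
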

\begin{proof}
\EqRef{eq:relaxedlasso} can be decomposed into $m$ independent smaller minimization problems:
\begin{align}
\hat{\mathbf{w}}^{(\ell)}_i=\Argmin_{\mathbf{w}_i\in\mathbb{R}^d}\frac{1}{mn_i}\|X_i\mathbf{w}_i-\mathbf{y}_i\|^2+\sum_{j=1}^d\lambda^{(\ell-1)}_j|w_{ji}|.\nonumber
\end{align}
Next, we only need to prove the solution of the above optimization problem is unique. To simplify the notations, we unclutter the above equation (by ignoring some superscripts
and subscripts) as follows:
\begin{align}
\hat{\mathbf{w}}=\Argmin_{\mathbf{w}\in\mathbb{R}^d}\frac{1}{mn}\|X\mathbf{w}-\mathbf{y}\|^2+\sum_{j=1}^d\lambda_j|w_{j}|,\label{eq:smallerprob}
\end{align}
The first order optimal condition is $\forall j\in\mathbb{N}_d$:
\begin{align}
\frac{2}{mn}\mathbf{x}_j^T(\mathbf{y}-X\hat{\mathbf{w}})=\lambda_j\sign(\hat{w}_j),\label{eq:firstorderopt}
\end{align}
where $\sign(\hat{w}_j)=1$, if $\hat{w}_j>0$; $\sign(\hat{w}_j)=-1$, if $\hat{w}_j<0$;
and $\sign(\hat{w}_j)\in[-1,1]$, otherwise. We define
\begin{align}
\mathcal{E}&=\left\{j\in\mathbb{N}_d: \frac{2}{mn}|\mathbf{x}_j^T(\mathbf{y}-X\hat{\mathbf{w}})|=\lambda_j\right\},\nonumber\\
&\mathbf{s}=\sign\left(\frac{2}{mn}X_{\mathcal{E}}^T(\mathbf{y}-X\hat{\mathbf{w}})\right),\nonumber
\end{align}
where $X_{\mathcal{E}}$ denotes the matrix composed of the columns of $X$ indexed by $\mathcal{E}$.
Then, the optimal solution $\hat{\mathbf{w}}$ of \EqRef{eq:smallerprob} satisfies
\begin{align}
&\hat{\mathbf{w}}_{\mathbb{N}_d\setminus\mathcal{E}}=\mathbf{0},\nonumber\\
&\hat{\mathbf{w}}_{\mathcal{E}}=\Argmin_{\mathbf{w}_{\mathcal{E}}\in\mathbb{R}^{|\mathcal{E}|}}\frac{1}{mn}\|X_{\mathcal{E}}\mathbf{w}_{\mathcal{E}}-\mathbf{y}\|^2+\sum_{j\in\mathcal{E}}\lambda_j|w_{j}|,~s.t.~s_jw_j\geq 0,j\in\mathcal{E},\label{eq:reducedprob}
\end{align}
where $\mathbf{w}_{\mathcal{E}}$ denotes the vector composed of entries of $\mathbf{w}$ indexed by $\mathcal{E}$.
Since $X\in\mathbb{R}^{n_i\times d}$ is drawn from the continuous probability distribution,
$X$ has columns in general positions with probability one and hence
$\mathrm{rank}(X_{\mathcal{E}})=|\mathcal{E}|$ (or equivalently $\mathrm{Null}(X_{\mathcal{E}})=\{\mathbf{0}\}$), due to
Lemma 3, Lemma 4 and their discussions in \citet{tibshirani2012lasso}. Therefore, the objective function in \EqRef{eq:reducedprob} is strictly convex, which implies that $\hat{\mathbf{w}}_{\mathcal{E}}$ is unique. Thus, the optimal solution $\hat{\mathbf{w}}$ of \EqRef{eq:smallerprob} is also unique and so is the optimization problem in \EqRef{eq:relaxedlasso} for any $\ell\geq 1$. This completes the proof of \ThemRef{theorem:unique}.
\end{proof}
\ThemRef{theorem:unique} is important in the sense that it makes the theoretical analysis for the parameter estimation performance of \AlgRef{alg:msmtfl} possible.
Although the solution may not be globally optimal, we show in the next section that the solution has good performance in terms of the parameter estimation error bound.

\section{Parameter Estimation Error Bound}\label{sec:theoreticalanalysis}
In this section, we theoretically analyze the parameter estimation performance of the solution obtained
by the MSMTFL algorithm.
To simplify the notations in the theoretical analysis, we assume that the number of samples for all the tasks
are the same. However, our theoretical analysis can be easily extended to
the case where the tasks have different sample sizes.

We first present a sub-Gaussian noise assumption which is very common in the analysis of sparse learning literature
\citep{zhang2012general,zhang2008adaptive,zhang2009some,zhang2010analysis,zhang2012multi}.
\begin{assumption}\label{assumption:subgaussian}
Let $\bar{W}=[\bar{\mathbf{w}}_1,\cdots,\bar{\mathbf{w}}_m]\in\mathbb{R}^{d\times m}$ be the underlying sparse weight matrix and
$\mathbf{y}_i=X_i\bar{\mathbf{w}}_i+\bm{\delta}_i,~\mathbb{E}\mathbf{y}_i=X_i\bar{\mathbf{w}}_i$, where
$\bm{\delta}_i\in\mathbb{R}^{n}$ is a random vector with all entries
$\delta_{ji}~(j\in\mathbb{N}_n,i\in\mathbb{N}_m)$ being independent sub-Gaussians: there exists
$\sigma>0$ such that $\forall j\in\mathbb{N}_n,i\in\mathbb{N}_{m},t\in\mathbb{R}$:
\begin{align}
\mathbb{E}_{\delta_{ji}}\exp(t\delta_{ji})\leq \exp\left(\frac{\sigma^2t^2}{2}\right).\nonumber
\end{align}
\end{assumption}

\begin{remark}
We call the random variable satisfying the condition in \AssumeRef{assumption:subgaussian} sub-Gaussian, since its moment generating function
is bounded by that of a zero mean Gaussian random variable. That is, if a normal random variable $x\sim N(0,\sigma^2)$, then we have:
\begin{align}
\mathbb{E}\exp(tx)&=\int_{-\infty}^{\infty}\exp(tx)\frac{1}{\sqrt{2\pi}\sigma}\exp\left(-\frac{x^2}{2\sigma^2}\right)dx\nonumber\\
&=\exp(\sigma^2t^2/2)\int_{-\infty}^{\infty}\frac{1}{\sqrt{2\pi}\sigma}\exp\left(-\frac{(x-\sigma^2t)^2}{2\sigma^2}\right)dx\nonumber\\
&=\exp(\sigma^2t^2/2).\nonumber
\end{align}
\end{remark}

\begin{remark}
Based on the Hoeffding's Lemma, for any random variable $x\in[a,b]$ and $\mathbb{E}x=0$,
we have $\mathbb{E}(\exp(tx))\leq\exp\left(\frac{t^2(b-a)^2}{8}\right)$.
Therefore, both zero mean Gaussian and zero mean bounded random variables are sub-Gaussians.
Thus, the sub-Gaussian noise assumption is more general than the Gaussian noise assumption
which is commonly used in the multi-task learning literature \citep{jalali2010dirty,lounici2009taking}.
\end{remark}

We next introduce the following sparse eigenvalue concept which is also common in the analysis of sparse learning literature
\citep{zhang2008sparsity,zhang2012general,zhang2009some,zhang2010analysis,zhang2012multi}.
\begin{definition}\label{def:sparseeigenvalue}
Given $1\leq k\leq d$, we define
\begin{align}
&\rho^+_i(k)=\sup_{\mathbf{w}}\left\{\frac{\|X_i\mathbf{w}\|^2}{n\|\mathbf{w}\|^2}:\|\mathbf{w}\|_0\leq k\right\},~\rho^+_{max}(k)=\max_{i\in\mathbb{N}_m}\rho^+_i(k), \nonumber\\
&\rho^-_i(k)=\inf_{\mathbf{w}}\left\{\frac{\|X_i\mathbf{w}\|^2}{n\|\mathbf{w}\|^2}:\|\mathbf{w}\|_0\leq k\right\},~\rho^-_{min}(k)=\min_{i\in\mathbb{N}_m}\rho^-_i(k).\nonumber
\end{align}
\end{definition}
\begin{remark}
$\rho^+_i(k)~(\rho^-_i(k))$ is in fact the maximum (minimum) eigenvalue of $(X_i)_{\mathcal{S}}^T(X_i)_{\mathcal{S}}/n$, where
$\mathcal{S}$ is a set satisfying $|\mathcal{S}|\leq k$ and $(X_i)_{\mathcal{S}}$ is a submatrix composed of the columns of $X_i$ indexed by $\mathcal{S}$.
In the MTL setting, we need to exploit the relations of $\rho^+_i(k)~(\rho^-_i(k))$ among multiple tasks.
\end{remark}

We present our parameter estimation error bound on MSMTFL in the following theorem:
\begin{theorem}\label{theorem:mainbound}
Let \AssumeRef{assumption:subgaussian} hold. Define $\bar{\mathcal{F}}_i=\{(j,i):\bar{w}_{ji}\neq 0\}$ and $\bar{\mathcal{F}}=\cup_{i\in\mathbb{N}_m}\bar{\mathcal{F}}_i$.
Denote $\bar{r}$ as the number of nonzero rows of $\bar{W}$.
We assume that
\begin{align}
&\forall (j,i)\in\bar{\mathcal{F}},\|\bar{\mathbf{w}}^j\|_1\geq2\theta\label{eq:noiselevel}\\
\mathrm{and}~&\frac{\rho^+_i(s)}{\rho^-_i(2\bar{r}+2s)}\leq 1+\frac{s}{2\bar{r}}, \label{eq:eigenvalueineq}
\end{align}
where $s$ is some integer satisfying $s\geq\bar{r}$.
If we choose $\lambda$ and $\theta$ such that for some $s\geq\bar{r}$:
\begin{align}
\lambda &\geq 12\sigma\sqrt{\frac{2\rho^+_{max}(1)\ln(2dm/\eta)}{n}},\label{eq:lambdacondition}\\
\theta &\geq \frac{11m\lambda}{\rho^-_{min}(2\bar{r}+s)},\label{eq:thetacondition}
\end{align}then the following parameter estimation error
bound holds with probability larger than $1-\eta$:
\begin{align}
&\|\hat{W}^{(\ell)}-\bar{W}\|_{2,1}\leq0.8^{\ell/2}\frac{9.1m\lambda\sqrt{\bar{r}}}{\rho^-_{min}(2\bar{r}+s)}+\frac{39.5m\sigma\sqrt{\rho^+_{max}(\bar{r})(7.4\bar{r}+2.7\ln(2/\eta))/n}}{\rho^-_{min}(2\bar{r}+s)},\label{eq:estimatebound}
\end{align}
where $\hat{W}^{(\ell)}$ is a solution of \EqRef{eq:relaxedlasso}.
\end{theorem}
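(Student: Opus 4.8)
The plan is to split the argument into a high-probability control of the noise and a purely deterministic per-stage recursion, and then to unroll the recursion so that the decaying term produces the geometric factor $0.8^{\ell/2}$ while the fixed statistical floor gives the second summand in \eqref{eq:estimatebound}. First I would dispose of the stochastic part, and here two noise bounds are needed. For the crude entrywise control, under \AssumeRef{assumption:subgaussian} the loss gradient at the truth is $-\tfrac{2}{mn}X_i^T\bm{\delta}_i$; applying the sub-Gaussian tail bound to each of the $dm$ quantities $\tfrac{2}{n}\mathbf{x}_j^T\bm{\delta}_i$ (each with variance proxy of order $\rho^+_{max}(1)/n$) together with a union bound over $(j,i)$ shows that $\max_{i,j}\tfrac{2}{n}|\mathbf{x}_j^T\bm{\delta}_i|\leq\lambda/6$ with probability at least $1-\eta$, which is exactly what condition \eqref{eq:lambdacondition} is engineered to guarantee. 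For the sharper control, projecting the noise onto the at-most-$\bar{r}$-dimensional coordinate subspace spanned by $\bar{\mathcal{F}}$ concentrates like $\sigma\sqrt{\rho^+_{max}(\bar{r})(\bar{r}+\ln(1/\eta))/n}$, and this will become the irreducible floor. Both events I fix once and for all, after which the remainder of the argument is deterministic.

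For a fixed stage $\ell$, I would start from the first-order optimality condition of the weighted-Lasso subproblem \eqref{eq:relaxedlasso} satisfied by $\hat{W}^{(\ell)}$ and compare it against $\bar{W}$. Pairing the optimality condition with the error matrix $\Delta=\hat{W}^{(\ell)}-\bar{W}$ yields a basic inequality in which the empirical quadratic term $\sum_i\|X_i\Delta_i\|^2/(mn)$ is sandwiched between the already-bounded noise term and the differences of the weighted $\ell_1$ penalties. A standard consequence is that $\Delta$, restricted to the complement of an appropriate support set, is dominated by $\Delta$ on that set, i.e.\ a cone condition holds; this is what licenses invoking the sparse eigenvalue quantities of Definition~\ref{def:sparseeigenvalue}. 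The eigenvalue inequality \eqref{eq:eigenvalueineq} is precisely the restricted-strong-convexity statement that converts the empirical quadratic term into a multiple of $\rho^-_{min}(2\bar{r}+s)\|\Delta\|^2$ and, crucially, furnishes the contraction constant that eventually surfaces as $0.8$.

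The crux, and the step I expect to be the main obstacle, is to set up the recursion by analysing how the stage-$\ell$ weights $\lambda_j^{(\ell-1)}=\lambda I(\|(\hat{\mathbf{w}}^{(\ell-1)})^j\|_1<\theta)$ interact with the true support. The delicate bookkeeping is over rows ``misclassified'' by the thresholding: a truly large row (one with $\|\bar{\mathbf{w}}^j\|_1\geq2\theta$, cf.\ \eqref{eq:noiselevel}) that is erroneously penalised, or a truly zero row that escapes penalisation. Using the threshold condition \eqref{eq:thetacondition}, which makes $\theta$ large relative to the noise floor yet consistent with \eqref{eq:noiselevel}, I would argue that a row of $\hat{W}^{(\ell-1)}$ can be misclassified only if the previous-stage error on that row is comparable to $\theta$; hence the total contribution of misclassified rows is bounded by a fixed multiple of $\|\hat{W}^{(\ell-1)}-\bar{W}\|_{2,1}$. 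Feeding this back into the single-stage bound of the previous paragraph produces a recursion of the form $\|\hat{W}^{(\ell)}-\bar{W}\|_{2,1}\leq\sqrt{0.8}\,\|\hat{W}^{(\ell-1)}-\bar{W}\|_{2,1}+b$, where the contraction $\sqrt{0.8}$ comes from \eqref{eq:eigenvalueineq} and the constant $b$ is the statistical floor expressed through $\rho^+_{max}(\bar{r})$, $\sigma$, $\bar{r}$ and $\ln(2/\eta)$.

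Finally I would unroll the recursion. At the first stage all weights equal $\lambda$, so $\hat{W}^{(1)}$ is an ordinary $\ell_1$-regularised (Lasso) estimate whose $\ell_{2,1}$ error obeys a bound of order $m\lambda\sqrt{\bar{r}}/\rho^-_{min}(2\bar{r}+s)$, supplying the leading constant $9.1$ after tracking constants. Iterating $\|\hat{W}^{(\ell)}-\bar{W}\|_{2,1}\leq\sqrt{0.8}\,\|\hat{W}^{(\ell-1)}-\bar{W}\|_{2,1}+b$ from this initialisation yields $0.8^{\ell/2}$ times the initial bound plus the convergent geometric series $b/(1-\sqrt{0.8})$, and consolidating the numerical constants gives exactly \eqref{eq:estimatebound}. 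The genuinely load-bearing parts are the restricted-eigenvalue bookkeeping that pins down the constant $0.8$ and the misclassification count that closes the recursion; the noise control and the final summation are routine once the per-stage inequality and its recursion are in hand.
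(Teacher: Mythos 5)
Your proposal is correct and follows essentially the same route as the paper's proof: the two noise bounds (entrywise and restricted to the true nonzero rows), the per-stage weighted-Lasso analysis combining a cone condition with the sparse eigenvalue assumption, the threshold-based bookkeeping of misclassified rows (each penalized true row or unpenalized zero row forces a previous-stage row error of at least $\theta$), and an unrolled recursion initialized at the Lasso-type bound $9.1m\lambda\sqrt{\bar{r}}/\rho^-_{min}(2\bar{r}+s)$. The one substantive difference is that the paper runs the recursion on the \emph{squared} error, $\|\hat{W}^{(\ell)}-\bar{W}\|_{2,1}^2\leq 0.8\|\hat{W}^{(\ell-1)}-\bar{W}\|_{2,1}^2+C$, taking square roots only at the very end; your linear recursion $a_\ell\leq\sqrt{0.8}\,a_{\ell-1}+b$ follows from it but incurs a geometric-series factor $1/(1-\sqrt{0.8})\approx 9.5$ instead of $\sqrt{1/(1-0.8)}=\sqrt{5}$, so it proves the same qualitative bound but would not reproduce the stated constant $39.5$ exactly.
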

\begin{remark}
\EqRef{eq:noiselevel} assumes that the $\ell_1$-norm of each nonzero row of $\bar{W}$ is away from zero. This requires
the true nonzero coefficients should be large enough, in order to distinguish them from the noise.
\EqRef{eq:eigenvalueineq} is called the sparse eigenvalue condition \citep{zhang2012multi}, which
requires the eigenvalue ratio $\rho^+_i(s)/\rho^-_i(s)$ to grow sub-linearly with respect to $s$.
Such a condition is very common in the analysis of sparse regularization \citep{zhang2008sparsity,zhang2009some} and it is
slightly weaker than the RIP condition \citep{candes2005decoding,huang2010benefit,zhang2012multi}.
\end{remark}
\begin{remark}
When $\ell=1$ (corresponds to Lasso), the first term of the right-hand side of \EqRef{eq:estimatebound} dominates the error bound in the order of
\begin{align}\label{eq:lassobound}
\|\hat{W}^{Lasso}-\bar{W}\|_{2,1}=O\left(m\sqrt{\bar{r}\ln(dm/\eta)/n}\right),
\end{align}
since $\lambda$ satisfies the condition in \EqRef{eq:lambdacondition}.
Note that the first term of the right-hand side of \EqRef{eq:estimatebound} shrinks exponentially as $\ell$ increases.
When $\ell$ is sufficiently large in the order of $O(\ln(m\sqrt{\bar{r}/n})+\ln\ln(dm))$,
this term tends to zero and we obtain the following parameter estimation error bound:
\begin{align}\label{eq:msmtflbound}
\|\hat{W}^{(\ell)}-\bar{W}\|_{2,1}=O\left(m\sqrt{\bar{r}/n+\ln(1/\eta)/n}\right).
\end{align}
\citet{jalali2010dirty} gave an $\ell_{\infty,\infty}$-norm error bound $\|\hat{W}^{Dirty}-\bar{W}\|_{\infty,\infty}=O\left(\sqrt{\ln(dm/\eta)/n}\right)$
as well as a sign consistency result between $\hat{W}$ and $\bar{W}$.
A direct comparison between these two bounds is difficult due to the use of different norms. On the other hand, the worst-case estimate of the
$\ell_{2,1}$-norm error bound of the algorithm in \citet{jalali2010dirty} is in the same order with \EqRef{eq:lassobound}, that is:
$\|\hat{W}^{Dirty}-\bar{W}\|_{2,1}=O\left(m\sqrt{\bar{r}\ln(dm/\eta)/n}\right)$.
When $dm$ is large and the ground truth has a large number of sparse rows (i.e., $\bar{r}$ is a small constant), the bound in \EqRef{eq:msmtflbound} is significantly better than the ones for the Lasso and Dirty model.
\end{remark}
\begin{remark}
\citet{jalali2010dirty} presented an $\ell_{\infty,\infty}$-norm parameter estimation
error bound and hence a sign consistency result can
be obtained. The results are derived
under the incoherence condition which is more restrictive than the RIP condition and
hence more restrictive than the sparse eigenvalue condition
in \EqRef{eq:eigenvalueineq}. From the viewpoint of the parameter estimation error, our
proposed algorithm can achieve a better bound under weaker conditions. Please refer to \citep{van2009conditions,zhang2009some,zhang2012multi} for more details about
the incoherence condition, the RIP condition, the sparse eigenvalue condition and their relationships.
\end{remark}
\begin{remark}
The capped-$\ell_1$ regularized formulation
in \citet{zhang2010analysis} is a special case of our formulation when $m=1$. However,
extending the analysis from the single task to the multi-task setting is nontrivial.
Different from previous work on multi-stage sparse learning which focuses on a single task~\citep{zhang2010analysis,zhang2012multi}, we study a more general multi-stage framework in the multi-task setting. We need to exploit the relationship among tasks, by using the relations of sparse eigenvalues $\rho^+_i(k)~(\rho^-_i(k))$ and treating the $\ell_1$-norm on each row of the weight matrix as a whole for consideration. Moreover, we simultaneously exploit the relations of each column and each row of the matrix.
\end{remark}

\section{Proof Sketch of \ThemRef{theorem:mainbound}}\label{sec:proof}
In this section, we present a proof sketch of \ThemRef{theorem:mainbound}. We first provide several important lemmas (detailed proofs
are available in the Appendix) and then complete the proof of \ThemRef{theorem:mainbound} based on these lemmas.

\begin{lemma}\label{lemma:wepsilonerror}
Let $\bar{\Upsilon}=[\bar{\bm{\epsilon}}_1,\cdots,\bar{\bm{\epsilon}}_m]$ with $\bar{\bm{\epsilon}}_i=[\bar{\epsilon}_{1i},\cdots,\bar{\epsilon}_{di}]^T=\frac{1}{n}X_i^T(X_i\bar{\mathbf{w}}_i-\mathbf{y}_i)~(i\in\mathbb{N}_m)$. Define $\bar{\mathcal{H}}\supseteq\bar{\mathcal{F}}$ such that $(j,i)\in\bar{\mathcal{H}}~(\forall i\in\mathbb{N}_m)$, provided there exists $(j,g)\in\bar{\mathcal{F}}$ ($\bar{\mathcal{H}}$ is a set consisting of the indices of all entries in the nonzero rows of $\bar{W}$). Under the conditions of \AssumeRef{assumption:subgaussian} and the notations of \ThemRef{theorem:mainbound}, the followings
hold with probability larger than $1-\eta$:
\begin{align}
&\|\bar{\Upsilon}\|_{\infty,\infty}\leq\sigma\sqrt{\frac{2\rho^+_{max}(1)\ln(2dm/\eta)}{n}},\label{eq:epsilonerror}\\
&\|\bar{\Upsilon}_{\bar{\mathcal{H}}}\|^2_F\leq m\sigma^2\rho^+_{max}(\bar{r})(7.4\bar{r}+2.7\ln(2/\eta))/n.\label{eq:epsilonHerror}
\end{align}
\end{lemma}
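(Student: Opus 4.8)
The plan is to exploit the model equation from \AssumeRef{assumption:subgaussian}. Since $\mathbf{y}_i = X_i\bar{\mathbf{w}}_i + \bm{\delta}_i$, each column of $\bar{\Upsilon}$ is a linear image of the sub-Gaussian noise, namely $\bar{\bm{\epsilon}}_i = -\frac{1}{n}X_i^T\bm{\delta}_i$, so every entry $\bar{\epsilon}_{ji} = -\frac{1}{n}(X_i)_{\cdot j}^T\bm{\delta}_i$ is a fixed linear combination of the independent sub-Gaussians $\delta_{ki}$. The first step is the elementary closure fact: multiplying the moment-generating-function bounds of \AssumeRef{assumption:subgaussian} over the $n$ coordinates gives $\Expect{\exp(t\bar{\epsilon}_{ji})}\leq\exp\!\left(\frac{\sigma^2 t^2\|(X_i)_{\cdot j}\|^2}{2n^2}\right)$. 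Taking $\mathbf{w}=\mathbf{e}_j$ in Definition~\ref{def:sparseeigenvalue} shows $\|(X_i)_{\cdot j}\|^2/n=\rho^+_i(1)\leq\rho^+_{max}(1)$, so each $\bar{\epsilon}_{ji}$ is sub-Gaussian with parameter at most $\sigma^2\rho^+_{max}(1)/n$.

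For the first bound \EqRef{eq:epsilonerror} I would then invoke the standard sub-Gaussian tail estimate $\Pr(|\bar{\epsilon}_{ji}|>t)\leq 2\exp\!\left(-nt^2/(2\sigma^2\rho^+_{max}(1))\right)$ and take a union bound over all $dm$ entries of $\bar{\Upsilon}$. Choosing $t=\sigma\sqrt{2\rho^+_{max}(1)\ln(2dm/\eta)/n}$ drives the total failure probability to $\eta$, which is exactly the claimed control on $\|\bar{\Upsilon}\|_{\infty,\infty}$.

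For the second bound \EqRef{eq:epsilonHerror}, note that $\bar{\mathcal{H}}=\bar{R}\times\mathbb{N}_m$ with $\bar{R}$ the set of the $\bar{r}$ nonzero rows, so $\|\bar{\Upsilon}_{\bar{\mathcal{H}}}\|_F^2=\frac{1}{n^2}\sum_{i=1}^m\|(X_i)_{\bar{R}}^T\bm{\delta}_i\|^2$. The plan is to write this as $\frac{1}{n^2}\|B\bm{\delta}\|^2$, where $\bm{\delta}=(\bm{\delta}_1,\dots,\bm{\delta}_m)$ and $B$ is the block-diagonal operator with blocks $(X_i)_{\bar{R}}^T$, and to control $\|B\bm{\delta}\|$ through its variational form $\sup_{\|\mathbf{u}\|=1}\langle\mathbf{u},B\bm{\delta}\rangle$ over the unit sphere of $\mathbb{R}^{m\bar{r}}$. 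For each fixed $\mathbf{u}$, the quantity $\langle\mathbf{u},B\bm{\delta}\rangle=\langle B^T\mathbf{u},\bm{\delta}\rangle$ is sub-Gaussian with parameter $\sigma\|B^T\mathbf{u}\|\leq\sigma\|B\|_{\mathrm{op}}\leq\sigma\sqrt{n\rho^+_{max}(\bar{r})}$, the last step because each block obeys $\|(X_i)_{\bar{R}}\|_{\mathrm{op}}^2\leq n\rho^+_i(\bar{r})$ by Definition~\ref{def:sparseeigenvalue}. I would then discretize the sphere by an $\epsilon$-net of cardinality at most $(1+2/\epsilon)^{m\bar{r}}$, bound the maximum over the net points by a union bound of the scalar tails, and pay the factor $1/(1-\epsilon)$ to pass from the net back to the supremum. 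This yields a bound of the shape
\[
\|\bar{\Upsilon}_{\bar{\mathcal{H}}}\|_F^2\leq\frac{2\sigma^2\rho^+_{max}(\bar{r})}{n(1-\epsilon)^2}\left(m\bar{r}\ln(1+2/\epsilon)+\ln(2/\eta)\right),
\]
and optimizing $\epsilon$ (around $\epsilon\approx 0.2$) produces the stated numerical constants $7.4\bar{r}+2.7\ln(2/\eta)$.

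The main obstacle is the second bound: unlike the first, it requires concentration of the norm of a high-dimensional ($m\bar{r}$-dimensional) sub-Gaussian vector rather than of a single scalar, and the covering argument is the delicate part, both in estimating the covering number and in tuning the net radius $\epsilon$ to extract the specific constants. The remaining work is routine, apart from a final bookkeeping step that splits the failure probability between \EqRef{eq:epsilonerror} and \EqRef{eq:epsilonHerror} (hence the factor $2$ inside the logarithms) so that both estimates hold simultaneously with probability at least $1-\eta$.
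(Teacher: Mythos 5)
Your treatment of \EqRef{eq:epsilonerror} is essentially the paper's own: entrywise sub-Gaussianity of $\bar{\epsilon}_{ji}$ via closure of the moment generating function under linear combinations (the paper packages this as \LemmaRef{lemma:subgaussianproperty}, imported from \citet{zhang2010analysis}), the column-norm bound $\|\mathbf{x}^{(i)}_j\|^2/n\leq\rho^+_i(1)\leq\rho^+_{max}(1)$, a two-sided tail estimate, and a union bound over the $dm$ entries. For \EqRef{eq:epsilonHerror}, however, you take a genuinely different route: the paper proves no concentration from scratch, but simply invokes the per-task bound $\|\bar{\bm{\epsilon}}_{\bar{\mathcal{H}}_i}\|^2\leq\sigma^2\rho^+_i(\bar{r})(7.4\bar{r}+2.7\ln(2/\eta))/n$ (\LemmaRef{lemma:epsilonHerror}, again cited from \citet{zhang2010analysis}) and sums over the $m$ tasks, which is where the prefactor $m$ and the quantity $\rho^+_{max}(\bar{r})$ arise. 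Your self-contained $\epsilon$-net argument on the block-diagonal operator $B$ is sound: the bound $\|B\|_{\mathrm{op}}^2\leq n\rho^+_{max}(\bar{r})$, the covering number $(1+2/\epsilon)^{m\bar{r}}$, and the $1/(1-\epsilon)$ transfer from the net to the supremum are all standard and correctly deployed. Your joint treatment even buys something: the $\ln(2/\eta)$ term enters once rather than once per task, and all $m$ tasks are handled by a single high-probability event, whereas the paper's summation of $m$ separate applications of \LemmaRef{lemma:epsilonHerror} tacitly requires all $m$ events to hold simultaneously---a union-bound subtlety the paper glosses over.

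Two bookkeeping caveats. First, the constants: with $\epsilon=0.2$ your coefficient on $m\bar{r}$ is $2\ln(11)/(0.8)^2\approx 7.5>7.4$, so the claim that $\epsilon\approx0.2$ "produces the stated constants" is not quite right; you need $\epsilon$ slightly below $0.14$ (e.g., $\epsilon=0.139$ gives $2\ln(1+2/\epsilon)/(1-\epsilon)^2\approx 7.37$ and $2/(1-\epsilon)^2\approx 2.70$), after which your bound is dominated by the stated one, the statement's extra factor $m$ on $2.7\ln(2/\eta)$ providing additional slack when $m\geq 2$. Second, the factor $2$ inside the logarithms does not come from splitting the failure probability: in \EqRef{eq:epsilonerror} it comes from the two-sided tail $\Pr(|\bar{\epsilon}_{ji}|\geq t)\leq 2\exp(\cdot)$, and the paper proves each of the two inequalities separately with failure probability $\eta$. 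If you insist on splitting $\eta$ into $\eta/2+\eta/2$, the first bound degrades to $\ln(4dm/\eta)$ and no longer matches the statement; so either adopt the paper's convention (allot $\eta$ to each inequality, using $\ln(1/\eta)\leq\ln(2/\eta)$ in your net bound) or accept slightly different constants.
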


\LemmaRef{lemma:wepsilonerror} gives bounds on the residual correlation ($\bar{\Upsilon}$) with respect to $\bar{W}$. We note that \EqRef{eq:epsilonerror}
and \EqRef{eq:epsilonHerror} are closely related to the assumption on $\lambda$ in \EqRef{eq:lambdacondition} and
the second term of the right-hand side of \EqRef{eq:estimatebound} (error bound), respectively. This lemma provides
a fundamental basis for the proof of \ThemRef{theorem:mainbound}.

\begin{lemma}\label{lemma:wgbound}
Use the notations of \LemmaRef{lemma:wepsilonerror} and consider $\mathcal{G}_i\subseteq\mathbb{N}_{d}\times\{i\}$
such that $\bar{\mathcal{F}}_i\cap\mathcal{G}_i=\emptyset~(i\in\mathbb{N}_m)$. Let $\hat{W}=\hat{W}^{(\ell)}$
be a solution of \EqRef{eq:relaxedlasso} and $\Delta\hat{W}=\hat{W}-\bar{W}$.
Denote $\hat{\bm{\lambda}}_i=\hat{\bm{\lambda}}_i^{(\ell-1)}=[\lambda^{(\ell-1)}_{1},\cdots,\lambda^{(\ell-1)}_{d}]^T$. Let
$\hat\lambda_{\mathcal{G}_i}=\min_{(j,i)\in\mathcal{G}_i}\hat\lambda_{ji},~\hat\lambda_{\mathcal{G}}=\min_{i\in\mathcal{G}_i}\hat\lambda_{\mathcal{G}_i}$ and $\hat\lambda_{0i}=\max_j\hat\lambda_{ji},~\hat\lambda_{0}=\max_{i}\hat\lambda_{0i}$. If $2\|\bar{\bm{\epsilon}}_i\|_\infty<\hat\lambda_{\mathcal{G}_i}$,
then the following inequality holds at any stage $\ell\geq1$:
\begin{align}
\sum_{i=1}^m\sum_{(j,i)\in\mathcal{G}_i}|\hat{w}_{ji}^{(\ell)}|\leq
\frac{2\|\bar{\Upsilon}\|_{\infty,\infty}+\hat\lambda_0}{\hat\lambda_{\mathcal{G}}-2\|\bar{\Upsilon}\|_{\infty,\infty}}\sum_{i=1}^m\sum_{(j,i)\in\mathcal{G}_i^c}|\Delta\hat{w}_{ji}^{(\ell)}|. \nonumber
\end{align}
\end{lemma}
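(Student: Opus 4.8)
The plan is to exploit that \EqRef{eq:relaxedlasso} separates across tasks and to convert the optimality of $\hat{\mathbf w}_i^{(\ell)}$ into a cone-type inequality that controls the mass of the estimate on $\mathcal G_i$ by its mass off $\mathcal G_i$. Note that the statement is entirely deterministic: the randomness enters only through the residual-correlation matrix $\bar\Upsilon$, which has already been bounded in \LemmaRef{lemma:wepsilonerror}, so no probabilistic reasoning is needed here; it is pure convex-optimization bookkeeping.

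First I would fix a task $i$ and use that $\hat{\mathbf w}_i$ minimizes $\frac{1}{mn}\|X_i\mathbf w_i-\mathbf y_i\|^2+\sum_j\hat\lambda_{ji}|w_{ji}|$, so its objective value does not exceed that at the feasible point $\bar{\mathbf w}_i$. Writing $\hat{\mathbf w}_i=\bar{\mathbf w}_i+\Delta\hat{\mathbf w}_i$ and expanding the squared residual gives the identity $\|X_i\hat{\mathbf w}_i-\mathbf y_i\|^2-\|X_i\bar{\mathbf w}_i-\mathbf y_i\|^2=2n\,\bar{\bm\epsilon}_i^T\Delta\hat{\mathbf w}_i+\|X_i\Delta\hat{\mathbf w}_i\|^2$, using $\bar{\bm\epsilon}_i=\frac1nX_i^T(X_i\bar{\mathbf w}_i-\mathbf y_i)$. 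Discarding the nonnegative prediction term $\frac{1}{mn}\|X_i\Delta\hat{\mathbf w}_i\|^2$ leaves the basic inequality $\frac{2}{m}\bar{\bm\epsilon}_i^T\Delta\hat{\mathbf w}_i+\sum_j\hat\lambda_{ji}\big(|\hat w_{ji}|-|\bar w_{ji}|\big)\le0$. I would then bound the cross term by Hölder, $|\bar{\bm\epsilon}_i^T\Delta\hat{\mathbf w}_i|\le\|\bar{\bm\epsilon}_i\|_\infty\|\Delta\hat{\mathbf w}_i\|_1\le\|\bar\Upsilon\|_{\infty,\infty}\|\Delta\hat{\mathbf w}_i\|_1$, and split every coordinate sum into $\mathcal G_i$ and its complement $\mathcal G_i^c$. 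On $\mathcal G_i$ the disjointness $\bar{\mathcal F}_i\cap\mathcal G_i=\emptyset$ forces $\bar w_{ji}=0$, hence $|\hat w_{ji}|-|\bar w_{ji}|=|\hat w_{ji}|=|\Delta\hat w_{ji}|$; on $\mathcal G_i^c$ the reverse triangle inequality gives $|\hat w_{ji}|-|\bar w_{ji}|\ge-|\Delta\hat w_{ji}|$.

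Collecting terms yields, per task, $\sum_{(j,i)\in\mathcal G_i}(\hat\lambda_{ji}-c)|\Delta\hat w_{ji}|\le\sum_{(j,i)\in\mathcal G_i^c}(\hat\lambda_{ji}+c)|\Delta\hat w_{ji}|$, where I deliberately coarsen the per-task noise coefficient $\frac2m\|\bar{\bm\epsilon}_i\|_\infty$ up to the uniform value $c=2\|\bar\Upsilon\|_{\infty,\infty}$ (legitimate because enlarging the noise constant only weakens a true inequality, and because the hypothesis $2\|\bar{\bm\epsilon}_i\|_\infty<\hat\lambda_{\mathcal G_i}$ is stated with the same coarse constant). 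Lower-bounding $\hat\lambda_{ji}\ge\hat\lambda_{\mathcal G}$ on $\mathcal G_i$, upper-bounding $\hat\lambda_{ji}\le\hat\lambda_0$ on $\mathcal G_i^c$, and using $|\Delta\hat w_{ji}|=|\hat w_{ji}|$ on $\mathcal G_i$, gives $(\hat\lambda_{\mathcal G}-c)\sum_{\mathcal G_i}|\hat w_{ji}|\le(\hat\lambda_0+c)\sum_{\mathcal G_i^c}|\Delta\hat w_{ji}|$, with all tasks now sharing the same two coefficients. Summing over $i$ and dividing then produces the claim.

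The step to watch -- and the main obstacle -- is guaranteeing the common denominator $\hat\lambda_{\mathcal G}-2\|\bar\Upsilon\|_{\infty,\infty}$ is strictly positive before dividing. This is exactly where the hypothesis is essential: since each weight $\hat\lambda_{ji}\in\{0,\lambda\}$, the condition $2\|\bar{\bm\epsilon}_i\|_\infty<\hat\lambda_{\mathcal G_i}$ can hold only when $\hat\lambda_{\mathcal G_i}=\lambda$ for every relevant $i$, so $\hat\lambda_{\mathcal G}=\lambda$; applying the condition at the task attaining $\|\bar\Upsilon\|_{\infty,\infty}=\max_i\|\bar{\bm\epsilon}_i\|_\infty$ then yields $2\|\bar\Upsilon\|_{\infty,\infty}<\lambda=\hat\lambda_{\mathcal G}$. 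Beyond this, the only care required is tracking the $1/(mn)$ loss normalization through the cross term so that the noise constant lands at exactly $2\|\bar\Upsilon\|_{\infty,\infty}$, which is routine bookkeeping rather than a genuine difficulty.
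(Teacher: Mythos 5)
Your proposal is correct, but it reaches the cone inequality by a different route than the paper. The paper starts from the KKT stationarity condition of \EqRef{eq:relaxedlasso}, written as $2A_i\Delta\hat{\mathbf{w}}_i=-2\bar{\bm{\epsilon}}_i-\hat{\bm{\lambda}}_i\odot\sign(\hat{\mathbf{w}}_i)$ with $A_i=n^{-1}X_i^TX_i$, takes the inner product with $\mathbf{v}=\Delta\hat{\mathbf{w}}_i$, and uses $\Delta\hat{\mathbf{w}}_i^TA_i\Delta\hat{\mathbf{w}}_i\geq 0$; you instead use the basic inequality (the objective value at $\hat{\mathbf{w}}_i$ is at most that at $\bar{\mathbf{w}}_i$) and discard the quadratic term $\|X_i\Delta\hat{\mathbf{w}}_i\|^2$. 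The two yield per-task inequalities of the same shape: your penalty-difference term $|\hat{w}_{ji}|-|\bar{w}_{ji}|$ is slightly weaker than the paper's $\Delta\hat{w}_{ji}\sign(\hat{w}_{ji})$ (indeed $\Delta\hat{w}_{ji}\sign(\hat{w}_{ji})=|\hat{w}_{ji}|-\bar{w}_{ji}\sign(\hat{w}_{ji})\geq|\hat{w}_{ji}|-|\bar{w}_{ji}|$), but both proofs throw away exactly that slack in the subsequent $\mathcal{G}_i$/$\mathcal{G}_i^c$ splitting, so nothing is lost here. What each approach buys: yours is more elementary---it needs only that $\hat{\mathbf{w}}_i$ achieves no larger objective value than $\bar{\mathbf{w}}_i$, so it avoids subgradient calculus and would survive for approximate minimizers; the paper's identity \EqRef{eq:innerprodeq}, stated for an arbitrary test vector $\mathbf{v}$, earns its keep later, since it is reused with $\mathbf{v}=\Delta\hat{\mathbf{w}}_{\mathcal{I}_i}$ in the proof of \LemmaRef{lemma:deltawbound}, whereas your route would require a separate argument there. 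Two details in your write-up are actually more careful than the paper's own. First, the positivity of the common denominator: the paper simply asserts $\max_{i}\frac{2\|\bar{\bm{\epsilon}}_i\|_\infty+\hat\lambda_{0i}}{\hat\lambda_{\mathcal{G}_i}-2\|\bar{\bm{\epsilon}}_i\|_\infty}\leq\frac{2\|\bar{\Upsilon}\|_{\infty,\infty}+\hat\lambda_{0}}{\hat\lambda_{\mathcal{G}}-2\|\bar{\Upsilon}\|_{\infty,\infty}}$, which for arbitrary weights does not follow from the per-task hypotheses (the right-hand denominator could be negative); your appeal to the $\{0,\lambda\}$-valued structure of the weights, which forces $\hat\lambda_{\mathcal{G}}=\lambda>2\|\bar{\Upsilon}\|_{\infty,\infty}$, is exactly what closes this gap. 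Second, your tracking of the $1/(mn)$ normalization (giving the per-task noise coefficient $\frac{2}{m}\|\bar{\bm{\epsilon}}_i\|_\infty$, then coarsened upward to $2\|\bar{\Upsilon}\|_{\infty,\infty}$) is cleaner than the paper's stated optimality condition, which omits the factor $1/m$; your coarsening is in the valid direction, since decreasing the coefficients on the left and increasing those on the right only weakens a true inequality.
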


Denote $\mathcal{G}=\cup_{i\in\mathbb{N}_m}\mathcal{G}_i,~\bar{\mathcal{F}}=\cup_{i\in\mathbb{N}_m}{\bar{\mathcal{F}}_i}$ and notice that $\bar{\mathcal{F}}\cap\mathcal{G}=\emptyset\Rightarrow\Delta\hat{W}^{(\ell)}=\hat{W}^{(\ell)}$. \LemmaRef{lemma:wgbound} says that $\|\Delta\hat{W}_{\mathcal{G}}^{(\ell)}\|_{1,1}=\|\hat{W}_{\mathcal{G}}^{(\ell)}\|_{1,1}$
is upper bounded in terms of $\|\Delta\hat{W}_{\mathcal{G}^c}^{(\ell)}\|_{1,1}$, which indicates that the error of the estimated coefficients locating
outside of $\bar{\mathcal{F}}$ should be small enough. This provides an intuitive
explanation why the parameter estimation error of our algorithm can be small.

\begin{lemma}\label{lemma:deltawbound}
Using the notations of \LemmaRef{lemma:wgbound}, we denote $\mathcal{G}=\mathcal{G}_{(\ell)}=\bar{\mathcal{H}}^c\cap\{(j,i):\hat\lambda_{ji}^{(\ell-1)}=\lambda\}=\cup_{i\in\mathbb{N}_m}\mathcal{G}_i$ with $\bar{\mathcal{H}}$ being defined as in \LemmaRef{lemma:wepsilonerror} and $\mathcal{G}_i\subseteq\mathbb{N}_d\times\{i\}$. Let $\mathcal{J}_i$ be the indices of the largest $s$ coefficients
(in absolute value) of $\hat{\mathbf{w}}_{\mathcal{G}_i}$, $\mathcal{I}_i=\mathcal{G}_i^c\cup\mathcal{J}_i$,
$\mathcal{I}=\cup_{i\in\mathbb{N}_m}{\mathcal{I}_i}$ and $\bar{\mathcal{F}}=\cup_{i\in\mathbb{N}_m}{\bar{\mathcal{F}}_i}$.
Then, the following inequalities hold at any stage $\ell\geq1$:
\begin{align}
&\|\Delta\hat{W}^{(\ell)}\|_{2,1}\leq\frac{\left(1+1.5\sqrt{\frac{2\bar{r}}{s}}\right)\sqrt{8m\left(4\|\bar{\Upsilon}_{\mathcal{G}_{(\ell)}^c}\|_F^2+\sum_{(j,i)\in\bar{\mathcal{F}}}(\hat\lambda_{ji}^{(\ell-1)})^2\right)}}{\rho^-_{min}(2\bar{r}+s)},\label{eq:deltawbound}\\
&\|\Delta\hat{W}^{(\ell)}\|_{2,1}\leq\frac{9.1m\lambda\sqrt{\bar{r}}}{\rho^-_{min}(2\bar{r}+s)}.\label{eq:w21error}
\end{align}
\end{lemma}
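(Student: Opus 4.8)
The plan is to convert the first-order optimality of the weighted Lasso subproblem \EqRef{eq:relaxedlasso} into a basic inequality and then translate the resulting prediction-error control into the $\ell_{2,1}$ parameter-error bound through the sparse eigenvalue condition. Since \EqRef{eq:relaxedlasso} decouples across tasks, I would first compare, for each $i$, the objective value at $\hat{\mathbf{w}}_i^{(\ell)}$ against $\bar{\mathbf{w}}_i$; expanding the quadratic loss and using $\bar{\bm{\epsilon}}_i=\frac1n X_i^T(X_i\bar{\mathbf{w}}_i-\mathbf{y}_i)$ gives
\[
\frac{1}{mn}\|X_i\Delta\hat{\mathbf{w}}_i^{(\ell)}\|^2\le\frac{2}{m}\left|\bar{\bm{\epsilon}}_i^T\Delta\hat{\mathbf{w}}_i^{(\ell)}\right|+\sum_{j=1}^d\hat\lambda_{ji}^{(\ell-1)}\left(|\bar w_{ji}|-|\hat w_{ji}^{(\ell)}|\right).
\]
Summing over $i$ and bounding the cross term by $\|\bar\Upsilon\|_{\infty,\infty}\|\Delta\hat{\mathbf{w}}_i^{(\ell)}\|_1$ produces a single inequality whose right-hand side splits into a contribution on $\bar{\mathcal{F}}$ (where $|\bar w|-|\hat w|\le|\Delta\hat w|$) and one on $\bar{\mathcal{F}}^c$. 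The nonstandard feature here is that the weights $\hat\lambda_{ji}^{(\ell-1)}$ are stage-dependent and, being functions only of the row $\ell_1$-norm, are shared across columns, which couples the tasks.

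Next I would invoke the cone condition. Because $\mathcal{G}_{(\ell)}=\bar{\mathcal{H}}^c\cap\{\hat\lambda^{(\ell-1)}=\lambda\}$ satisfies $\bar{\mathcal{F}}_i\cap\mathcal{G}_i=\emptyset$, \LemmaRef{lemma:wgbound} applies with $\hat\lambda_{\mathcal{G}}=\hat\lambda_0=\lambda$; and since \EqRef{eq:lambdacondition} together with \EqRef{eq:epsilonerror} give $2\|\bar\Upsilon\|_{\infty,\infty}\le\lambda/6$, the cone constant is an explicit number bounded by $7/5$. This shows $\|\Delta\hat W^{(\ell)}_{\mathcal{G}}\|_{1,1}=\|\hat W^{(\ell)}_{\mathcal{G}}\|_{1,1}$ (using that $\bar W$ vanishes on $\mathcal{G}$) is dominated by $\|\Delta\hat W^{(\ell)}_{\mathcal{G}^c}\|_{1,1}$, so the error concentrates on $\mathcal{G}^c$. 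I would then carry out the head--tail split dictated by the statement: with $\mathcal{J}_i$ the top-$s$ coordinates of $\hat{\mathbf{w}}_{\mathcal{G}_i}$ and $\mathcal{I}_i=\mathcal{G}_i^c\cup\mathcal{J}_i$, the classical sorting inequality bounds the per-column tail by $\|\Delta\hat{\mathbf{w}}_{i,\mathcal{I}_i^c}\|\le\|\Delta\hat{\mathbf{w}}_{i,\mathcal{G}_i}\|_1/\sqrt{s}$, which the cone condition in turn controls.

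The analytic core is converting prediction error into parameter error on $\mathcal{I}$. A key preliminary sub-step is to certify $|\mathcal{I}_i|\le 2\bar r+s$: the true support contributes $|\bar{\mathcal{H}}_i|=\bar r$, the retained tail contributes $s$, and the rows erroneously left unpenalized in the previous stage (those with $\hat\lambda^{(\ell-1)}=0$ yet $\bar w=0$) must be bounded by $\bar r$, which is where the large-threshold condition \EqRef{eq:thetacondition} enters. Granting this, Definition~\ref{def:sparseeigenvalue} yields $\frac1n\|X_i\Delta\hat{\mathbf{w}}_{i,\mathcal{I}_i}\|^2\ge\rho^-_{min}(2\bar r+s)\|\Delta\hat{\mathbf{w}}_{i,\mathcal{I}_i}\|^2$. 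I would then split $\Delta\hat{\mathbf{w}}_i$ into its restriction to $\mathcal{I}_i$ and the tail, control the tail with $\rho^+_{max}$, and crucially use the sub-linear eigenvalue-growth inequality \EqRef{eq:eigenvalueineq} to absorb the cross-terms; this is exactly where the factor $1+1.5\sqrt{2\bar r/s}$ is generated. Solving the resulting quadratic inequality for $\|\Delta\hat W^{(\ell)}_{\mathcal{I}}\|_{2,1}$ and adding back the tail via $\|\Delta\hat W^{(\ell)}\|_{2,1}\le\|\Delta\hat W^{(\ell)}_{\mathcal{I}}\|_{2,1}+\|\Delta\hat W^{(\ell)}_{\mathcal{I}^c}\|_{2,1}$ delivers \EqRef{eq:deltawbound}. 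Finally, \EqRef{eq:w21error} follows by specialization: I would bound $\sum_{(j,i)\in\bar{\mathcal{F}}}(\hat\lambda^{(\ell-1)}_{ji})^2\le\bar r m\lambda^2$ and $\|\bar\Upsilon_{\mathcal{G}_{(\ell)}^c}\|_F^2$ via the Frobenius estimate \EqRef{eq:epsilonHerror}, after which \EqRef{eq:lambdacondition} makes the $\lambda$-term dominate and collapses every constant into the single factor $9.1$.

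I expect the \textbf{main obstacle} to be reconciling the column-wise sparse-eigenvalue estimates, which live naturally on $\mathbb{R}^d$ vectors per task, with the row-wise $\ell_{2,1}$ error norm that mixes columns inside each row: controlling the full-vector quadratic $\frac1n\|X_i\Delta\hat{\mathbf{w}}_i\|^2$ by its restriction while paying only the sub-linear penalty permitted by \EqRef{eq:eigenvalueineq}, \emph{uniformly over tasks} through $\rho^-_{min}$ and $\rho^+_{max}$, and tying this to the bound on erroneously unpenalized rows. This is precisely the step that genuinely extends the single-task multi-stage analysis of \citet{zhang2010analysis,zhang2012multi} to the coupled, row-sparse multi-task setting.
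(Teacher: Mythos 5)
Your overall architecture (basic inequality from the weighted-Lasso optimality, cone condition via \LemmaRef{lemma:wgbound} with the $7/5$ constant, top-$s$ head--tail split, restricted-eigenvalue step, then specialization to get \EqRef{eq:w21error}) parallels the paper's, but the step you explicitly ask to be ``granted'' is precisely the part that cannot be granted: it is where the whole proof becomes an induction over stages. A row $j$ outside the true support that is left unpenalized at stage $\ell$ satisfies $\|(\hat{\mathbf{w}}^{(\ell-1)})^j\|_1\geq\theta$, hence carries row-error at least $\theta$ in $\ell_1$-norm; to conclude that there are at most $\bar{r}$ such rows you need a parameter-error bound on $\hat{W}^{(\ell-1)}$ --- namely \EqRef{eq:w21error}, the second conclusion of the very lemma being proved, applied at the \emph{previous} stage, combined with \EqRef{eq:thetacondition}. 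The paper resolves this circularity by induction: at $\ell=1$ one has $\mathcal{G}_{(1)}^c=\bar{\mathcal{H}}$ exactly (all initial weights equal $\lambda$), and the stage-$\ell$ bound \EqRef{eq:w21error} plus \EqRef{eq:thetacondition} gives $|\mathcal{G}_{(\ell+1)}^c\setminus\bar{\mathcal{H}}|\leq\bar{r}m$, hence $k_{\ell+1}\leq2\bar{r}$. Without making this interleaved induction explicit, your sparsity certificate $|\mathcal{I}_i|\leq2\bar{r}+s$ has no proof, and everything downstream of it is unsupported.

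There is a second genuine flaw in your route to \EqRef{eq:w21error}: you propose to bound $\|\bar{\Upsilon}_{\mathcal{G}_{(\ell)}^c}\|_F^2$ by the Frobenius estimate \EqRef{eq:epsilonHerror}, but that estimate only covers $\|\bar{\Upsilon}_{\bar{\mathcal{H}}}\|_F^2$ (and $\mathcal{G}_{(\ell)}^c$ is strictly larger than $\bar{\mathcal{H}}$ in general); moreover its right-hand side $u=m\sigma^2\rho^+_{max}(\bar{r})(7.4\bar{r}+2.7\ln(2/\eta))/n$ is \emph{not} dominated by $m\lambda^2\bar{r}$ under \EqRef{eq:lambdacondition} alone (consider $\bar{r}\gg\ln(dm)$ with $\rho^+_{max}(\bar{r})\gg\rho^+_{max}(1)$), so ``the $\lambda$-term dominates'' fails. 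This matters because \EqRef{eq:w21error} must be a pure-$\lambda$ bound: it feeds both the geometrically decaying term of \ThemRef{theorem:mainbound} and the induction described above. The paper instead uses the entrywise bound $\|\bar{\Upsilon}\|_{\infty,\infty}\leq\lambda/12$ (from \EqRef{eq:lambdacondition} and \EqRef{eq:epsilonerror}) times the cardinality $|\mathcal{G}_{(\ell)}^c|\leq2\bar{r}m$ supplied by the induction hypothesis. Two lesser points: the factor $1+1.5\sqrt{2\bar{r}/s}$ actually arises from the sorting-plus-cone head--tail inequality $\|\Delta\hat{W}\|_{2,1}\leq(1+1.5\sqrt{2\bar{r}/s})\|\Delta\hat{W}_{\mathcal{I}}\|_{2,1}$, not from absorbing cross-terms via \EqRef{eq:eigenvalueineq} (that step produces the constant $0.25$ in the lower bound); and the paper sidesteps your quadratic inequality entirely by testing the KKT identity against the restricted vector $\Delta\hat{\mathbf{w}}_{\mathcal{I}_i}$, yielding a bound linear in $\|\Delta\hat{\mathbf{w}}_{\mathcal{I}_i}\|$ --- your objective-comparison route is a legitimate alternative in principle, but it needs its own cross-term control and will not reproduce these constants.
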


\LemmaRef{lemma:deltawbound} is established based on \LemmaRef{lemma:wgbound},
by considering the relationship between \EqRef{eq:lambdacondition} and \EqRef{eq:epsilonerror},
and the specific definition of $\mathcal{G}=\mathcal{G}_{(\ell)}$. \EqRef{eq:deltawbound} provides a parameter estimation error bound in terms of $\ell_{2,1}$-norm by $\|\bar{\Upsilon}_{\mathcal{G}_{(\ell)}^c}\|_F^2$
and the regularization parameters $\hat\lambda_{ji}^{(\ell-1)}$ (see the definition of $\hat\lambda_{ji}~(\hat\lambda_{ji}^{(\ell-1)})$ in \LemmaRef{lemma:wgbound}).
This is the result directly used in the proof of \ThemRef{theorem:mainbound}.
\EqRef{eq:w21error} states that the error bound is upper bounded in terms of $\lambda$, the right-hand side of which constitutes the shrinkage part of the error bound in \EqRef{eq:estimatebound}.

\begin{lemma}\label{lemma:lambdadecomp}
Let $\hat\lambda_{ji}=\lambda I\left(\|\hat{\mathbf{w}}^j\|_1<\theta,j\in\mathbb{N}_d\right),\forall i\in\mathbb{N}_m$ with some $\hat{W}\in\mathbb{R}^{d\times m}$.
$\bar{\mathcal{H}}\supseteq\bar{\mathcal{F}}$ is defined in \LemmaRef{lemma:wepsilonerror}. Then under the condition of \EqRef{eq:noiselevel}, we have:
\begin{align}
&\sum_{(j,i)\in\bar{\mathcal{F}}}\hat\lambda_{ji}^2\leq\sum_{(j,i)\in\bar{\mathcal{H}}}\hat\lambda_{ji}^2\leq m\lambda^2\|\bar{W}_{\bar{\mathcal{H}}}-\hat{W}_{\bar{\mathcal{H}}}\|_{2,1}^2/\theta^2.\nonumber
\end{align}
\end{lemma}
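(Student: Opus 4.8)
The plan is to handle the two inequalities separately. The first, $\sum_{(j,i)\in\bar{\mathcal{F}}}\hat\lambda_{ji}^2\leq\sum_{(j,i)\in\bar{\mathcal{H}}}\hat\lambda_{ji}^2$, is immediate: by the construction of $\bar{\mathcal{H}}$ in \LemmaRef{lemma:wepsilonerror} we have $\bar{\mathcal{F}}\subseteq\bar{\mathcal{H}}$, and since every summand $\hat\lambda_{ji}^2$ is nonnegative, enlarging the index set can only increase the sum. The substance of the lemma lies in the second inequality, which I would establish by unpacking the relevant definitions and reducing everything to a single row.

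The key structural observation I would exploit is that $\hat\lambda_{ji}=\lambda I(\|\hat{\mathbf{w}}^j\|_1<\theta)$ does not depend on the task index $i$, while $\bar{\mathcal{H}}$ consists of every entry in the nonzero rows of $\bar{W}$. Writing $\mathcal{R}=\{j:\|\bar{\mathbf{w}}^j\|_1\neq0\}$ for the set of such rows, so that $\bar{\mathcal{H}}=\mathcal{R}\times\mathbb{N}_m$, the left-hand side collapses to $\sum_{(j,i)\in\bar{\mathcal{H}}}\hat\lambda_{ji}^2=m\lambda^2\sum_{j\in\mathcal{R}}I(\|\hat{\mathbf{w}}^j\|_1<\theta)$. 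On the other side, expanding the $\ell_{2,1}$-norm according to its definition and using that rows outside $\mathcal{R}$ contribute nothing gives $\|\bar{W}_{\bar{\mathcal{H}}}-\hat{W}_{\bar{\mathcal{H}}}\|_{2,1}^2=\sum_{j\in\mathcal{R}}\|\bar{\mathbf{w}}^j-\hat{\mathbf{w}}^j\|_1^2$. Cancelling the common factor $m\lambda^2$, the claim therefore reduces to the per-row statement $I(\|\hat{\mathbf{w}}^j\|_1<\theta)\leq\|\bar{\mathbf{w}}^j-\hat{\mathbf{w}}^j\|_1^2/\theta^2$ for each $j\in\mathcal{R}$.

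To finish I would prove this per-row inequality by a case split on the indicator. If $\|\hat{\mathbf{w}}^j\|_1\geq\theta$ the indicator vanishes and the inequality holds trivially since the right-hand side is nonnegative. If $\|\hat{\mathbf{w}}^j\|_1<\theta$, then because $j\in\mathcal{R}$ there exists some $(j,i)\in\bar{\mathcal{F}}$, so the assumption \EqRef{eq:noiselevel} yields $\|\bar{\mathbf{w}}^j\|_1\geq2\theta$; the reverse triangle inequality then gives $\|\bar{\mathbf{w}}^j-\hat{\mathbf{w}}^j\|_1\geq\|\bar{\mathbf{w}}^j\|_1-\|\hat{\mathbf{w}}^j\|_1\geq2\theta-\theta=\theta$, whence the right-hand side is at least $1$ and matches the indicator. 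Summing the per-row bound over $j\in\mathcal{R}$ and restoring the factor $m\lambda^2$ then delivers the second inequality.

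I do not anticipate a genuine obstacle: once the definition of $\bar{\mathcal{H}}$ and the $\ell_{2,1}$-norm are made explicit and the task-independence of $\hat\lambda_{ji}$ is noted, the argument is elementary. The only point requiring care is the clean reduction to a single row, together with the invocation of \EqRef{eq:noiselevel} — which is exactly the hypothesis ensuring that a true nonzero row stays far enough from the threshold $\theta$ for the reverse triangle inequality to take effect.
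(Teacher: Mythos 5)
Your proposal is correct and follows essentially the same route as the paper's proof: both reduce the claim to the per-row indicator bound $I\left(\|\hat{\mathbf{w}}^j\|_1<\theta\right)\leq\|\bar{\mathbf{w}}^j-\hat{\mathbf{w}}^j\|_1/\theta$, obtained from \EqRef{eq:noiselevel} and the reverse triangle inequality, and then sum over $\bar{\mathcal{H}}$ (your explicit accounting of the factor $m$ via the task-independence of $\hat\lambda_{ji}$ is merely a more detailed rendering of a step the paper leaves implicit).
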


\LemmaRef{lemma:lambdadecomp} establishes an upper bound of $\sum_{(j,i)\in\bar{\mathcal{F}}}\hat\lambda_{ji}^2$ by
$\|\bar{W}_{\bar{\mathcal{H}}}-\hat{W}_{\bar{\mathcal{H}}}\|_{2,1}^2$, which is critical for building the recursive relationship between
$\|\hat{W}^{(\ell)}-\bar{W}\|_{2,1}$ and $\|\hat{W}^{(\ell-1)}-\bar{W}\|_{2,1}$ in the proof of \ThemRef{theorem:mainbound}. This recursive relation is crucial for the
shrinkage part of the error bound in \EqRef{eq:estimatebound}.

\subsection{Proof of \ThemRef{theorem:mainbound}}\label{sec:proof.mainbound}
\begin{proof}
For notational simplicity, we denote the right-hand side of \EqRef{eq:epsilonHerror} as:
\begin{align}\label{eq:uvalue}
u=m\sigma^2\rho^+_{max}(\bar{r})(7.4\bar{r}+2.7\ln(2/\eta))/n.
\end{align}
Based on $\bar{\mathcal{H}}\subseteq\mathcal{G}_{(\ell)}^c$, \LemmaRef{lemma:wepsilonerror} and \EqRef{eq:lambdacondition}, the followings hold with probability larger than $1-\eta$:
\begin{align}
\|\bar{\Upsilon}_{\mathcal{G}_{(\ell)}^c}\|_F^2 &= \|\bar{\Upsilon}_{\bar{\mathcal{H}}}\|_F^2+\|\bar{\Upsilon}_{\mathcal{G}_{(\ell)}^c\setminus\bar{\mathcal{H}}}\|_F^2\nonumber\\
&\leq u+|\mathcal{G}_{(\ell)}^c\setminus\bar{\mathcal{H}}|\|\bar{\Upsilon}\|_{\infty,\infty}^2\nonumber\\
&\leq u+\lambda^2|\mathcal{G}_{(\ell)}^c\setminus\bar{\mathcal{H}}|/144\nonumber\\
&\leq u+(1/144)m\lambda^2\theta^{-2}\|\hat{W}^{(\ell-1)}_{\mathcal{G}_{(\ell)}^c\setminus\bar{\mathcal{H}}}-\bar{W}_{\mathcal{G}_{(\ell)}^c\setminus\bar{\mathcal{H}}}\|_{2,1}^2,\label{eq:epsilondecomp}
\end{align}
where the last inequality follows from
\begin{align}
\forall &(j,i)\in\mathcal{G}_{(\ell)}^c\setminus\bar{\mathcal{H}},\|(\hat{\mathbf{w}}^{(\ell-1)})^j\|_1^2/\theta^2=\|(\hat{\mathbf{w}}^{(\ell-1)})^j-\bar{\mathbf{w}}^j\|_1^2/\theta^2\geq1\nonumber\\
\Rightarrow|&\mathcal{G}_{(\ell)}^c\setminus\bar{\mathcal{H}}|\leq m\theta^{-2}\|\hat{W}^{(\ell-1)}_{\mathcal{G}_{(\ell)}^c\setminus\bar{\mathcal{H}}}-\bar{W}_{\mathcal{G}_{(\ell)}^c\setminus\bar{\mathcal{H}}}\|_{2,1}^2.\nonumber
\end{align}
According to \EqRef{eq:deltawbound}, we have:
\begin{align}
&\|\hat{W}^{(\ell)}-\bar{W}\|_{2,1}^2=\|\Delta\hat{W}^{(\ell)}\|_{2,1}^2\nonumber\\
&\leq\frac{8m\left(1+1.5\sqrt{\frac{2\bar{r}}{s}}\right)^2\left(4\|\bar{\Upsilon}_{\mathcal{G}_{(\ell)}^c}\|_F^2+\sum_{(j,i)\in\bar{\mathcal{F}}}(\hat\lambda_{ji}^{(\ell-1)})^2\right)}{(\rho^-_{min}(2\bar{r}+s))^2}\nonumber\\
&\leq\frac{78m\left(4u+(37/36)m\lambda^2\theta^{-2}\left\|\hat{W}^{(\ell-1)}-\bar{W}\right\|_{2,1}^2\right)}{(\rho^-_{min}(2\bar{r}+s))^2}\nonumber\\
&\leq\frac{312m u}{(\rho^-_{min}(2\bar{r}+s))^2}+0.8\left\|\hat{W}^{(\ell-1)}-\bar{W}\right\|_{2,1}^2\nonumber\\
&\leq\cdots\leq0.8^{\ell}\left\|\hat{W}^{(0)}-\bar{W}\right\|_{2,1}^2+\frac{312m u}{(\rho^-_{min}(2\bar{r}+s))^2}\frac{1-0.8^{\ell}}{1-0.8}\nonumber\\
&\leq0.8^{\ell}\frac{9.1^2m^2\lambda^2\bar{r}}{(\rho^-_{min}(2\bar{r}+s))^2}+\frac{1560m u}{(\rho^-_{min}(2\bar{r}+s))^2}.\nonumber
\end{align}
In the above derivation, the first inequality is due to \EqRef{eq:deltawbound}; the second inequality is due
to the assumption $s\geq\bar{r}$ in \ThemRef{theorem:mainbound}, \EqRef{eq:epsilondecomp} and \LemmaRef{lemma:lambdadecomp}; the third inequality is due to \EqRef{eq:thetacondition}; the last inequality follows from \EqRef{eq:w21error} and $1-0.8^{\ell}\leq 1~(\ell\geq1)$.
Thus, following the inequality $\sqrt{a+b}\leq\sqrt{a}+\sqrt{b}~(\forall a,b\geq0)$, we obtain:
\begin{align}
&\|\hat{W}^{(\ell)}-\bar{W}\|_{2,1}\leq0.8^{\ell/2}\frac{9.1m\lambda\sqrt{\bar{r}}}{\rho^-_{min}(2\bar{r}+s)}+\frac{39.5\sqrt{mu}}{\rho^-_{min}(2\bar{r}+s)}.\nonumber
\end{align}
Substituting \EqRef{eq:uvalue} into the above inequality, we verify \ThemRef{theorem:mainbound}.
\end{proof}
\begin{remark}
The assumption $s\geq\bar{r}$ used in the above proof indicates that at each stage,
the zero entries of $\hat{W}^{(\ell)}$ should be greater than $m\bar{r}$
(see definition of $s$ in \LemmaRef{lemma:deltawbound}). This requires the solution obtained by \AlgRef{alg:msmtfl}
at each stage is sparse, which is consistent with the sparsity of $\bar{W}$ in \AssumeRef{assumption:subgaussian}.
\end{remark}

\section{Experiments}\label{sec:experiments}
In this section, we present empirical studies on both synthetic and real-world data sets.
In the synthetic data experiments, we present the performance of the MSMTFL algorithm in terms of the parameter estimation error.
In the real-world data experiments, we show the performance of the
MSMTFL algorithm in terms of the prediction error.

\subsection{Competing Algorithms}\label{sec:experiments.algorithm}
We present the empirical studies by comparing our proposed MSMTFL algorithm with
three competing multi-task feature learning algorithms:
$\ell_1$-norm multi-task feature learning algorithm (Lasso), $\ell_{1,2}$-norm multi-task feature
learning algorithm (L1,2) \citep{obozinski2006multi} and dirty model
multi-task feature learning algorithm (DirtyMTL) \citep{jalali2010dirty}.
In our experiments, we employ the quadratic loss function in \EqRef{eq:lossfunction} for
all the compared algorithms.

\subsection{Synthetic Data Experiments}\label{sec:experiments.synthetic}
We generate synthetic data by setting the number of tasks as
$m$ and each task has $n$ samples which are of dimensionality $d$;
each element of the data matrix $X_i\in\mathbb{R}^{n\times d}~(i\in\mathbb{N}_m)$ for the $i$-th task
is sampled i.i.d. from the Gaussian distribution $N(0,1)$ and we then normalize all columns to length $1$;
each entry of the underlying true weight $\bar{W}\in\mathbb{R}^{d\times m}$ is sampled i.i.d. from the uniform
distribution in the interval $[-10,10]$; we randomly set $90\%$ rows of $\bar{W}$ as zero vectors and
$80\%$ elements of the remaining nonzero entries as zeros; each entry of the noise $\bm{\delta}_i\in\mathbb{R}^{n}$
is sampled i.i.d. from the Gaussian distribution $N(0,\sigma^2)$; the responses are computed as $\mathbf{y}_i=X_i\bar{\mathbf{w}}_i+\bm{\delta}_i~(i\in\mathbb{N}_m)$.

We first report the averaged parameter estimation error $\|\hat{W}-\bar{W}\|_{2,1}$ vs. Stage ($\ell$) plots for MSMTFL (\FigRef{fig:multistage_illustration}).
We observe that the error decreases as $\ell$ increases, which shows the advantage of our proposed algorithm over Lasso.
This is consistent with the theoretical result in \ThemRef{theorem:mainbound}.
Moreover, the parameter estimation error decreases quickly and converges in a few stages.

We then report the averaged parameter estimation error $\|\hat{W}-\bar{W}\|_{2,1}$ in comparison with four algorithms in different parameter settings (\FigRef{fig:toy_W21error}).
For a fair comparison, we compare the smallest estimation errors of the four algorithms in all the parameter settings \citep{zhang2009some,zhang2010analysis}. As expected,
the parameter estimation error of the MSMTFL algorithm is the smallest among the four algorithms. This empirical result demonstrates the effectiveness of the MSMTFL algorithm. We also have the following observations: (a) When $\lambda$ is large enough, all four algorithms tend
to have the same parameter estimation error. This is reasonable, because the solutions $\hat{W}$'s obtained by the four algorithms are all zero matrices, when $\lambda$ is very large.
(b) The performance of the MSMTFL algorithm is similar for different $\theta$'s, when $\lambda$ exceeds a certain value.

\begin{figure}[!ht]\vspace{-0.0cm}
\begin{minipage}[c]{1.0\linewidth}
\centering
\includegraphics[width=0.5\linewidth]{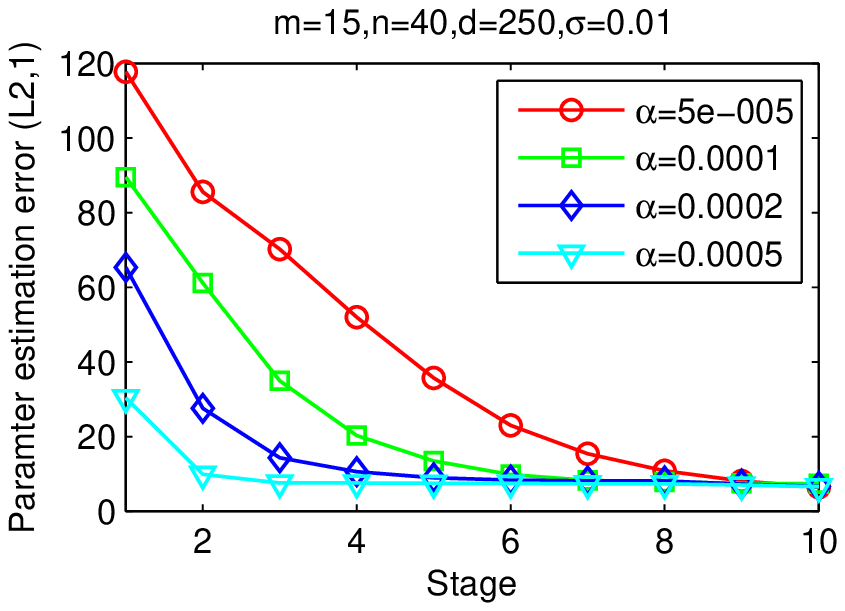}
\includegraphics[width=0.5\linewidth]{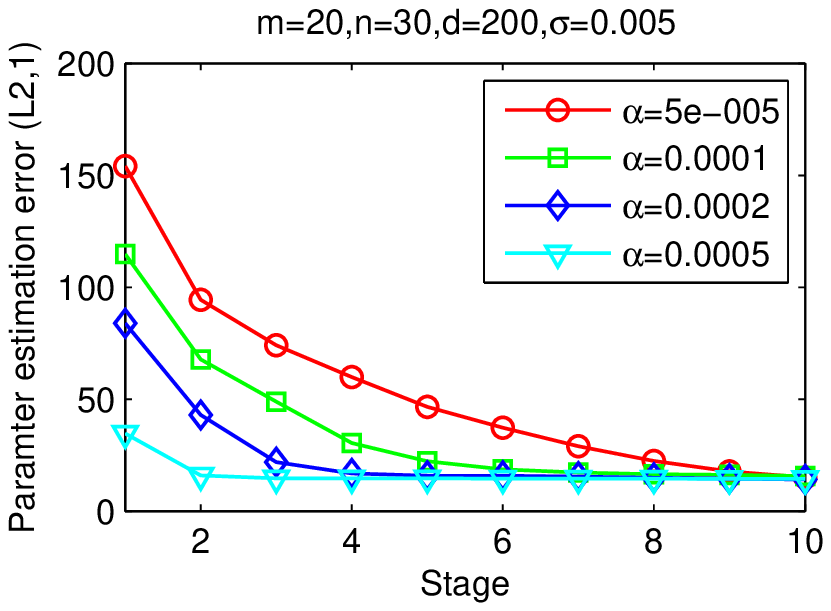}
\includegraphics[width=0.5\linewidth]{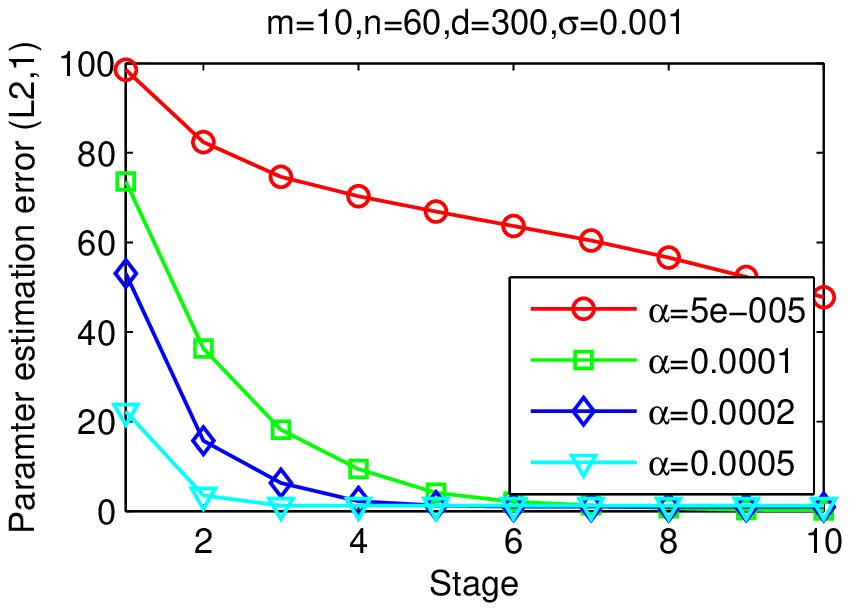}
\end{minipage}
\vspace{-0.0cm}\caption{Averaged parameter estimation error $\|\hat{W}-\bar{W}\|_{2,1}$ vs. Stage ($\ell$)
plots for MSMTFL on the synthetic data set (averaged over 10 runs). Here we set $\lambda=\alpha\sqrt{\ln(dm)/n},~\theta=50m\lambda$.
Note that $\ell=1$ corresponds to Lasso; the results show the stage-wise improvement over Lasso.}
\label{fig:multistage_illustration}\vspace{-0.0cm}
\end{figure}

\begin{figure}[!ht]\vspace{-0.0cm}
\begin{minipage}[c]{1.0\linewidth}
\hspace{3.5cm}\includegraphics[width=0.5\linewidth]{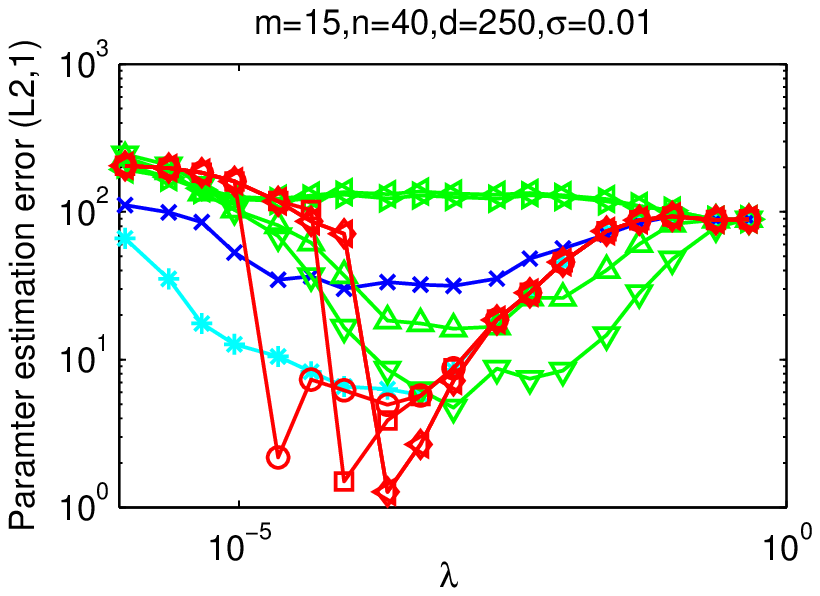}
\end{minipage}
\begin{minipage}[c]{1.0\linewidth}
\hspace{3.5cm}\includegraphics[width=0.5\linewidth]{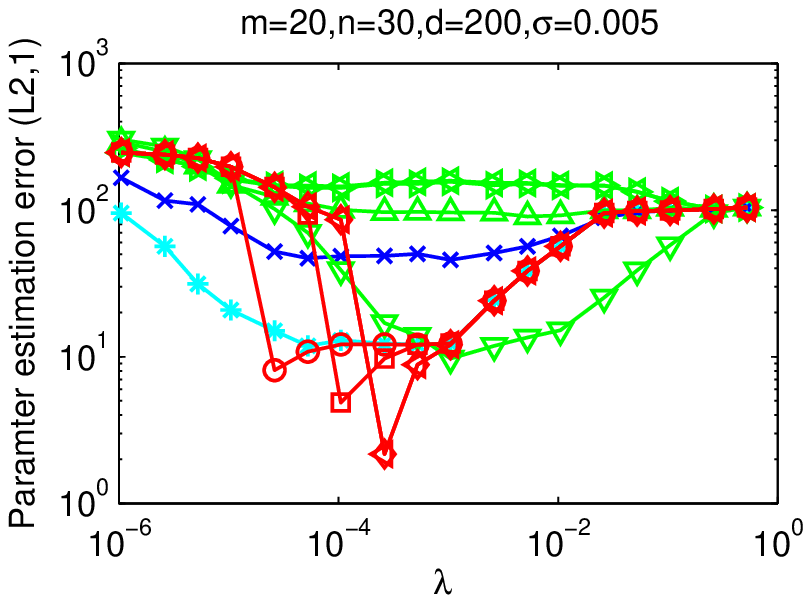}
\end{minipage}
\begin{minipage}[c]{1.0\linewidth}
\hspace{3.5cm}\includegraphics[width=0.7\linewidth]{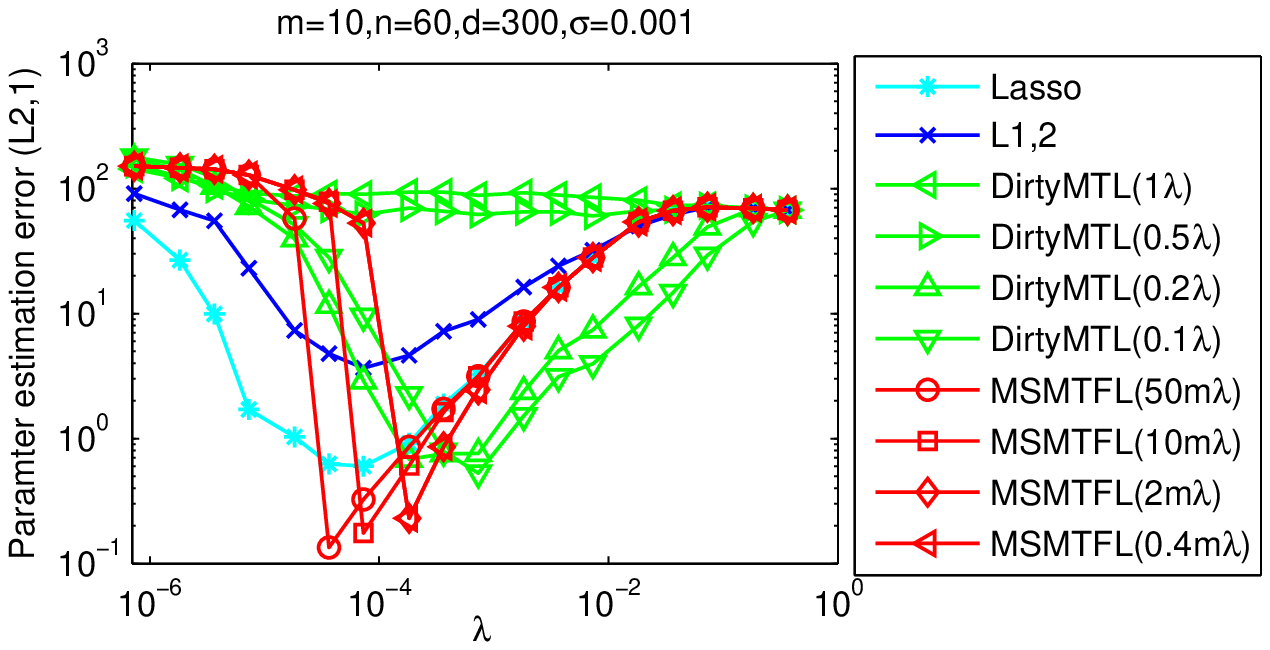}
\end{minipage}
\vspace{-0.0cm}\caption{Averaged parameter estimation error $\|\hat{W}-\bar{W}\|_{2,1}$ vs. $\lambda$
plots on the synthetic data set (averaged over 10 runs).  MSMTFL has
the smallest parameter estimation error among the four algorithms. Both DirtyMTL and MSMTFL have two parameters; we set
$\lambda_s/\lambda_b=1,0.5,0.2,0.1$ for DirtyMTL ($1/m\leq\lambda_s/\lambda_b\leq1$ was adopted in
\citet{jalali2010dirty}) and $\theta/\lambda=50m,10m,2m,0.4m$ for MSMTFL.} \label{fig:toy_W21error}\vspace{-0.0cm}
\end{figure}

\subsection{Real-World Data Experiments}\label{sec:experiments.real}
We conduct experiments on two real-world data sets: MRI and Isolet data sets.

The MRI data set is collected from the ANDI
database,
which contains 675 patients' MRI data preprocessed
using FreeSurfer\footnote{\url{www.loni.ucla.edu/ADNI/}}. The MRI
data include 306 features and the response (target) is the Mini Mental
State Examination (MMSE) score coming from 6 different time points: M06,
M12, M18, M24, M36, and M48. We remove the samples which fail the
MRI quality controls and have missing entries. Thus, we have 6 tasks
with each task corresponding to
a time point and the sample sizes corresponding to 6 tasks are
648, 642, 293, 569, 389 and 87, respectively.

The Isolet data set\footnote{\url{www.zjucadcg.cn/dengcai/Data/data.html}}
is collected from 150 speakers who
speak the name of each English letter of the alphabet
twice. Thus, there are 52 samples from each speaker. The speakers are
grouped into 5 subsets which respectively include 30 similar speakers,
and the subsets are named Isolet1, Isolet2, Isolet3, Isolet4, and Isolet5.
Thus, we naturally have 5 tasks with
each task corresponding to a subset. The 5 tasks respectively
have 1560, 1560, 1560, 1558, and 1559 samples\footnote{Three samples are historically missing.},
where each sample includes 617 features and the response is the English letter label (1-26).

\begin{table*}[ht]\vspace{-0.0cm}
\caption{Comparison of four multi-task feature learning algorithms on
the MRI data set in terms of averaged nMSE and
aMSE (standard deviation), which are averaged over 10 random splittings.}\vspace{-0.0cm} \label{tab:MRIdata}
\begin{center}
\small{
\begin{tabular}{c|c||c c c c } \hline\hline
measure &traning ratio &Lasso &L1,2 &DirtyMTL &MSMTFL\\
\hline
\multirow{4}{*}{nMSE}&$0.15$ &0.6651(0.0280) &0.6633(0.0470) &0.6224(0.0265) &\textbf{0.5539(0.0154)}\\
\multirow{4}{*}{}&$0.20$ &0.6254(0.0212) &0.6489(0.0275) &0.6140(0.0185) &\textbf{0.5542(0.0139)}\\
\multirow{4}{*}{}&$0.25$ &0.6105(0.0186) &0.6577(0.0194) &0.6136(0.0180) &\textbf{0.5507(0.0142)}\\
\hline\hline
\multirow{4}{*}{aMSE}&$0.15$ &0.0189(0.0008) &0.0187(0.0010) &0.0172(0.0006) &\textbf{0.0159(0.0004)}\\
\multirow{4}{*}{}&$0.20$ &0.0179(0.0006) &0.0184(0.0005) &0.0171(0.0005) &\textbf{0.0161(0.0004)}\\
\multirow{4}{*}{}&$0.25$ &0.0172(0.0009) &0.0183(0.0006) &0.0167(0.0008) &\textbf{0.0157(0.0006)}\\
\hline\hline
\end{tabular}}
\end{center}\vspace{-0.0cm}
\end{table*}

In the experiments, we treat the MMSE and letter labels as the regression values
for the MRI data set and the Isolet data set, respectively. For both data sets, we randomly
extract the training samples from each task with different training
ratios ($15\%,20\%$ and $25\%$) and use the rest of samples to form the test set.
We evaluate the four multi-task feature learning algorithms in terms of normalized mean squared error (nMSE) and averaged
means squared error (aMSE), which are commonly used in multi-task learning problems \citep{zhang2010multi,zhou2011clustered,gong2012robust}.
For each training ratio, both nMSE and aMSE are averaged over 10 random splittings of training and test
sets and the standard deviation is also shown.
All parameters of the four algorithms are tuned via 3-fold cross validation.

\begin{figure}[!ht]\vspace{-0.0cm}
\begin{minipage}[c]{1.0\linewidth}
\centering
\includegraphics[width=.49\linewidth]{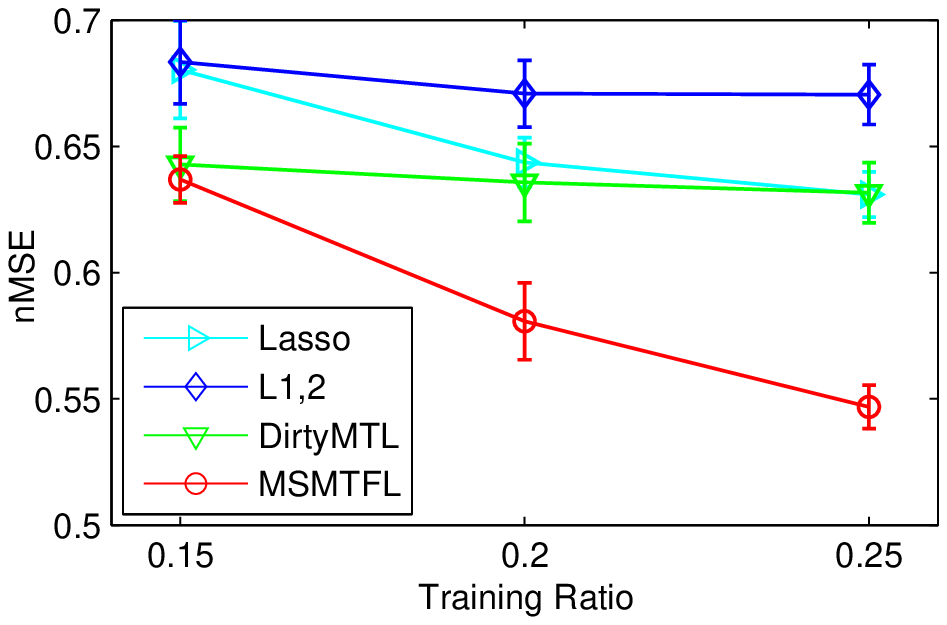}
\includegraphics[width=.49\linewidth]{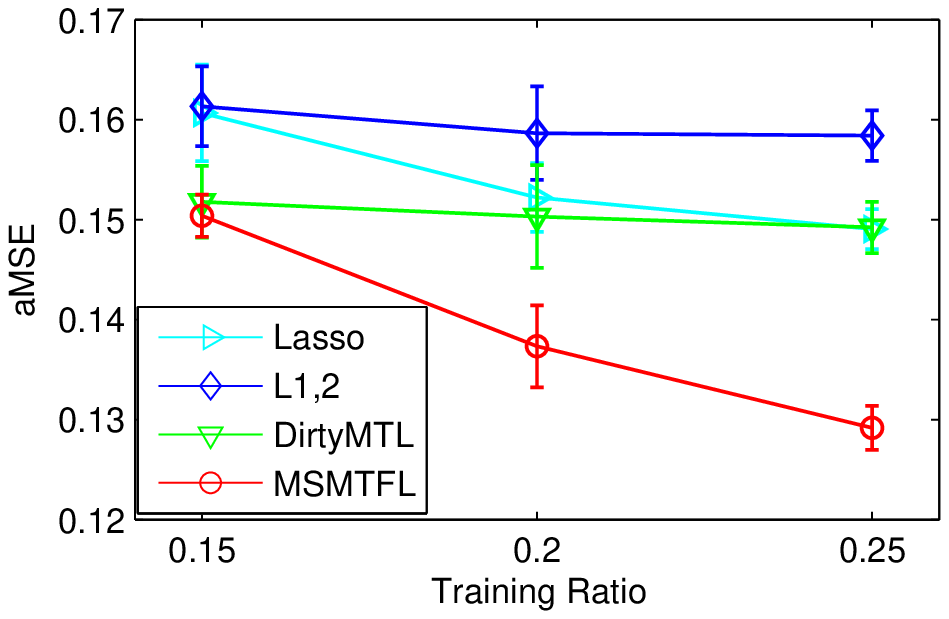}\\
\end{minipage}
\vspace{-0.0cm}\caption{Averaged test error (nMSE and aMSE) vs. training
ratio plots on the Isolet data set. The results are averaged over 10 random splittings.} \label{fig:Isolet_testerror}\vspace{-0.0cm}
\end{figure}

\TabRef{tab:MRIdata} and \FigRef{fig:Isolet_testerror} show the experimental results in terms of
averaged nMSE (aMSE) and the standard deviation. From these results, we observe
that: (a) Our proposed MSMTFL algorithm outperforms all the competing feature learning algorithms on
both data sets, with the smallest regression errors (nMSE and aMSE) as well as the smallest standard deviations.
(b) On the MRI data set, the MSMTFL algorithm performs well even in the case of a small training ratio.
The performance for the $15\%$ training ratio is comparable to that for the $25\%$ training ratio.
(c) On the Isolet data set, when the training ratio increases from $15\%$ to $25\%$, the performance of the MSMTFL algorithm increases and
the superiority of the MSMTFL algorithm over
the other three algorithms is more significant. Our results demonstrate the effectiveness of the proposed algorithm.

\section{Conclusions}\label{sec:conclusions}
In this paper, we propose a non-convex formulation for multi-task feature
learning, which learns the specific features of each task as well as the common features shared among tasks.
The non-convex formulation adopts the capped-$\ell_1$,$\ell_1$ regularizer to better approximate the $\ell_0$-type one than
the commonly used convex regularizer. To solve the non-convex optimization problem, we propose a Multi-Stage Multi-Task Feature Learning (MSMTFL) algorithm
and provide intuitive interpretations from several aspects. We also present a detailed convergence analysis and discuss the reproducibility issue for the proposed algorithm.
Specifically, we show that, under a mild condition, the solution generated by MSMTFL is unique. Although the solution may not be globally optimal, we theoretically show that
it has good performance in terms of the parameter estimation error bound. Experimental results on both synthetic and real-world data sets
demonstrate the effectiveness of our proposed MSMTFL algorithm in comparison with the state of the art multi-task feature
learning algorithms.

In our future work, we will explore the conditions under which
a globally optimal solution of the proposed formulation can be obtained by the MSMTFL algorithm.
We will also focus on a general non-convex regularization
framework for multi-task learning settings (involving different loss functions and non-convex regularization terms)
and derive theoretical bounds.




\newpage

\appendix
\section*{Appendix}
\label{appendix:proof}

In this appendix, we provide detailed proofs for Lemmas \ref{lemma:wepsilonerror} to \ref{lemma:lambdadecomp}.
In our proofs, we use several lemmas (summarized  in part B) from \citet{zhang2010analysis}.

We first introduce some notations used in the proof. Define
\begin{align}
\pi_i(k_i,s_i)=\sup_{\mathbf{v}\in\mathbb{R}^{k_i},\mathbf{u}\in\mathbb{R}^{s_i},\mathcal{I}_i,\mathcal{J}_i}\frac{\mathbf{v}^TA^{(i)}_{\mathcal{I}_i,\mathcal{J}_i}\mathbf{u}\|\mathbf{v}\|}{\mathbf{v}^TA^{(i)}_{\mathcal{I}_i,\mathcal{I}_i}\mathbf{v}\|\mathbf{u}\|_{\infty}},
\end{align}
where $s_i+k_i\leq d$ with $s_i,k_i\geq 1$; $\mathcal{I}_i$ and $\mathcal{J}_i$ are \emph{disjoint} subsets of $\mathbb{N}_{d}$ with $k_i$ and $s_i$ elements respectively (with some abuse of notation, we also let $\mathcal{I}_i$ be a subset of $\mathbb{N}_{d}\times\{i\}$, depending on the context.); $A^{(i)}_{\mathcal{I}_i,\mathcal{J}_i}$ is a sub-matrix of $A_i=n^{-1}X_i^TX_i\in\mathbb{R}^{d\times d}$ with rows indexed by $\mathcal{I}_i$ and columns indexed by $\mathcal{J}_i$.

We let $\mathbf{w}_{\mathcal{I}_i}$ be a $d\times 1$ vector with the $j$-th entry being $w_{ji}$, if $(j,i)\in\mathcal{I}_i$, and $0$, otherwise.
We also let $W_{\mathcal{I}}$ be a $d\times m$ matrix with $(j,i)$-th entry being $w_{ji}$, if $(j,i)\in\mathcal{I}$, and $0$, otherwise.

\section*{A. Proofs of Lemmas \ref{lemma:wepsilonerror} to \ref{lemma:lambdadecomp}}

\section*{A.1. Proof of \LemmaRef{lemma:wepsilonerror}}
\begin{proof}
For the $j$-th entry of $\bar{\bm{\epsilon}}_{i}~(j\in\mathbb{N}_{d})$:
\begin{align}
|\bar{\epsilon}_{ji}|&=\frac{1}{n}\left|\left(\mathbf{x}^{(i)}_j\right)^T(X_i\bar{\mathbf{w}}_i-\mathbf{y}_i)\right|=\frac{1}{n}\left|\left(\mathbf{x}^{(i)}_j\right)^T\bm{\delta}_i\right|,\nonumber
\end{align}
where $\mathbf{x}^{(i)}_j$ is the $j$-th column of $X_i$.
We know that the entries of $\bm{\delta}_i$ are
independent sub-Gaussian random variables, and $\|1/n\mathbf{x}^{(i)}_j\|^2=\|\mathbf{x}^{(i)}_j\|^2/n^2\leq\rho^+_i(1)/n$. According to \LemmaRef{lemma:subgaussianproperty}, we have $\forall t>0$:
\begin{align}
\Pr(|\bar{\epsilon}_{ji}|\geq t)&\leq 2\exp(-nt^2/(2\sigma^2\rho^+_i(1)))\leq 2\exp(-nt^2/(2\sigma^2\rho^+_{max}(1))).\nonumber
\end{align}
Thus we obtain:
\begin{align}
\Pr(\|\bar{\Upsilon}\|_{\infty,\infty}\leq t)\geq 1-2dm\exp(-nt^2/(2\sigma^2\rho^+_{max}(1))).\nonumber
\end{align}
Let $\eta=2dm\exp(-nt^2/(2\sigma^2\rho^+_{max}(1)))$ and we can obtain \EqRef{eq:epsilonerror}.
\EqRef{eq:epsilonHerror} directly follows from \LemmaRef{lemma:epsilonHerror} and the following fact:
\begin{align}
\|\mathbf{x}_i\|^2\leq ay_i\Rightarrow\|X\|_F^2=\sum_{i=1}^m\|\mathbf{x}_i\|^2\leq ma\max_{i\in\mathbb{N}_m}{y_i}.\nonumber
\end{align}
\end{proof}

\section*{A.2 Proof of \LemmaRef{lemma:wgbound}}
\begin{proof}
The optimality condition of \EqRef{eq:relaxedlasso} implies that
\begin{align}
\frac{2}{n}X_i^T(X_i\hat{\mathbf{w}}_i-\mathbf{y}_i)+\hat{\bm{\lambda}}_i\odot\sign(\hat{\mathbf{w}}_i)=\mathbf{0},\nonumber
\end{align}
where $\odot$ denotes the element-wise product; $\sign(\mathbf{w})=[\sign(w_1),\cdots,\sign(w_d)]^T$, where $\sign(w_i)=1$, if $w_i>0$; $\sign(w_i)=-1$, if $w_i<0$;
and $\sign(w_i)\in[-1,1]$, otherwise.
We note that $X_i\hat{\mathbf{w}}_i-\mathbf{y}_i=X_i\hat{\mathbf{w}}_i-X_i\bar{\mathbf{w}}_i+X_i\bar{\mathbf{w}}_i-\mathbf{y}_i$ and we can rewrite the above equation into the following form:
\begin{align}
2A_i\Delta\hat{\mathbf{w}}_i=-2\bar{\bm{\epsilon}}_i-\hat{\bm{\lambda}}_i\odot\sign(\hat{\mathbf{w}}_i).\nonumber
\end{align}
Thus, for all $\mathbf{v}\in\mathbb{R}^{d}$, we have
\begin{align}\label{eq:innerprodeq}
2\mathbf{v}^TA_i\Delta\hat{\mathbf{w}}_i=-2\mathbf{v}^T\bar{\bm{\epsilon}}_i-\sum_{j=1}^d\hat{\lambda}_{ji}v_j\sign(\hat{w}_{ji}).
\end{align}
Letting $\mathbf{v}=\Delta\hat{\mathbf{w}}_i$ and noticing that $\Delta\hat{w}_{ji}=\hat{w}_{ji}$ for $(j,i)\notin\bar{\mathcal{F}}_i,i\in\mathbb{N}_m$, we obtain
\begin{align}
0&\leq 2\Delta\hat{\mathbf{w}}_i^TA_i\Delta\hat{\mathbf{w}}_i=-2\Delta\hat{\mathbf{w}}_i^T\bar{\bm{\epsilon}}_i-\sum_{j=1}^d\hat\lambda_{ji}\Delta\hat{w}_{ji}\sign(\hat{w}_{ji})\nonumber\\
&\leq 2\|\Delta\hat{\mathbf{w}}_i\|_1\|\bar{\bm{\epsilon}}_i\|_\infty-\sum_{(j,i)\in\bar{\mathcal{F}}_i}\hat\lambda_{ji}\Delta\hat{w}_{ji}\sign(\hat{w}_{ji})-\sum_{(j,i)\notin\bar{\mathcal{F}}_i}\hat\lambda_{ji}\Delta\hat{w}_{ji}\sign(\hat{w}_{ji})\nonumber\\
&\leq 2\|\Delta\hat{\mathbf{w}}_i\|_1\|\bar{\bm{\epsilon}}_i\|_\infty+\sum_{(j,i)\in\bar{\mathcal{F}}_i}\hat\lambda_{ji}|\Delta\hat{w}_{ji}|-\sum_{(j,i)\notin\bar{\mathcal{F}}_i}\hat\lambda_{ji}|\hat{w}_{ji}|\nonumber\\
&\leq 2\|\Delta\hat{\mathbf{w}}_i\|_1\|\bar{\bm{\epsilon}}_i\|_\infty+\sum_{(j,i)\in\bar{\mathcal{F}}_i}\hat\lambda_{ji}|\Delta\hat{w}_{ji}|-\sum_{(j,i)\in\mathcal{G}_i}\hat\lambda_{ji}|\hat{w}_{ji}|\nonumber\\
&\leq 2\|\Delta\hat{\mathbf{w}}_i\|_1\|\bar{\bm{\epsilon}}_i\|_\infty+\sum_{(j,i)\in\bar{\mathcal{F}}_i}\hat\lambda_{0i}|\Delta\hat{w}_{ji}|-\sum_{(j,i)\in\mathcal{G}_i}\hat\lambda_{\mathcal{G}_i}|\hat{w}_{ji}|\nonumber\\
&=\sum_{(j,i)\in\mathcal{G}_i}(2\|\bar{\bm{\epsilon}}_i\|_\infty-\hat\lambda_{\mathcal{G}_i})|\hat{w}_{ji}|+\sum_{(j,i)\notin\bar{\mathcal{F}_i}\cup\mathcal{G}_i}2\|\bar{\bm{\epsilon}}_i\|_\infty|\hat{w}_{ji}|+\sum_{(j,i)\in\bar{\mathcal{F}}_i}(2\|\bar{\bm{\epsilon}}_i\|_\infty+\hat\lambda_{0i})|\Delta\hat{w}_{ji}|.\nonumber
\end{align}
The last equality above is due to $\mathbb{N}_d\times\{i\}=\mathcal{G}_i\cup(\bar{\mathcal{F}}_i\cup\mathcal{G}_i)^c\cup\bar{\mathcal{F}}_i$ and $\Delta\hat{w}_{ji}=\hat{w}_{ji},\forall (j,i)\notin\bar{\mathcal{F}}_i\supseteq\mathcal{G}_i$.
Rearranging the above inequality and noticing that $2\|\bar{\bm{\epsilon}}_i\|_\infty<\hat\lambda_{\mathcal{G}_i}\leq\hat\lambda_{0i}$, we obtain:
\begin{align}\label{eq:wgboundvec}
\sum_{(j,i)\in\mathcal{G}_i}|\hat{w}_{ji}|&\leq\frac{2\|\bar{\bm{\epsilon}}_i\|_\infty}{\hat\lambda_{\mathcal{G}_i}-2\|\bar{\bm{\epsilon}}_i\|_\infty}\sum_{(j,i)\notin\bar{\mathcal{F}}_i\cup\mathcal{G}_i}|\hat{w}_{ji}|+\frac{2\|\bar{\bm{\epsilon}}_i\|_\infty+\hat\lambda_{0i}}{\hat\lambda_{\mathcal{G}_i}-2\|\bar{\bm{\epsilon}}_i\|_\infty}\sum_{(j,i)\in\bar{\mathcal{F}}_i}|\Delta\hat{w}_{ji}|\nonumber\\
&\leq\frac{2\|\bar{\bm{\epsilon}}_i\|_\infty+\hat\lambda_{0i}}{\hat\lambda_{\mathcal{G}_i}-2\|\bar{\bm{\epsilon}}_i\|_\infty}\|\Delta\hat{\mathbf{w}}_{\mathcal{G}_i^c}\|_1.
\end{align}
Then \LemmaRef{lemma:wgbound} can be obtained from the above inequality and the following two inequalities.
\begin{align}
\max_{i\in\mathbb{N}_m}\frac{2\|\bar{\bm{\epsilon}}_i\|_\infty+\hat\lambda_{0i}}{\hat\lambda_{\mathcal{G}_i}-2\|\bar{\bm{\epsilon}}_i\|_\infty}\leq\frac{2\|\bar{\Upsilon}\|_{\infty,\infty}+\hat\lambda_{0}}{\hat\lambda_{\mathcal{G}}-2\|\bar{\Upsilon}\|_{\infty,\infty}}~\mathrm{and}~\sum_{i=1}^mx_iy_i\leq\|\mathbf{x}\|_\infty\|\mathbf{y}\|_1.\nonumber
\end{align}
\end{proof}

\section*{A.3 Proof of \LemmaRef{lemma:deltawbound}}
\begin{proof}
According to the definition of $\mathcal{G}~(\mathcal{G}_{(\ell)})$, we know that $\bar{\mathcal{F}}_i\cap\mathcal{G}_i=\emptyset~(i\in\mathbb{N}_m)$ and $\forall (j,i)\in\mathcal{G}~(\mathcal{G}_{(\ell)}),\hat{\lambda}_{ji}^{(\ell-1)}=\lambda$. Thus, all conditions of \LemmaRef{lemma:wgbound} are satisfied, by noticing
the relationship between \EqRef{eq:lambdacondition} and \EqRef{eq:epsilonerror}. Based on the definition of $\mathcal{G}~(\mathcal{G}_{(\ell)})$, we easily obtain $\forall j\in\mathbb{N}_d$:
\begin{align}\label{eq:samerow}
(j,i)\in\mathcal{G}_i,\forall i\in\mathbb{N}_m~\mathrm{or}~(j,i)\notin\mathcal{G}_i,\forall i\in\mathbb{N}_m.
\end{align}
and hence $k_{\ell}=|\mathcal{G}_1^c|=\cdots=|\mathcal{G}_m^c|$ ($k_{\ell}$ is some integer).
Now, we assume that at stage $\ell\geq 1$:
\begin{align}\label{eq:kleq2r}
k_{\ell}=|\mathcal{G}_1^c|=\cdots=|\mathcal{G}_m^c|\leq2\bar{r}.
\end{align}
We will show in the second part of this proof that \EqRef{eq:kleq2r} holds for all $\ell$. Based on \LemmaRef{lemma:piineq} and \EqRef{eq:eigenvalueineq}, we have:
\begin{align}
\pi_i\left(2\bar{r}+s,s\right)&\leq\frac{s^{1/2}}{2}\sqrt{\rho^+_i(s)/\rho^-_i(2\bar{r}+2s)-1}\nonumber\\
&\leq \frac{s^{1/2}}{2}\sqrt{1+s/(2\bar{r})-1}\nonumber\\
&=0.5s(2\bar{r})^{-1/2},\nonumber
\end{align}
which indicates that
\begin{align}
0.5\leq t_i=1-\pi_i(2\bar{r}+s,s)(2\bar{r})^{1/2}s^{-1}\leq 1.\nonumber
\end{align}
For all $t_i\in[0.5,1]$, under the conditions of \EqRef{eq:lambdacondition} and \EqRef{eq:epsilonerror}, we have
\begin{align}
\frac{2\|\bar{\bm{\epsilon}}_i\|_\infty+\lambda}{\lambda-2\|\bar{\bm{\epsilon}}_i\|_\infty}\leq\frac{2\|\bar{\Upsilon}\|_{\infty,\infty}+\lambda}{\lambda-2\|\bar{\Upsilon}\|_{\infty,\infty}}\leq\frac{7}{5}\leq\frac{4-t_i}{4-3t_i}\leq3.\nonumber
\end{align}
Following \LemmaRef{lemma:wgbound}, we have
\begin{align}
&\|\hat{W}_\mathcal{G}\|_{1,1}\leq 3\|\Delta\hat{W}_{\mathcal{G}^c}\|_{1,1}=3\|\Delta\hat{W}-\Delta\hat{W}_{\mathcal{G}}\|_{1,1}=3\|\Delta\hat{W}-\hat{W}_{\mathcal{G}}\|_{1,1}.\nonumber
\end{align}
Therefore
\begin{align}
&\|\Delta\hat{W}-\Delta\hat{W}_\mathcal{I}\|_{\infty,1}=\|\Delta\hat{W}_\mathcal{G}-\Delta\hat{W}_\mathcal{J}\|_{\infty,1}\nonumber\\
&\leq \|\Delta\hat{W}_\mathcal{J}\|_{1,1}/s=(\|\Delta\hat{W}_\mathcal{G}\|_{1,1}-\|\Delta\hat{W}-\Delta\hat{W}_\mathcal{I}\|_{1,1})/s\nonumber\\
&\leq s^{-1}(3\|\Delta\hat{W}-\hat{W}_{\mathcal{G}}\|_{1,1}-\|\Delta\hat{W}-\Delta\hat{W}_\mathcal{I}\|_{1,1}),\nonumber
\end{align}
which implies that
\begin{align}
&\|\Delta\hat{W}\|_{2,1}-\|\Delta\hat{W}_\mathcal{I}\|_{2,1}\leq\|\Delta\hat{W}-\Delta\hat{W}_\mathcal{I}\|_{2,1}\nonumber\\
&\leq(\|\Delta\hat{W}-\Delta\hat{W}_\mathcal{I}\|_{1,1}\|\Delta\hat{W}-\Delta\hat{W}_\mathcal{I}\|_{\infty,1})^{1/2}\nonumber\\
&\leq\left(\|\Delta\hat{W}-\Delta\hat{W}_\mathcal{I}\|_{1,1}\right)^{1/2}\left(s^{-1}(3\|\Delta\hat{W}-\hat{W}_{\mathcal{G}}\|_{1,1}-\|\Delta\hat{W}-\Delta\hat{W}_\mathcal{I}\|_{1,1})\right)^{1/2}\nonumber\\
&\leq \left(\left(3\|\Delta\hat{W}-\hat{W}_{\mathcal{G}}\|_{1,1}/2\right)^2\right)^{1/2}s^{-1/2}\nonumber\\
&\leq (3/2)s^{-1/2}(2\bar{r})^{1/2}\|\Delta\hat{W}-\hat{W}_{\mathcal{G}}\|_{2,1}\nonumber\\
&\leq (3/2)(2\bar{r}/s)^{1/2}\|\Delta\hat{W}_{\mathcal{I}}\|_{2,1}.\nonumber
\end{align}
In the above derivation, the third inequality is due to $a(3b-a)\leq(3b/2)^2$, and the fourth inequality follows from \EqRef{eq:kleq2r} and $\bar{\mathcal{F}}\cap\mathcal{G}=\emptyset\Rightarrow\Delta\hat{W}_{\mathcal{G}}=\hat{W}_{\mathcal{G}}$.
Rearranging the above inequality, we obtain at stage $\ell$:
\begin{align}
&\|\Delta\hat{W}\|_{2,1}\leq\left(1+1.5\sqrt{\frac{2\bar{r}}{s}}\right)\|\Delta\hat{W}_{\mathcal{I}}\|_{2,1}.\label{eq:deltawboundtemp}
\end{align}
From \LemmaRef{lemma:pirhoineq}, we have:
\begin{align}
&\max(0,\Delta\hat{\mathbf{w}}_{\mathcal{I}_i}^TA_i\Delta\hat{\mathbf{w}}_i)\nonumber\\
&\geq\rho^-_i(k_{\ell}+s)(\|\Delta\hat{\mathbf{w}}_{\mathcal{I}_i}\|-\pi_i(k_{\ell}+s,s)\|\hat{\mathbf{w}}_{\mathcal{G}_i}\|_1/s)\|\Delta\hat{\mathbf{w}}_{\mathcal{I}_i}\|\nonumber\\
&\geq\rho^-_i(k_{\ell}+s)[1-(1-t_i)(4-t_i)/(4-3t_i)]\|\Delta\hat{\mathbf{w}}_{\mathcal{I}_i}\|^2\nonumber\\
&\geq0.5t_i\rho^-_i(k_{\ell}+s)\|\Delta\hat{\mathbf{w}}_{\mathcal{I}_i}\|^2\nonumber\\
&\geq0.25\rho^-_i(2\bar{r}+s)\|\Delta\hat{\mathbf{w}}_{\mathcal{I}_i}\|^2\nonumber\\
&\geq0.25\rho^-_{min}(2\bar{r}+s)\|\Delta\hat{\mathbf{w}}_{\mathcal{I}_i}\|^2,\nonumber
\end{align}
where the second inequality is due to \EqRef{eq:wgboundvec}, that is
\begin{align}
\|\hat{\mathbf{w}}_{\mathcal{G}_i}\|_1&\leq\frac{2\|\bar{\bm{\epsilon}}_i\|_\infty+\hat\lambda_{0i}}{\hat\lambda_{\mathcal{G}_i}-2\|\bar{\bm{\epsilon}}_i\|_\infty}\|\Delta\hat{\mathbf{w}}_{\mathcal{G}_i^c}\|_1\nonumber\\
&\leq\frac{(2\|\bar{\bm{\epsilon}}_i\|_\infty+\hat\lambda_{0i})\sqrt{k_{\ell}}}{\hat\lambda_{\mathcal{G}_i}-2\|\bar{\bm{\epsilon}}_i\|_\infty}\|\Delta\hat{\mathbf{w}}_{\mathcal{G}_i^c}\|\nonumber\\
&\leq \frac{(2\|\bar{\bm{\epsilon}}_i\|_\infty+\hat\lambda_{0i})\sqrt{k_{\ell}}}{\hat\lambda_{\mathcal{G}_i}-2\|\bar{\bm{\epsilon}}_i\|_\infty}\|\Delta\hat{\mathbf{w}}_{\mathcal{I}_i}\|\nonumber\\
&\leq \frac{(4-t_i)\sqrt{k_{\ell}}}{4-3t_i}\|\Delta\hat{\mathbf{w}}_{\mathcal{I}_i}\|;\nonumber
\end{align}
the third inequality follows from $1-(1-t_i)(4-t_i)/(4-3t_i)\geq 0.5t_i$ for $t_i\in [0.5,1]$ and the fourth inequality follows from the assumption in \EqRef{eq:kleq2r} and $t_i\geq0.5$.

If $\Delta\hat{\mathbf{w}}_{\mathcal{I}_i}^TA_i\Delta\hat{\mathbf{w}}_i\leq 0$, then $\|\Delta\hat{\mathbf{w}}_{\mathcal{I}_i}\|=0$. If $\Delta\hat{\mathbf{w}}_{\mathcal{I}_i}^TA_i\Delta\hat{\mathbf{w}}_i>0$, then we have
\begin{align}
\Delta\hat{\mathbf{w}}_{\mathcal{I}_i}^TA_i\Delta\hat{\mathbf{w}}_i\geq0.25\rho^-_{min}(2\bar{r}+s)\|\Delta\hat{\mathbf{w}}_{\mathcal{I}_i}\|^2.\label{eq:wi1}
\end{align}
By letting $\mathbf{v}=\Delta\hat{\mathbf{w}}_{\mathcal{I}_i}$, we obtain the following from \EqRef{eq:innerprodeq}:
\begin{align}
&2\Delta\hat{\mathbf{w}}_{\mathcal{I}_i}^TA_i\Delta\hat{\mathbf{w}}_i=-2\Delta\hat{\mathbf{w}}_{\mathcal{I}_i}^T\bar{\bm{\epsilon}}_i-\sum_{(j,i)\in\mathcal{I}_i}\hat\lambda_{ji}\Delta\hat{w}_{ji}\sign(\hat{w}_{ji})\nonumber\\
&=-2\Delta\hat{\mathbf{w}}_{\mathcal{I}_i}^T\bar{\bm{\epsilon}}_{\mathcal{G}_i^c}-2\Delta\hat{\mathbf{w}}_{\mathcal{I}_i}^T\bar{\bm{\epsilon}}_{\mathcal{G}_i}-\sum_{(j,i)\in\bar{\mathcal{F}}_i}\hat\lambda_{ji}\Delta\hat{w}_{ji}\sign(\hat{w}_{ji})-\sum_{(j,i)\in\mathcal{J}_i}\hat\lambda_{ji}|\Delta\hat{w}_{ji}|-\sum_{(j,i)\in\bar{\mathcal{F}}_i^c\cap\mathcal{G}_i^c}\hat\lambda_{ji}|\Delta\hat{w}_{ji}|\nonumber\\
&=-2\Delta\hat{\mathbf{w}}_{\mathcal{I}_i}^T\bar{\bm{\epsilon}}_{\mathcal{G}_i^c}-2\Delta\hat{\mathbf{w}}_{\mathcal{J}_i}^T\bar{\bm{\epsilon}}_{\mathcal{J}_i}-\sum_{(j,i)\in\bar{\mathcal{F}}_i}\hat\lambda_{ji}\Delta\hat{w}_{ji}\sign(\hat{w}_{ji})-\sum_{(j,i)\in\mathcal{J}_i}\hat\lambda_{ji}|\Delta\hat{w}_{ji}|-\sum_{(j,i)\in\bar{\mathcal{F}}_i^c\cap\mathcal{G}_i^c}\hat\lambda_{ji}|\Delta\hat{w}_{ji}|\nonumber\\
&\leq 2\|\Delta\hat{\mathbf{w}}_{\mathcal{I}_i}\|\|\bar{\bm{\epsilon}}_{\mathcal{G}_i^c}\|+2\|\bar{\bm{\epsilon}}_{\mathcal{J}_i}\|_\infty\sum_{(j,i)\in\mathcal{J}_i}|\Delta\hat{w}_{ji}|+\sum_{(j,i)\in\bar{\mathcal{F}}_i}\hat\lambda_{ji}|\Delta\hat{w}_{ji}|-\sum_{(j,i)\in\mathcal{J}_i}\hat\lambda_{ji}|\Delta\hat{w}_{ji}|\nonumber\\
&\leq 2\|\Delta\hat{\mathbf{w}}_{\mathcal{I}_i}\|\|\bar{\bm{\epsilon}}_{\mathcal{G}_i^c}\|+\left(\sum_{(j,i)\in\bar{\mathcal{F}}_i}\hat\lambda_{ji}^2\right)^{1/2}\|\Delta\hat{\mathbf{w}}_{\bar{\mathcal{F}}_i}\|\nonumber\\
&\leq 2\|\Delta\hat{\mathbf{w}}_{\mathcal{I}_i}\|\|\bar{\bm{\epsilon}}_{\mathcal{G}_i^c}\|+\left(\sum_{(j,i)\in\bar{\mathcal{F}}_i}\hat\lambda_{ji}^2\right)^{1/2}\|\Delta\hat{\mathbf{w}}_{\mathcal{I}_i}\|.\label{eq:wi2}
\end{align}
In the above derivation, the second equality is due to $\mathcal{I}_i=\mathcal{J}_i\cup\bar{\mathcal{F}}_i\cup(\bar{\mathcal{F}}_i^c\cap\mathcal{G}_i^c)$; the third equality is due to $\mathcal{I}_i\cap\mathcal{G}_i=\mathcal{J}_i$; the second inequality follows from $\forall (j,i)\in\mathcal{J}_i,\hat\lambda_{ji}=\lambda\geq2\|\bar{\bm{\epsilon}}_i\|_\infty\geq2\|\bar{\bm{\epsilon}}_{\mathcal{J}_i}\|_\infty$ and the last inequality follows from $\bar{\mathcal{F}}_i\subseteq\mathcal{G}_i^c\subseteq\mathcal{I}_i$.
Combining \EqRef{eq:wi1} and \EqRef{eq:wi2}, we have
\begin{align}
\|\Delta\hat{\mathbf{w}}_{\mathcal{I}_i}\|\leq\frac{2}{\rho^-_{min}(2\bar{r}+s)}\left[2\|\bar{\bm{\epsilon}}_{\mathcal{G}_i^c}\|+\left(\sum_{(j,i)\in\bar{\mathcal{F}}_i}\hat\lambda_{ji}^2\right)^{1/2}\right].\nonumber
\end{align}
Notice that
\begin{align}
&\|\mathbf{x}_i\|\leq a(\|\mathbf{y}_i\|+\|\mathbf{z}_i\|)\Rightarrow\|X\|_{2,1}^2\leq m\|X\|_F^2=m\sum_{i}\|\mathbf{x}_i\|^2\leq 2ma^2(\|Y\|_F^2+\|Z\|_F^2).\nonumber
\end{align}
Thus, we have
\begin{align}\label{eq:wi21error}
\|\Delta\hat{W}_{\mathcal{I}}\|_{2,1}\leq\frac{\sqrt{8m\left(4\|\bar{\Upsilon}_{\mathcal{G}_{(\ell)}^c}\|_F^2+\sum_{(j,i)\in\bar{\mathcal{F}}}(\hat\lambda_{ji}^{(\ell-1)})^2\right)}}{\rho^-_{min}(2\bar{r}+s)}.
\end{align}
Therefore, at stage $\ell$, \EqRef{eq:deltawbound} in \LemmaRef{lemma:deltawbound} directly follows from \EqRef{eq:deltawboundtemp} and \EqRef{eq:wi21error}.
Following \EqRef{eq:deltawbound}, we have:
\begin{align}
&\|\hat{W}^{(\ell)}-\bar{W}\|_{2,1}=\|\Delta\hat{W}^{(\ell)}\|_{2,1}\nonumber\\
&\leq\frac{\left(1+1.5\sqrt{\frac{2\bar{r}}{s}}\right)\sqrt{8m\left(4\|\bar{\Upsilon}_{\mathcal{G}_{(\ell)}^c}\|_F^2+\sum_{(j,i)\in\bar{\mathcal{F}}}(\hat\lambda_{ji}^{(\ell-1)})^2\right)}}{\rho^-_{min}(2\bar{r}+s)}\nonumber\\
&\leq \frac{8.83\sqrt{m}\sqrt{4\|\Upsilon\|^2_{\infty,\infty}|\mathcal{G}_{(\ell)}^c|+\bar{r}m\lambda^2}}{\rho^-_{min}(2\bar{r}+s)}\nonumber\\
&\leq \frac{8.83\sqrt{m}\lambda\sqrt{\frac{8}{144}\bar{r}m+\bar{r}m}}{\rho^-_{min}(2\bar{r}+s)}\nonumber\\
&\leq\frac{9.1m\lambda\sqrt{\bar{r}}}{\rho^-_{min}(2\bar{r}+s)},\nonumber
\end{align}
where the first inequality is due to \EqRef{eq:wi21error}; the second inequality is due
to $s\geq\bar{r}$~(assumption~in~\ThemRef{theorem:mainbound}), $\hat{\lambda}_{ji}\leq\lambda,~\bar{r}m=|\bar{\mathcal{H}}|\geq|\bar{\mathcal{F}}|$
and the third inequality follows from \EqRef{eq:kleq2r} and $\|\bar{\Upsilon}\|^2_{\infty,\infty}\leq(1/144)\lambda^2$.
Therefore, \EqRef{eq:w21error} in \LemmaRef{lemma:deltawbound} holds at stage $\ell$.

Notice that we obtain \LemmaRef{lemma:deltawbound} at stage $\ell$, by assuming that \EqRef{eq:kleq2r} is satisfied.
To prove that \LemmaRef{lemma:deltawbound} holds for all stages, we next need to
prove by induction that \EqRef{eq:kleq2r} holds at all stages.

When $\ell=1$, we have $\mathcal{G}_{(1)}^c=\bar{\mathcal{H}}$, which implies that \EqRef{eq:kleq2r} holds. Now, we assume
that \EqRef{eq:kleq2r} holds at stage $\ell$. Thus, by hypothesis induction, we have:
\begin{align}
\sqrt{|\mathcal{G}_{(\ell+1)}^c\setminus\bar{\mathcal{H}}|}&\leq\sqrt{m\theta^{-2}\|\hat{W}^{(\ell)}_{\mathcal{G}_{(\ell+1)}^c\setminus\bar{\mathcal{H}}}-\bar{W}_{\mathcal{G}_{(\ell+1)}^c\setminus\bar{\mathcal{H}}}\|_{2,1}^2}\nonumber\\
&\leq\sqrt{m}\theta^{-1}\left\|\hat{W}^{(\ell)}-\bar{W}\right\|_{2,1}\nonumber\\
&\leq\frac{9.1m^{3/2}\lambda\sqrt{\bar{r}}\theta^{-1}}{\rho^-_{min}(2\bar{r}+s)}\nonumber\\
&\leq\sqrt{\bar{r}m},\nonumber
\end{align}
where $\theta$ is the thresholding parameter in \EqRef{eq:msmtfl}; the first inequality above follows from the definition of $\mathcal{G}_{(\ell)}$ in \LemmaRef{lemma:deltawbound}:
\begin{align}
\forall &(j,i)\in\mathcal{G}_{(\ell+1)}^c\setminus\bar{\mathcal{H}},\|(\hat{\mathbf{w}}^{(\ell)})^j\|_1^2/\theta^2=\|(\hat{\mathbf{w}}^{(\ell)})^j-\bar{\mathbf{w}}^j\|_1^2/\theta^2\geq1\nonumber\\
\Rightarrow&|\mathcal{G}_{(\ell+1)}^c\setminus\bar{\mathcal{H}}|\leq m\theta^{-2}\|\hat{W}^{(\ell)}_{\mathcal{G}_{(\ell+1)}^c\setminus\bar{\mathcal{H}}}-\bar{W}_{\mathcal{G}_{(\ell+1)}^c\setminus\bar{\mathcal{H}}}\|_{2,1}^2;\nonumber
\end{align}
the last inequality is due to \EqRef{eq:thetacondition}.
Thus, we have:
\begin{align}
|\mathcal{G}_{(\ell+1)}^c\setminus\bar{\mathcal{H}}|\leq\bar{r}m\Rightarrow|\mathcal{G}_{(\ell+1)}^c|\leq2\bar{r}m\Rightarrow k_{\ell+1}\leq2\bar{r}.\nonumber
\end{align}
Therefore, \EqRef{eq:kleq2r} holds at all stages. Thus the two inequalities in \LemmaRef{lemma:deltawbound} hold at all stages. This completes the proof of the lemma.
\end{proof}

\section*{A.4 Proof of \LemmaRef{lemma:lambdadecomp}}
\begin{proof}
The first inequality directly follows from $\bar{\mathcal{H}}\supseteq\bar{\mathcal{F}}$. Next, we focus on the second inequality. For each $(j,i)\in\bar{\mathcal{F}}~(\bar{\mathcal{H}})$, if $\|\hat{\mathbf{w}}^j\|_1<\theta$, by considering \EqRef{eq:noiselevel}, we have
\begin{align}
\|\bar{\mathbf{w}}^j-\hat{\mathbf{w}}^j\|_1\geq\|\bar{\mathbf{w}}^j\|_1-\|\hat{\mathbf{w}}^j\|_1\geq2\theta-\theta=\theta.\nonumber
\end{align}
Therefore, we have for each $(j,i)\in\bar{\mathcal{F}}~(\bar{\mathcal{H}})$:
\begin{align}
I\left(\|\hat{\mathbf{w}}^j\|_1<\theta\right)\leq \|\bar{\mathbf{w}}^j-\hat{\mathbf{w}}^j\|_1/\theta.\nonumber
\end{align}
Thus, the second inequality of \LemmaRef{lemma:lambdadecomp} directly follows from the above inequality.
\end{proof}

\section*{B. Lemmas from \citet{zhang2010analysis}}
\begin{lemma}\label{lemma:subgaussianproperty}
Let $\mathbf{a}\in\mathbb{R}^n$ be a fixed vector and $\mathbf{x}\in\mathbb{R}^n$ be a random vector which
is composed of independent sub-Gaussian components with parameter $\sigma$. Then we have:
\begin{align}
\Pr(|\mathbf{a}^T\mathbf{x}|\geq t)\leq 2\exp\left(-t^2/(2\sigma^2\|\mathbf{a}\|^2)\right),\forall t>0.\nonumber
\end{align}
\end{lemma}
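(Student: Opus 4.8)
The plan is to use the standard Chernoff (exponential moment) method. First I would define the scalar random variable $S=\mathbf{a}^T\mathbf{x}=\sum_{i=1}^n a_i x_i$ and exploit independence of the components to factor its moment generating function. For any $s\in\mathbb{R}$, independence gives $\mathbb{E}\exp(sS)=\prod_{i=1}^n\mathbb{E}\exp(s a_i x_i)$, and applying the sub-Gaussian bound $\mathbb{E}\exp(\tau x_i)\le\exp(\sigma^2\tau^2/2)$ with $\tau=s a_i$ to each factor yields $\mathbb{E}\exp(sS)\le\exp\left(\frac{\sigma^2 s^2}{2}\sum_{i=1}^n a_i^2\right)=\exp\left(\frac{\sigma^2 s^2\|\mathbf{a}\|^2}{2}\right)$. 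In other words, $S$ is itself sub-Gaussian with parameter $\sigma\|\mathbf{a}\|$.

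Next I would apply Markov's inequality to the exponentiated tail. For $s>0$, $\Pr(S\ge t)=\Pr(e^{sS}\ge e^{st})\le e^{-st}\,\mathbb{E}\exp(sS)\le\exp\left(-st+\frac{\sigma^2 s^2\|\mathbf{a}\|^2}{2}\right)$. Minimizing the exponent over $s>0$ (a one-variable quadratic, optimized at $s^{*}=t/(\sigma^2\|\mathbf{a}\|^2)$) gives the one-sided bound $\Pr(S\ge t)\le\exp\left(-t^2/(2\sigma^2\|\mathbf{a}\|^2)\right)$.

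Finally, to obtain the two-sided statement I would repeat the argument for $-S$ (equivalently, run the same factorization with $s<0$, which is legitimate because the sub-Gaussian moment bound holds for all real arguments), obtaining the identical bound on $\Pr(S\le -t)$. A union bound over the two tails then supplies the factor $2$ and completes the proof: $\Pr(|S|\ge t)\le 2\exp\left(-t^2/(2\sigma^2\|\mathbf{a}\|^2)\right)$.

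This is a classical concentration inequality and presents no genuine obstacle; the only points that require a little care are that independence is essential for the clean product factorization of the moment generating function, and that the sub-Gaussian hypothesis must be invoked for negative as well as positive exponential arguments so that both tails are controlled symmetrically.
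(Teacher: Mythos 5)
Your proof is correct. Note that the paper itself does not prove this lemma at all: it appears in Appendix B as one of several results quoted without proof from \citet{zhang2010analysis}, so there is no internal argument to compare against. Your Chernoff-type argument --- factoring the moment generating function via independence, applying the sub-Gaussian bound $\mathbb{E}\exp(\tau x_i)\leq\exp(\sigma^2\tau^2/2)$ to each factor, optimizing the Markov bound at $s^{*}=t/(\sigma^2\|\mathbf{a}\|^2)$, and taking a union bound over the two tails --- is the standard proof of exactly this statement, and you correctly flag the two points that matter: independence for the product factorization, and validity of the sub-Gaussian hypothesis for negative exponential arguments (which the paper's Assumption~\ref{assumption:subgaussian} does guarantee, since it holds for all $t\in\mathbb{R}$).
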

\begin{lemma}\label{lemma:piineq}
The following inequality holds:
\begin{align}
\pi_i(k_i,s_i)\leq \frac{s_i^{1/2}}{2}\sqrt{\rho^+_i(s_i)/\rho^-_i(k_i+s_i)-1}.\nonumber
\end{align}
\end{lemma}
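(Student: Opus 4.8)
The plan is to fix one arbitrary admissible configuration $(\mathbf{v},\mathbf{u},\mathcal{I}_i,\mathcal{J}_i)$ and bound the ratio by a quantity independent of the configuration, so that the supremum inherits the same bound. Write $B=A^{(i)}_{\mathcal{I}_i,\mathcal{I}_i}$, $C=A^{(i)}_{\mathcal{I}_i,\mathcal{J}_i}$ and $D=A^{(i)}_{\mathcal{J}_i,\mathcal{J}_i}$. Since $\mathcal{I}_i,\mathcal{J}_i$ are disjoint with $k_i+s_i$ indices in total, the principal submatrix $M=\left(\begin{smallmatrix}B&C\\ C^T&D\end{smallmatrix}\right)$ of $A_i=n^{-1}X_i^TX_i$ is the Gram submatrix on those columns, so every Rayleigh quotient of $M$ lies in $[\rho^-_i(k_i+s_i),\rho^+_i(k_i+s_i)]$ by the definition of $\rho_i^\pm$. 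Two preliminary reductions simplify the target. First, since $\mathbf{u}\in\mathbb{R}^{s_i}$ obeys $\|\mathbf{u}\|\le s_i^{1/2}\|\mathbf{u}\|_\infty$, replacing $\|\mathbf{u}\|_\infty$ in the denominator by $\|\mathbf{u}\|/s_i^{1/2}$ only enlarges the ratio and extracts the factor $s_i^{1/2}$. Second, $\sup_{\mathbf{u}}\mathbf{v}^TC\mathbf{u}/\|\mathbf{u}\|=\|C^T\mathbf{v}\|$, so it remains to show $\|C^T\mathbf{v}\|\,\|\mathbf{v}\|/(\mathbf{v}^TB\mathbf{v})\le\tfrac12\sqrt{\rho^+_i(s_i)/\rho^-_i(k_i+s_i)-1}$.

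Next I would extract the core estimate on $\|C^T\mathbf{v}\|$ from the positive semidefiniteness of $M$. For any $t>0$ and any unit vector $\mathbf{w}\in\mathbb{R}^{s_i}$, the $d$-vector equal to $\mathbf{v}$ on $\mathcal{I}_i$ and $-t\mathbf{w}$ on $\mathcal{J}_i$ has support of size at most $k_i+s_i$; evaluating the quadratic form $n^{-1}\|X_i(\cdot)\|^2$ on it and invoking the lower sparse eigenvalue gives $\mathbf{v}^TB\mathbf{v}-2t\,\mathbf{v}^TC\mathbf{w}+t^2\mathbf{w}^TD\mathbf{w}\ge\rho^-_i(k_i+s_i)(\|\mathbf{v}\|^2+t^2)$. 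Bounding $\mathbf{w}^TD\mathbf{w}\le\rho^+_i(s_i)$ (here $\mathbf{w}$ is only $s_i$-sparse, which is why $\rho^+_i(s_i)$ rather than $\rho^+_i(k_i+s_i)$ enters) and choosing $\mathbf{w}=C^T\mathbf{v}/\|C^T\mathbf{v}\|$ turns this into $2t\|C^T\mathbf{v}\|\le P+t^2Q$ for all $t>0$, where $P=\mathbf{v}^TB\mathbf{v}-\rho^-_i(k_i+s_i)\|\mathbf{v}\|^2\ge0$ and $Q=\rho^+_i(s_i)-\rho^-_i(k_i+s_i)$. Minimizing the right-hand side over $t$ at $t=\sqrt{P/Q}$ (AM--GM) yields the clean bound $\|C^T\mathbf{v}\|\le\sqrt{PQ}$.

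Finally I would substitute this back and cancel $\mathbf{v}^TB\mathbf{v}$. Writing $\beta=\mathbf{v}^TB\mathbf{v}$ and $\gamma=\rho^-_i(k_i+s_i)\|\mathbf{v}\|^2$, I have $\|\mathbf{v}\|^2=\gamma/\rho^-_i(k_i+s_i)$ and $P=\beta-\gamma$, so $\|C^T\mathbf{v}\|\,\|\mathbf{v}\|/\beta\le\sqrt{Q/\rho^-_i(k_i+s_i)}\cdot\sqrt{\gamma(\beta-\gamma)}/\beta$. The elementary inequality $\gamma(\beta-\gamma)\le(\beta/2)^2$ then collapses the last factor to $1/2$, giving exactly $\tfrac12\sqrt{Q/\rho^-_i(k_i+s_i)}=\tfrac12\sqrt{\rho^+_i(s_i)/\rho^-_i(k_i+s_i)-1}$; multiplying by the earlier $s_i^{1/2}$ completes the proof. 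Note that $\beta\ge\gamma$ is exactly what makes $P\ge0$ and the square roots well defined, since $B$ is a $k_i\times k_i$ principal submatrix and hence $\mathbf{v}^TB\mathbf{v}\ge\rho^-_i(k_i)\|\mathbf{v}\|^2\ge\gamma$.

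I expect the main obstacle to be this last cancellation: the denominator carries the $B$-weighted quantity $\mathbf{v}^TB\mathbf{v}$ while the numerator carries the unweighted $\|\mathbf{v}\|$, and it is not obvious a priori that this mismatch disappears without leaving a residual dependence on $B$. The resolution is the observation that the slack $P=\beta-\gamma$ produced by the lower-eigenvalue step pairs with the normalization $\gamma$ precisely through $\gamma(\beta-\gamma)\le\beta^2/4$, so $\beta$ divides out uniformly. A secondary point to watch is bookkeeping the correct sparse eigenvalue on each piece — the lower bound $\rho^-_i(k_i+s_i)$ on the full $(k_i+s_i)$-sparse combined vector, but the upper bound $\rho^+_i(s_i)$ on the $s_i$-sparse block $\mathbf{w}$ alone — since swapping these would weaken (or invalidate) the stated bound.
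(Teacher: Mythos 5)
Your proof is correct. Note, however, that the paper does not actually prove this lemma itself: it is listed in Appendix B as one of the lemmas imported from \citet{zhang2010analysis}, so there is no in-paper proof to compare against, and your argument fills a gap the paper delegates to that reference. Your route is the natural one and, in spirit, the same as Zhang's original argument: test the sparse-eigenvalue lower bound on the combined $(k_i+s_i)$-sparse vector built from $\mathbf{v}$ on $\mathcal{I}_i$ and a scaled unit vector on $\mathcal{J}_i$, bound the $\mathcal{J}_i$-block by $\rho^+_i(s_i)$, optimize over the scaling $t$ to get $\|C^T\mathbf{v}\|\leq\sqrt{PQ}$, and finish with $\gamma(\beta-\gamma)\leq(\beta/2)^2$ to cancel the $B$-weighted denominator. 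All the delicate points are handled correctly: $P\geq 0$ follows from monotonicity of $\rho^-_i(\cdot)$, $Q\geq 0$ from $\rho^+_i(s_i)\geq\rho^-_i(k_i+s_i)$, the factor $s_i^{1/2}$ comes from $\|\mathbf{u}\|\leq s_i^{1/2}\|\mathbf{u}\|_\infty$ (valid for bounding the supremum since it is nonnegative by the sign symmetry $\mathbf{u}\mapsto-\mathbf{u}$), and the eigenvalue bookkeeping ($\rho^+$ on the $s_i$-sparse block, $\rho^-$ on the combined support) matches the statement exactly. The only cosmetic caveats are the degenerate cases $C^T\mathbf{v}=\mathbf{0}$ (where the bound is trivial) and $P=0$ or $Q=0$ (where letting $t\to 0$ or $t\to\infty$ still yields $\|C^T\mathbf{v}\|\leq\sqrt{PQ}=0$), which are worth a sentence but do not affect correctness.
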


\begin{lemma}\label{lemma:pirhoineq}
Let $\mathcal{G}_i\subseteq\mathbb{N}_{d}\times\{i\}$ such that $|\mathcal{G}_i^c|=k_i$, and let
$\mathcal{J}_i$ be indices of the $s_i$ largest components (in absolute values) of $\mathbf{w}_{\mathcal{G}_i}$ and
$\mathcal{I}_i=\mathcal{G}_i^c\cup\mathcal{J}_i$. Then for any $\mathbf{w}_i\in\mathbb{R}^{d}$, we have
\begin{align}
&\max(0,\mathbf{w}_{\mathcal{I}_i}^TA_i\mathbf{w}_i)\geq\rho^-_i(k_i+s_i)(\|\mathbf{w}_{\mathcal{I}_i}\|-\pi_i(k_i+s_i,s_i)\|\mathbf{w}_{\mathcal{G}_i}\|_1/s_i)\|\mathbf{w}_{\mathcal{I}_i}\|.\nonumber
\end{align}
\end{lemma}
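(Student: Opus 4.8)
The plan is to split the bilinear form according to the support structure. Writing $\mathbf{w}_i=\mathbf{w}_{\mathcal{I}_i}+\mathbf{w}_{\mathcal{I}_i^c}$, where by construction $\mathcal{I}_i^c=\mathcal{G}_i\setminus\mathcal{J}_i$ collects exactly the coordinates of $\mathbf{w}_{\mathcal{G}_i}$ lying outside its $s_i$ largest entries, I would decompose
\begin{align}
\mathbf{w}_{\mathcal{I}_i}^TA_i\mathbf{w}_i=\mathbf{w}_{\mathcal{I}_i}^TA_i\mathbf{w}_{\mathcal{I}_i}+\mathbf{w}_{\mathcal{I}_i}^TA_i\mathbf{w}_{\mathcal{I}_i^c}.\nonumber
\end{align}
The first (diagonal) term equals $\|X_i\mathbf{w}_{\mathcal{I}_i}\|^2/n$ with $\mathbf{w}_{\mathcal{I}_i}$ supported on the set $\mathcal{I}_i$ of size $k_i+s_i$; since $\|\mathbf{w}_{\mathcal{I}_i}\|_0\le k_i+s_i$, the definition of $\rho^-_i$ in Definition~\ref{def:sparseeigenvalue} gives directly $\mathbf{w}_{\mathcal{I}_i}^TA_i\mathbf{w}_{\mathcal{I}_i}\ge\rho^-_i(k_i+s_i)\|\mathbf{w}_{\mathcal{I}_i}\|^2$. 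It therefore remains to bound the cross term $|\mathbf{w}_{\mathcal{I}_i}^TA_i\mathbf{w}_{\mathcal{I}_i^c}|$ from above.

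For the cross term, the idea is to exploit the definition of $\pi_i$ after a magnitude-based blocking of $\mathcal{I}_i^c$. Ordering the entries of $\mathbf{w}_{\mathcal{G}_i}$ by decreasing absolute value, $\mathcal{J}_i$ is the leading block of size $s_i$, and I would partition the remainder $\mathcal{I}_i^c=\mathcal{G}_i\setminus\mathcal{J}_i$ into consecutive blocks $\mathcal{J}_{i,1},\mathcal{J}_{i,2},\ldots$ each of size $s_i$ (padding the final block with zero-weight indices so that every block has exactly $s_i$ columns, which changes no norm). Each $\mathcal{J}_{i,t}$ is disjoint from $\mathcal{I}_i$ and has $s_i$ elements while $|\mathcal{I}_i|=k_i+s_i$, so the defining inequality of $\pi_i(k_i+s_i,s_i)$ applies with $\mathbf{v}=\mathbf{w}_{\mathcal{I}_i}$ and $\mathbf{u}=\mathbf{w}_{\mathcal{J}_{i,t}}$ (replacing $\mathbf{u}$ by $-\mathbf{u}$ to cover the sign), yielding
\begin{align}
\left|\mathbf{w}_{\mathcal{I}_i}^TA_i\mathbf{w}_{\mathcal{J}_{i,t}}\right|\le\pi_i(k_i+s_i,s_i)\,\big(\mathbf{w}_{\mathcal{I}_i}^TA_i\mathbf{w}_{\mathcal{I}_i}\big)\frac{\|\mathbf{w}_{\mathcal{J}_{i,t}}\|_\infty}{\|\mathbf{w}_{\mathcal{I}_i}\|}.\nonumber
\end{align}
Summing over $t$ and using the triangle inequality reduces the task to estimating $\sum_t\|\mathbf{w}_{\mathcal{J}_{i,t}}\|_\infty$.

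The crux is then the combinatorial estimate $\sum_t\|\mathbf{w}_{\mathcal{J}_{i,t}}\|_\infty\le\|\mathbf{w}_{\mathcal{G}_i}\|_1/s_i$, which follows from the decreasing ordering: every entry of block $\mathcal{J}_{i,t}$ is dominated in magnitude by every entry of the preceding block, so $\|\mathbf{w}_{\mathcal{J}_{i,t}}\|_\infty\le\|\mathbf{w}_{\mathcal{J}_{i,t-1}}\|_1/s_i$ for $t\ge2$ and $\|\mathbf{w}_{\mathcal{J}_{i,1}}\|_\infty\le\|\mathbf{w}_{\mathcal{J}_i}\|_1/s_i$; telescoping the $\ell_1$-masses and using $\|\mathbf{w}_{\mathcal{J}_i}\|_1+\sum_t\|\mathbf{w}_{\mathcal{J}_{i,t}}\|_1=\|\mathbf{w}_{\mathcal{G}_i}\|_1$ gives the bound. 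Inserting this produces
\begin{align}
\big|\mathbf{w}_{\mathcal{I}_i}^TA_i\mathbf{w}_{\mathcal{I}_i^c}\big|\le\pi_i(k_i+s_i,s_i)\,\big(\mathbf{w}_{\mathcal{I}_i}^TA_i\mathbf{w}_{\mathcal{I}_i}\big)\frac{\|\mathbf{w}_{\mathcal{G}_i}\|_1}{s_i\|\mathbf{w}_{\mathcal{I}_i}\|}.\nonumber
\end{align}
Combining with the diagonal lower bound gives $\mathbf{w}_{\mathcal{I}_i}^TA_i\mathbf{w}_i\ge\big(\mathbf{w}_{\mathcal{I}_i}^TA_i\mathbf{w}_{\mathcal{I}_i}\big)\big(1-\pi_i(k_i+s_i,s_i)\|\mathbf{w}_{\mathcal{G}_i}\|_1/(s_i\|\mathbf{w}_{\mathcal{I}_i}\|)\big)$; substituting $\mathbf{w}_{\mathcal{I}_i}^TA_i\mathbf{w}_{\mathcal{I}_i}\ge\rho^-_i(k_i+s_i)\|\mathbf{w}_{\mathcal{I}_i}\|^2$ yields the claimed right-hand side \emph{whenever the bracketed factor is nonnegative}. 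When that factor is negative the right-hand side is itself negative while the $\max(0,\cdot)$ on the left is nonnegative, so the inequality holds trivially; this case split is exactly why the statement is phrased with $\max(0,\cdot)$. The main obstacle is the blocking estimate of the previous step — converting a sum of per-block $\ell_\infty$ masses into a single $\ell_1$ mass — which is the standard shelling argument from sparse-recovery analysis and carries essentially all the work; the remaining steps are the support decomposition, one application of the sparse-eigenvalue definition, and the $\pi_i$ inequality applied blockwise.
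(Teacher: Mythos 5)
This lemma is one the paper imports verbatim from \citet{zhang2010analysis} in Appendix~B \emph{without} proof, so there is no in-paper derivation to compare against; your argument is correct and is essentially the standard one behind that result. The support decomposition $\mathbf{w}_i=\mathbf{w}_{\mathcal{I}_i}+\mathbf{w}_{\mathcal{I}_i^c}$, the diagonal bound $\mathbf{w}_{\mathcal{I}_i}^TA_i\mathbf{w}_{\mathcal{I}_i}\geq\rho^-_i(k_i+s_i)\|\mathbf{w}_{\mathcal{I}_i}\|^2$, the blockwise application of $\pi_i(k_i+s_i,s_i)$ together with the shelling estimate $\sum_t\|\mathbf{w}_{\mathcal{J}_{i,t}}\|_\infty\leq\|\mathbf{w}_{\mathcal{G}_i}\|_1/s_i$, and the $\max(0,\cdot)$ case split when the bracketed factor goes negative are all sound; the one step deserving explicit care, padding the final (partial) block, is legitimate precisely because $\pi_i(k_i+s_i,s_i)$ is only defined when $k_i+2s_i\leq d$, so enough indices disjoint from $\mathcal{I}_i$ are available to carry zero entries of $\mathbf{u}$ without changing either the bilinear form or $\|\mathbf{u}\|_\infty$.
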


\begin{lemma}\label{lemma:epsilonHerror}
Let $\bar{\bm{\epsilon}}_i=[\bar{\epsilon}_{1i},\cdots,\bar{\epsilon}_{di}]=\frac{1}{n}X_i^T(X_i\bar{\mathbf{w}}_i-\mathbf{y}_i)~(i\in\mathbb{N}_m)$,
and $\bar{\mathcal{H}}_i\subseteq\mathbb{N}_d\times\{i\}$. Under the conditions of \AssumeRef{assumption:subgaussian}, the followings
hold with probability larger than $1-\eta$:
\begin{align}
&\|\bar{\bm{\epsilon}}_{\bar{\mathcal{H}}_i}\|^2\leq\sigma^2\rho^+_i(|\bar{\mathcal{H}}_i|)(7.4|\bar{\mathcal{H}}_i|+2.7\ln(2/\eta))/n.\nonumber
\end{align}
\end{lemma}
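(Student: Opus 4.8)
The plan is to reduce the statement to a concentration bound for the squared Euclidean norm of a sub-Gaussian linear image, and then to control that norm by a covering argument over the unit sphere. Write $h=|\bar{\mathcal{H}}_i|$ and let $B=(X_i)_{\bar{\mathcal{H}}_i}\in\mathbb{R}^{n\times h}$ be the submatrix of columns of $X_i$ indexed by $\bar{\mathcal{H}}_i$. First I would invoke \AssumeRef{assumption:subgaussian}: since $\mathbf{y}_i=X_i\bar{\mathbf{w}}_i+\bm{\delta}_i$, the residual equals $X_i\bar{\mathbf{w}}_i-\mathbf{y}_i=-\bm{\delta}_i$, so
\begin{align}
\|\bar{\bm{\epsilon}}_{\bar{\mathcal{H}}_i}\|^2=\frac{1}{n^2}\|B^T\bm{\delta}_i\|^2.\nonumber
\end{align}
It therefore suffices to bound $\|B^T\bm{\delta}_i\|^2$ with probability at least $1-\eta$.

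Next I would use the variational identity $\|B^T\bm{\delta}_i\|=\sup_{\mathbf{u}\in S^{h-1}}\langle B\mathbf{u},\bm{\delta}_i\rangle$, where $S^{h-1}$ is the unit sphere in $\mathbb{R}^{h}$. For a \emph{fixed} direction $\mathbf{u}$, the quantity $\langle B\mathbf{u},\bm{\delta}_i\rangle=\sum_j(B\mathbf{u})_j\delta_{ji}$ is a linear combination of the independent sub-Gaussian entries of $\bm{\delta}_i$, so \LemmaRef{lemma:subgaussianproperty} supplies the tail bound with variance proxy $\sigma^2\|B\mathbf{u}\|^2$. The crucial structural input is that $B\mathbf{u}=X_i\tilde{\mathbf{u}}$ for a vector $\tilde{\mathbf{u}}$ supported on $\bar{\mathcal{H}}_i$ with $\|\tilde{\mathbf{u}}\|_0\leq h$; hence by the definition of $\rho^+_i$ in Definition~\ref{def:sparseeigenvalue}, $\|B\mathbf{u}\|^2\leq n\rho^+_i(h)\|\mathbf{u}\|^2=n\rho^+_i(h)$, so the variance proxy is uniformly at most $\sigma^2 n\rho^+_i(h)$ over the sphere.

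I would then discretize: fix an $\epsilon$-net $\mathcal{N}_\epsilon$ of $S^{h-1}$ with $|\mathcal{N}_\epsilon|\leq(1+2/\epsilon)^{h}$, use the standard comparison $\sup_{S^{h-1}}\langle B\mathbf{u},\bm{\delta}_i\rangle\leq(1-\epsilon)^{-1}\max_{\mathbf{u}\in\mathcal{N}_\epsilon}\langle B\mathbf{u},\bm{\delta}_i\rangle$, and apply a union bound of the fixed-direction tail over the net. Equating the failure probability to $\eta$ yields, at deviation level $t$ with $t^2=2\sigma^2 n\rho^+_i(h)\big(h\ln(1+2/\epsilon)+\ln(2/\eta)\big)$, the estimate $\|B^T\bm{\delta}_i\|^2\leq(1-\epsilon)^{-2}t^2$. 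Dividing by $n^2$ gives
\begin{align}
\|\bar{\bm{\epsilon}}_{\bar{\mathcal{H}}_i}\|^2\leq\frac{2}{(1-\epsilon)^2}\,\frac{\sigma^2\rho^+_i(h)}{n}\Big(h\ln(1+2/\epsilon)+\ln(2/\eta)\Big).\nonumber
\end{align}

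The final step, and the part demanding the most care, is to select the net granularity $\epsilon$ so that both prefactors match the stated constants. Choosing $\epsilon$ with $2/(1-\epsilon)^2=2.7$ (that is, $\epsilon\approx0.14$) makes the coefficient of $\ln(2/\eta)$ equal to $2.7$ and turns the coefficient of $h$ into $2.7\ln(1+2/\epsilon)\leq7.4$, reproducing exactly \EqRef{eq:epsilonHerror}. I expect the main obstacle to be precisely this constant bookkeeping: the covering-number exponent, the net-to-sphere factor $(1-\epsilon)^{-1}$, and the union bound all feed into the \emph{same} pair of constants, so $\epsilon$ must be optimized against both the $h$-term and the $\ln(2/\eta)$-term simultaneously rather than tuned for each separately. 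A secondary point worth flagging is that for general (non-Gaussian) sub-Gaussian noise the projections of $\bm{\delta}_i$ onto the eigenvectors of $BB^T$ need not be independent, which is exactly why I route the argument through a uniform covering of the sphere instead of a direct chi-squared / weighted sub-exponential sum.
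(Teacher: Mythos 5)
The paper never actually proves this lemma: it appears in Appendix~B under ``Lemmas from \citet{zhang2010analysis}'' and is imported purely by citation, so there is no internal proof to compare against---your proposal fills a gap the paper delegates to an external source. Your argument is correct and self-contained given what the paper does supply. The reduction $\|\bar{\bm{\epsilon}}_{\bar{\mathcal{H}}_i}\|^2=n^{-2}\|B^T\bm{\delta}_i\|^2$ is right (the sign of $-\bm{\delta}_i$ is immaterial); the uniform variance-proxy bound $\|B\mathbf{u}\|^2\leq n\rho^+_i(h)$ follows exactly from Definition~\ref{def:sparseeigenvalue}, since $B\mathbf{u}=X_i\tilde{\mathbf{u}}$ with $\|\tilde{\mathbf{u}}\|_0\leq h$ and $\|\tilde{\mathbf{u}}\|=\|\mathbf{u}\|$; the fixed-direction tail is precisely \LemmaRef{lemma:subgaussianproperty}, the only probabilistic tool the paper provides; and the net-comparison factor $(1-\epsilon)^{-1}$ is the correct one for the supremum of a linear form in $\mathbf{u}$ (a factor $(1-2\epsilon)^{-1}$ would only be needed for quadratic forms). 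The constant bookkeeping also checks out: $2/(1-\epsilon)^2=2.7$ gives $\epsilon=1-\sqrt{20/27}\approx0.139$, whence $2.7\ln(1+2/\epsilon)\approx2.7\ln(15.35)\approx7.38\leq7.4$, so both stated constants hold with a little slack. The fact that an optimized covering argument reproduces exactly the pair $(7.4,\,2.7)$ strongly suggests your route coincides with the derivation in the cited source. Two minor remarks: an alternative route via the moment generating function of $\|P\bm{\delta}_i\|^2$ (Gaussian symmetrization plus a $\chi^2$ computation, where $P$ projects onto the column span of $B$) would yield a noticeably smaller coefficient on $|\bar{\mathcal{H}}_i|$, so your proof is, if anything, the more faithful reconstruction of the stated constants rather than the sharpest possible bound; and your closing observation is apt---for general sub-Gaussian noise the coordinates of $\bm{\delta}_i$ in an eigenbasis of $BB^T$ need not be independent, which is exactly why the uniform-over-the-sphere argument, rather than a direct chi-squared sum, is the right tool here.
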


\vskip 0.2in
\bibliography{jmlrgph}

\end{document}